\newtheorem{thm}{Theorem}
\newtheorem{lem}[thm]{Lemma}
\newtheorem{prop}[thm]{Proposition}
\newtheorem{cor}[thm]{Corollary}
\newcommand{\abs}[1]{\left| #1\right|}
\newcommand{\ind}{\mathbbm{1}}
\newcommand{\pr}{\mathbb{P}}
\newcommand{\R}{\mathbb{R}}
\newcommand{\E}{\mathbb{E}}
\newcommand{\N}{\mathbb{N}}
\newcommand{\floor}[1]{\left\lfloor #1 \right\rfloor}
\newcommand{\ceil}[1]{\left\lceil #1 \right\rceil}
\newcommand{\argmin}[1]{\underset{#1}{\operatorname{arg}\operatorname{min}}\;}
\newcommand{\sgn}{\mathrm{sgn}}
\newcommand{\eqdist}{\stackrel{d}{=}}
\newcommand{\mb}{\mathbf}
\newcommand{\mbb}{\boldsymbol}
\newcommand{\matr}[1]{\mathbf{#1}}
\title{The \emph{xyz} algorithm for fast interaction search in high-dimensional data}
\author[1]{Gian-Andrea Thanei}
\author[1]{Nicolai Meinshausen}
\author[2]{Rajen D.\ Shah\thanks{Supported by the Isaac Newton Trust Early Career Support Scheme, the Alan Turing Institute under the EPSRC grant EP/N510129/1 and EPSRC Programme Grant EP/N031938/1}}
\affil[1]{ETH Z\"urich}
\affil[2]{University of Cambridge}
\begin{document}
\maketitle
\begin{abstract}
When performing regression on a data set with $p$ variables, it is often of interest to go beyond using main linear effects and include interactions as products between individual variables. For small-scale problems, these interactions can be computed explicitly but this leads to a computational complexity of at least $\mathcal{O}(p^2)$ if done naively. This cost can be prohibitive if $p$ is very large.

We introduce a new randomised algorithm that is able to discover interactions with high probability and under mild conditions has a runtime that is subquadratic in $p$. We show that strong interactions can be discovered in almost linear time, whilst finding weaker interactions requires $\mathcal{O}(p^\alpha)$ operations for $1<\alpha<2$ depending on their strength. The underlying idea is to transform interaction search into a closest pair problem which can be solved efficiently in subquadratic time. The algorithm is called \emph{xyz} and is implemented in the language \texttt{R}. We demonstrate its efficiency for application to genome-wide association studies, where more than $10^{11}$ interactions can be screened in under $280$ seconds with a single-core $1.2$ GHz CPU.\end{abstract}

\section{Introduction}

Given a response vector $\mb Y \in \mathbb{R}^n$ and matrix of associated predictors $\matr{X}=(\matr{X}_1,\ldots,\matr{X}_p) \in \mathbb{R}^{n \times p}$, finding interactions is often of great interest as they may reveal important relationships and improve predictive power. When the number of variables $p$ is large, fitting a model involving interactions can involve serious computational challenges. The simplest form of interaction search consists of screening for pairs $(j,k)$ with high inner product between the outcome of interest $\mb Y$ and the point-wise product $\matr{X}_j \circ \matr{X}_k$:
\begin{equation} \label{eq:inter_search1}
\text{Keep all pairs } (j,k) \text{ for which } \mb Y^T(\matr{X}_j \circ \matr{X}_k)/n > \kappa.
\end{equation}
This search  is of complexity  $\mathcal{O}(np^2)$ in a naive implementation and quickly becomes infeasible for large $p$.
Of course one would typically be interested in maximising (absolute values of) correlations rather than dot products in \eqref{eq:inter_search1}, an optimisation problem that would be at least as computationally intensive.

Even more challenging is the task of fitting a linear regression model involving pairwise interactions:
\begin{equation} \label{eq:inter_mod}
Y_i = \mu + \sum_{j=1}^p X_{ij} \beta_j + \sum_{k=1}^p \sum_{j=1}^{k-1} X_{ij} X_{ik} \matr{\theta}_{jk}  +\varepsilon_i.
\end{equation}
Here $\mu \in \mathbb{R}$ is the intercept and $\beta_j$ and $\theta_{jk}$ contain coefficients for main effects and interactions respectively, and $\varepsilon_i$ is random noise.

In this paper, we make several contributions to the problem of searching for interactions in high-dimensional settings.
\begin{enumerate}[(a)]
\item We first establish a form of equivalence between \eqref{eq:inter_search1} and closest-pair problems \citep{Shamos1975,Agarwal1991}.
Assume for now that all predictors and outcomes are binary, so  $X_{ij}, Y_i \in \{-1,1\}$ (we will later relax this assumption) and define  $\matr{Z} \in \{-1,1\}^{n \times p}$ as $Z_{ij} = Y_i X_{ij}$. Then it is straightforward to show that \eqref{eq:inter_search1} is  equivalent to
\begin{equation} \label{eq:close_pairs1}
\text{Keep all pairs } (j,k) \text{ for which } \|\matr{X}_j - \matr{Z}_k\|_2 < \kappa'
\end{equation}
for some $\kappa'$.
This connects the search for interactions to literature in computational geometry
on problems of finding closest pairs of points.
\item We introduce the \emph{xyz} algorithm to solve \eqref{eq:close_pairs1} based on randomly projecting each of the columns in $\matr{X}$ and $\matr{Z}$ to a one-dimensional space. By exploiting the ability to sort the resulting $2p$ points with $\mathcal{O}(p \log(p))$ computational cost, we achieve a run time that is always subquadratic in $p$ and can  even reach a linear complexity $\mathcal{O}(np)$ when $\kappa$ is much larger than the quantity $|\mb Y^T(\matr{X}_j \circ \matr{X}_k)|/n$ of the bulk of the pairs $(j,k)$. We show that our approach can be viewed as an example of locality sensitive hashing \citep{Leskovec2014} optimised for our specific problem.
\item We show how any method for solving \eqref{eq:inter_search1} can be used to fit regression models with interactions \eqref{eq:inter_mod} by building it into an algorithm for the Lasso \citep{tibshirani96regression}. The use of \emph{xyz} thus leads to a procedure for applying the Lasso to all main effects and interactions with computational cost that scales subquadratically in $p$.
\item We provide implementations of both the core \emph{xyz} algorithm and its extension to the Lasso in the R package \texttt{xyz}, which is available on github~\citep{githublink} and CRAN.
\end{enumerate}
Our work here is thus related to ``closest pairs of points'' algorithms in computational geometry as well as an extensive literature on modelling interactions in statistics, both of which we now review.

\subsection{Related work} \label{sec:related_work}
A common approach to avoid the quadratic cost in $p$ of searching over all pairs of variables~\eqref{eq:inter_search1} is to restrict the search space: one can first seek a small number of important main effects, and then only consider interactions involving these discovered main effects.
More specifically, one could fit a main effects Lasso \citep{tibshirani96regression} to the data first, add interactions between selected main effects to the matrix of predictors, and then run the Lasso once more on the augmented design matrix in order to produce the final model (see \citet{Wu_etal2010} for example). Tree-based methods such as CART \citep{CART} work in a similar fashion by aiming to identify an important main effect and then only considering interactions involving this discovered effect.

However it is quite possible for the signal to be such that main effects corresponding to important interactions are hard to detect. As a concrete example of this phenomenon, consider the setting where $\matr{X}$ is generated randomly with all entries independent and having the uniform distribution on $\{-1,1\}$. Suppose the response is given by $Y_i = X_{i1} X_{i2}$, so there is no noise. Since the distribution $Y_i | X_{ij}$ is the same for all $k$, main effects regressions would find it challenging to select variables 1 and 2. Note that by reparametrising the model by adding one to each entry of $\matr{X}$ for example, we obtain $Y_i = (X_{i1}-1)(X_{i2}-1) = 1 -X_{i1}- X_{i2} + X_{i1}X_{i2}$. The model now respects the so-called strong hierarchical principle \citep{Bien2013} that interactions are only present when their main effects are. The hierarchical principle is useful to impose on any fitted model. However, imposing the principle on the model does not imply that the interactions will easily be found by searching for main effects first. The difficulty of the example problem is due to interaction effects masking main effects: this is a property of the signal $\E(Y_i)$ and of course no reparametrisation can make the main effects any easier to find.
Approaches that increase the set of interactions to be considered iteratively can help to tackle this sort of issue in practice \citep{Bickel_etal2010, Hao2014, MARS, Shah2016} as can those that randomise the search procedure \citep{breiman01random}. However they cannot eliminate the problem of missing interactions, nor do these approaches offer guarantees of how likely it is that they discover an interaction.

As alluded to earlier, the pure interaction search problem \eqref{eq:close_pairs1} is related to close pairs of points problems, and more specifically the close bichromatic pairs problem in computational geometry \citep{Agarwal1991}. Most research in this area has focused on algorithms that lead to computationally optimal results in the number of points $p$ whilst considering the dimension $n$ to be constant. This has resulted in algorithms where the scaling of the computational complexity with $n$ is at least of order $2^n$ \citep{Shamos1975}. Since for meaningful statistical results one would typically require $n \gg \log(p)$, these approaches would not lead to subquadratic complexity.
An exception is the so-called lightbulb algorithm \citep{Paturi1989} which employs a similar strategy for binary data; our work here shows that this is optimal among random projection-based methods and also that it may be modified to handle continuous data and also detect interactions in high-dimensional regression settings.

In the special case where $n = p$ and $Z_{ij}, X_{ij} \in \{-1, 1\}$, \eqref{eq:close_pairs1} may be seen to be equivalent to searching for large magnitude entries in the product of square matrices $\matr{X}$ and $\matr{Z}^T$. This latter problem is amenable to fast matrix multiplication algorithms, which in theory can deliver a subquadratic complexity of roughly $\mathcal{O}(p^{2.4})=\mathcal{O}(np^{1.4})$ \citep{Williams2012, Davie2013, LeGall2012}. However the constants hidden in the order notation are typically very large, and practical implementations are unavailable. The Strassen algorithm \citep{Strassen1969} is the only fast matrix multiplication algorithm used regularly in practice to the best of our knowledge. With a complexity of roughly $\mathcal{O}(p^{2.8})=\mathcal{O}(np^{1.8})$, the improvement over a brute force close pairs search is only slight.


The strategy we use is most closely related to locality sensitive hashing  (LSH) \citep{indyk1998approximate} which encompasses a family of hashing procedures such that similar points are mapped to the same bucket with high probability. A close pair search can then be conducted by searching among pairs mapped to the same bucket. In fact, our approach for solving \eqref{eq:close_pairs1} can be thought of as an example of LSH optimised for our particular problem setting. This connection is detailed in Appendix B.

A seemingly attractive alternative to the subsampling-based LSH-strategy we employ is the method of random projections which is motivated by the theoretical guarantees offered by the Johnson--Lindenstrauss Lemma \citep{Achlioptas2003}. Perhaps surprisingly, we can show that using random projections instead of  our subsampling-based scheme  leads to a quadratic run time for interaction search (see Theorem \ref{thm:optim} and section \ref{sec:gaussexp}).

An approach that bears some similarity with our procedure is that of \emph{epiq} \citep{Arkin2014}. This works by projecting the data and then searches through a lower dimensional representation for close pairs. This appears to improve upon a naive brute force empirically but there are no proven guarantees that the run time improves on the  $\mathcal{O}(np^2)$ complexity of a naive search.

The \emph{Random Intersection Trees} algorithm of \citet{Shah2014} searches for potentially deeper interactions in data with both $\matr{X
}$ and $\mb Y$ binary. In certain cases with strong interactions a complexity close to linear in $p$ is achieved; however it is not clear how to generalise the approach to continuous data or embed it within a regression procedure.


The idea of \citet{kong2016distcor} is to first transform the data by forming $\tilde{\mb Y} = \mb Y \circ \mb Y$ and $\tilde{\mb X}_j = \mb X_j \circ \mb X_j$ for each predictor. Next $\tilde{\mb X}_j$ and $\tilde{\mb Y}$ are tested for independence using the distance correlation test. In certain settings, this can reveal important interactions with a computational cost linear in $p$. However, the powers of these tests depend on the distributions of the transformed variables $\tilde{\matr{X}}_j$. For example in the binary case when $\matr{X} \in \{-1,1\}^{n \times p}$, each transformed variable will be a vector of 1's and the independence tests will be unhelpful. We will see that our proposed approach  works particularly well in this setting.

\subsection{Organisation of the paper}

In Section~\ref{sec:binary} we consider the case where both the response $\mb Y$ and the predictors $\mb X$ are binary. We first demonstrate how \eqref{eq:inter_mod} may be converted to a form of closest pair of points problem. We then introduce a general version of the \emph{xyz} algorithm which solves this based on random projections. As we show in Section~\ref{sec:optmin} there is a particular random projection distribution that is optimal for our purposes. This leads to our final version of the \emph{xyz} algorithm which we present in Section~\ref{sec:xyz_prop} along with an analysis of its run time and probabilistic guarantees that it recovers important interactions. In Section \ref{sec:cont} we extend the \emph{xyz} algorithm to continuous data. These ideas are then used in Section~\ref{sec:regression} to demonstrate how the \emph{xyz} algorithm can be embedded within common algorithms for high-dimensional regression \citep{friedman2010regularization} allowing high-dimensional regression models with interactions to be fitted with subquadratic complexity in $p$. Section~\ref{sec:experi} contains a variety of numerical experiments on real and simulated data that complement our theoretical results and demonstrate the effectiveness of our proposal in practice. We conclude with a brief discussion in Section~\ref{sec:discuss} and all proofs are collected in the Appendix.

\section{The \emph{xyz} algorithm for binary data} \label{sec:binary}

In this section, we present a version of the \emph{xyz} algorithm applicable in the special case where both $\matr{X}$ and $\matr{Y}$ are binary, so $X_{ij} \in \{-1, 1\}$ and $Y_i \in \{-1,1\}$. We build up to the algorithm in stages, giving the final version in Section~\ref{sec:final_xyz}.

Define $\matr{Z} \in \{-1,1\}^{n \times p}$ by $Z_{ij} = Y_iX_{ij}$ and
\begin{equation} \label{eq:def_gamma}
\gamma_{jk} = \frac{1}{n} \sum_{i=1}^n \ind_{\{Y_i = X_{ij}X_{ik}\}}.
\end{equation}
We call $\gamma_{jk}$ the interaction strength of the pair $(j,k)$. It is easy to see that the interaction search problem \eqref{eq:inter_search1} can be expressed in terms of either the $\gamma_{jk}$ or the normalised squared distances. Indeed
\begin{equation} \label{eq:equiv}
 2\gamma_{jk} -1 = \mb Y^T(\matr{X}_j \circ \matr{X}_k)/n = \matr{Z}_j^T \matr{X}_k/n = 1-\|\matr{Z}_j - \matr{X}_k\|_2^2/(2n).
\end{equation}
Thus those pairs $(j,k)$ with $\mb Y^T(\matr{X}_j \circ \matr{X}_k)/n$ large will have $\gamma_{jk}$ large, and $\|\matr{Z}_j - \matr{X}_k\|_2^2$ small.
This equivalence suggests that to solve \eqref{eq:inter_search1}, we can search for pairs $(j,k)$ of columns $\matr{Z}_j, \matr{X}_k$ that are close in $\ell_2$ distance. At first sight, this new problem would also appear to involve a search across all pairs, and would thus incur an $\mathcal{O}(np^2)$ cost.
As mentioned in the introduction, close pair searches that avoid a quadratic cost in $p$ incur typically an exponential cost in $n$. Since $n$ would typically be much larger than $\log(p)$, such searches would be computationally infeasible.

We can however project each of the $n$-dimensional columns of $ \matr{X}$ and $\mb Z$ to a lower dimensional space and then perform a close pairs search.
The Johnson--Lindenstrauss Lemma, which states roughly that one can project $p$ points into a space of dimension $\mathcal{O}(\log(p))$ and faithfully preserve distances, may appear particularly relevant here. The issue is that the projected dimension suggested by the Johnson--Lindenstrauss Lemma is still too large to allow for an efficient close pairs search.
The following observation however gives some encouragement: if we had $\mb Y=\matr{X}_j \circ \matr{X}_k$ so $\matr{X}_j = \matr{Z}_k$, even a one-dimensional projection $\mb R \in \R^n$ will have $|\mb R^T(\matr{X}_j-\matr{Z}_k)|=0=\|\matr{X}_j-\matr{Z}_k\|_2$, which implies that a perfect interaction will have zero distance in the projected space. We will later see that our approach leads to a linear run time in such a case. Importantly, we are only interested in using a projection that preserves the distances between the close pairs rather than all pairs, which makes our problem very different to the setting considered in the Johnson--Lindenstrauss Lemma.

With this in mind, consider the following general strategy. First project the columns of $\mb X$ and $\mb Z$ to one-dimensional vectors $x$ and $z$ using a random projection $\mb R$: $\mb x=\mb X^T \mb R$, $\mb z=\mb Z^T \mb R$. Next for some threshold $\tau$, collect all pairs $(j,k)$ such that $|x_j - z_k| \leq \tau$ in the set $E$. By first sorting $\mb x$ and $\mb z$, a step requiring only $\mathcal{O}(p\log(p))$ computations (see for example \citet{Sedgewick1998}), this close pairs search can be shown to be very efficient. Given this set of candidate interactions, we can check for each $(j,k)\in E$ whether we have $\gamma_{jk} \geq \gamma$.
The process can be repeated $L$ times with different random projections, and one would hope that given enough repetitions, any given strong interaction would be present in one of the candidate sets $E_1,\ldots,E_L$ with high probability.
This approach is summarised in Algorithm~\ref{alg:general_xyz} which we term the general form of the \emph{xyz} algorithm. A schematic overview is given in Figure~\ref{fig:overview}.

\begin{algorithm}
\caption{\label{alg:general_xyz}A general form of the \emph{xyz} algorithm.}
\begin{algorithmic}[1]
\Statex \textbf{Input}: $\matr{X} \in \{-1,1\}^{n \times p}$, $\mb Y \in \{-1,1 \}^n$
\Statex \textbf{Parameters:} $\xi=(G,L,\tau,\gamma)$. Here $G$ is the joint distribution for the projection vector $\mb R$, $L$ is the number of projections, and $\tau$ and $\gamma$ are the thresholds for close pairs and interactions strength respectively.
\Statex \textbf{Output}: $I$ set of strong interactions.
\State Form $\mb Z$ via $Z_{ij}=Y_i X_{ij}$ and set $I:=\emptyset$.
\For{$l \in \{1,\ldots,L\}$ }
\State Draw random vector $\mb R \in \R^n$ with distribution $G$ and project the data using $\mb R$, to form \[ \mb x=\mb X^T \mb R \mbox{ and } \mb z=\mb Z^T \mb R.\]
\State Collect in $E_l$ all pairs $(j, k)$ such that $|x_j-z_k|\leq \tau$.
\State Add to $I$ those $(j,k) \in E_l$ for which $\gamma_{jk} \geq \gamma$.
\EndFor
\end{algorithmic}
\end{algorithm}
There are several parameters that must be selected, and a key choice to be made is the form of the random projection $\mb R$.
For the joint distribution $G$ of $\mb R$ we consider the following general class of distributions, which includes both dense and sparse projections. We sample a random or deterministic number $M$ of indices from the set $\{1,\ldots,n\}$, $i_1,\ldots, i_M$, either with or without replacement. Then, given a distribution $F \in \mathcal{F}$ where $\mathcal{F}$ is a class of distributions to be specified later, we form a vector $\mb D \in \R^M$ with independent components each distributed according to $F$. We then define the random projection vector $\mb R$ by
\begin{equation}
\label{eq:projR}
R_i = \sum_{m=1}^M D_m \mathds{1}_{\{ i_m =i\}}, \qquad i=1,\ldots,n.
\end{equation}

Each configuration of the \emph{xyz} algorithm is characterised by fixing the following parameters:
\begin{enumerate}[(i)]
\item $G$, 
a distribution
for the projection vector $R$ which is determined through \eqref{eq:projR} by $F \in \mathcal{F}$, a distribution for the subsample size $M$ and whether sampling is with replacement or not;
\item $L \in \mathbb{N}$, the number of projection steps;
\item $\tau \geq 0$, the close pairs threshold;
\item $\gamma \in (0,1)$, the interaction strength threshold.
\end{enumerate}
We will denote the collection of all possible parameter levels by $\Xi$.
This includes the following subclasses of interest. Fix $F \in \mathcal{F}$.
\begin{enumerate}[(a)]
\item {\bf Dense  projections}. Let $\mb R \in \R^n$ have independent components distributed according to $F$ and denote the distribution of $\mb R$ by $G$. This falls within our general framework above with $M$ set to $n$ and sampling without replacement. Let
\[\Xi_{\text{dense}} :=\{\xi \in \Xi \mbox{ with joint distribution equal to G}\}.  \]
\item {\bf Subsampling}. Let $\mathcal{G}_{\text{subsample}}$ be the set of distributions for $R$ obtained through \eqref{eq:projR} when subsampling with replacement. Let
\[\Xi_{\text{subsample}} :=\{\xi \in \Xi: \mbox{joint distribution } G \in \mathcal{G}_{\text{subsample}} \}.  \]
\item {\bf Minimal subsampling}. Let $\Xi_{\text{minimal}}$ be the set of all parameters in $\Xi_{\text{subsample}}$ such that the close pairs threshold is $\tau=0$ and $M$ takes randomly values in the set $\{m,m+1\}$ for some positive integer $m$.
\[\Xi_{\text{minimal}} :=\{\xi\in \Xi_{\text{subsample}} \mbox{ with } \tau=0 \mbox{ and } M\in \{m,m+1\} \mbox{ for some } m \in \mathbb{N} \}.  \]
\end{enumerate}
Note that we have suppressed the dependence of the classes above on the fixed distribution $F \in \mathcal{F}$ for notational simplicity.
We define $\mathcal{F}$ to be the set of all univariate
absolutely continuous and symmetric distributions with bounded density and finite third moment.
The restriction to continuous distributions in $\mathcal{F}$ ensures that $\Xi_{\text{minimal}}$ is invariant to the choice of $F$: when $\tau\equiv 0$, every $F \in \mathcal{F}$ with $L \in \mathbb{N}$ and the distribution for $M$ fixed yields the same algorithm. Moreover the set of close pairs in $C_l$ is simply the set of pairs $(j,k)$ that have $X_{i_m j}=Z_{i_mk}$ for all $m=1,\ldots,M$, that is the set of pairs that are equal on the subsampled rows.
We note that the symmetry and boundedness of the densities in $\mathcal{F}$ and finiteness of the third moment are mainly technical conditions necessary for the theoretical developments in the following section.
We will assume without loss of generality that the second moment is equal to $1$. This condition places no additional restriction on $\Xi$ since a different second moment may be absorbed into the choice of $\tau$.

Minimal subsampling represents a very small subset of the much larger class of randomised algorithms outlined above.
However, Theorem~\ref{thm:optim} below shows that minimal subsampling is essentially always at least as good as any algorithm from the wider class, which is perhaps surprising. A beneficial consequence of this result is that we only need to search for the optimal ways of selecting $M$ and $L$; the threshold $\tau$ is fixed at $\tau=0$ and the choice of the continuous distribution $F$ is inconsequential for minimal subsampling. The choices we give in Section~\ref{sec:final_xyz} yield a subquadratic run time that approaches linear in $p$ when the interactions to be discovered are much stronger than the bulk of the remaining interactions.

\begin{figure}
\centering
\includegraphics[scale=1.55]{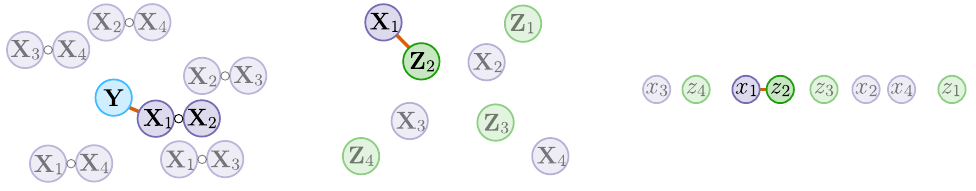}
a) \hspace{4.5cm} b) \hspace{5.1cm} c)
\caption{\label{fig:overview} Illustration of the general \emph{xyz} algorithm. The strongest interaction is the pair $(1,2)$ and $p=4$. Panel a) illustrates the interaction search among $\mb Y$ and $\matr{X}_j \circ \matr{X}_k$, panel b) shows the closest pair problem after the transformation $Z_{ij}= X_{ij} Y_i$ and panel c) depicts the closest pair problem after the data has been projected. These are the three main steps in the \emph{xyz} algorithm.}
\end{figure}

\subsection{Optimality of minimal subsampling}
\label{sec:optmin}
In this section, we compare the run time of the algorithms in $\xi \in \Xi_{\text{dense}},\Xi_{\text{subsample}}$ and $\Xi_{\text{minimal}}$ that return strong interactions with high probability.
Let $(j^*, k^*)$ be the indices of a strongest interaction pair, that is $\gamma_{j^*k^*} = \max_{j,k \in\{1,\ldots,p\}} \gamma_{jk}$. We will consider algorithms $\xi$ with $\gamma$ set to $\gamma_{j^*k^*}$. Define the power of $\xi \in \Xi$ as
\[
\mbox{Power}(\xi) :=  \pr_\xi( (j^*,k^*) \in I).
\]
For $\eta \in (0,1)$, let
\[
\Xi_{\text{dense}}(\eta) = \{\xi \in \Xi_{\text{dense}} : \mbox{Power}(\xi) \ge \eta \},
\]
and define $\Xi_{\text{subsample}}(\eta)$ and $\Xi_{\text{minimal}}(\eta)$ analogously. Note that these classes depend on the underlying $F \in \mathcal{F}$, which is considered to be fixed, and moreover that we are fixing $\gamma=\gamma_{j^* k^*}$.
We consider an asymptotic regime where we have a sequence of response--predictor matrix pairs $(\mb Y^{(n)}, \mb X^{(n)}) \in \R^n \times \R^{n \times p_n}$. Write $\gamma^{(n)}_{jk}$ for the corresponding interaction strengths, and let $\gamma_1^{(n)} = \max_{j,k} \gamma^{(n)}_{jk}$. Let $f_{\mbb\gamma^{(n)}}$ be the probability mass function corresponding to drawing an element of $\mbb\gamma^{(n)}$ uniformly at random. Note that $f_{\mbb\gamma^{(n)}}$ has domain $\{0, 1/n, 2/n, \ldots, 1\}$. We make the following assumptions about the sequence of interaction strength matrices $\mbb\gamma^{(n)}$.
\begin{itemize}
\item[(A1)] There exists $c_0$ such that $|\{(j,k) : \gamma_{jk}^{(n)} = \gamma_1^{(n)}\}| \leq c_0 p_n$.
\item[(A2)] There exists $\gamma_l >0$, $\gamma_u<1$ such that $\gamma_u\geq\gamma_1^{(n)} \geq \gamma_l$ for all $n$.
\item[(A3)] There exists $\rho < 1$ such that $f_{\mbb\gamma^{(n)}}$ is non-increasing on $[\rho \gamma_1^{(n)}, \gamma_1^{(n)}) \cap \{0, 1/n, \ldots, 1\}$.
\end{itemize}
Assumption (A1) is rather weak: typically one would expect the maximal strength interaction to be essentially unique, while (A1) requires that at most of order $p_n$ interactions have maximal strength. (A2) requires the maximal interaction strength to be bounded away from 0 and 1, which is the region where complexity results for the search of interactions are of interest. As mentioned earlier, if the maximal interaction strength is 1, it will always be retained in the close-pair sets $C_l$, whilst if its strength is too close to 0, then it is near impossible to distinguish it from the remaining interactions. (A3) ensures a certain form of separation between maximal strength interactions and the bulk of the interactions.

To aid readability, in the following we suppress the dependence of quantities on $n$ in the notation.
Given $\mb X$ and $\mb Y$, we may define  $T(\xi)$ as the expected number of computational operations performed by the algorithm corresponding to $\xi$.
We have the following result.
\begin{thm} \label{thm:optim}
Given $F\in \mathcal{F}$ and $\eta \in (0,1)$, there exists $n_0$ such that for all $n \geq n_0$ we have
\begin{align}
\inf_{\xi \in \Xi_{\text{minimal}}(\eta)} T(\xi) \; &=\; \inf_{\xi \in \Xi_{\text{subsample}}(\eta)} T(\xi),\label{eq:res_opt}\\
\inf_{\xi \in \Xi_{\text{minimal}}(\eta)} \frac{T(\xi)}{np^2} \; &\to \;  0, \label{eq:res_subquad}
\end{align}
and there exists $c > 0$ such that
\begin{equation} \label{eq:res_dense}
\inf_{\xi \in \Xi_{\text{dense}}(\eta)} \frac{T(\xi)}{np^2} > c.
\end{equation}
\end{thm}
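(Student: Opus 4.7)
The plan is to express everything in terms of a single quantity, the per-iteration capture probability $p_{jk}(M,\tau):=\pr_\xi(|x_j-z_k|\le\tau)$, and then optimise the expected cost subject to the power constraint. Conditioning on the subsample and exploiting the symmetry and continuity of $F$, one obtains
\begin{equation*}
p_{jk}(M,\tau)\;=\;\E_M\!\left[\sum_{n=0}^{M}\binom{M}{n}(1-\gamma_{jk})^n \gamma_{jk}^{M-n}\, q_n(\tau)\right],
\qquad q_n(\tau):=\pr\!\left(\left|2\sum_{l=1}^n D_l\right|\le\tau\right),
\end{equation*}
with the minimal case collapsing to $p_{jk}(M,0)=\E_M[\gamma_{jk}^M]$ since $q_0(0)=1$ and $q_n(0)=0$ for $n\ge 1$ by absolute continuity of $F$. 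The cost per iteration decomposes as $\Theta(Mp+p\log p + n\sum_{j,k}p_{jk})$ (projection, sorting, verification) and the power constraint forces $L\gtrsim -\log(1-\eta)/p_{j^*k^*}(M,\tau)$, so
\begin{equation*}
T(\xi)\;\asymp\;\frac{Mp+p\log p}{p_{j^*k^*}(M,\tau)}\;+\;n\cdot\frac{\sum_{j,k}p_{jk}(M,\tau)}{p_{j^*k^*}(M,\tau)}.
\end{equation*}

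For \eqref{eq:res_opt}, given any configuration $(M,\tau,L)\in\Xi_{\text{subsample}}(\eta)$, I would construct a minimal configuration $(M',0,L')$ with equal or smaller $T$. The idea is to pick the largest integer $M'$ with $\gamma_1^{M'}\ge p_{j^*k^*}(M,\tau)$, so that $L'=L$ still meets the power requirement and $M'p\le Mp$. Writing $h(\gamma,\tau,M):=\sum_{n\ge1}\binom{M}{n}((1-\gamma)/\gamma)^n q_n(\tau)$, the bulk-ratio comparison $(\gamma_{jk}/\gamma_1)^{M'}\le p_{jk}(M,\tau)/p_{j^*k^*}(M,\tau)$ reduces to showing that $\gamma\mapsto\log(1+h(\gamma,\tau,M))/\log(1/\gamma)$ is non-increasing in $\gamma\in(0,\gamma_u]$, which is plausible because every summand $((1-\gamma)/\gamma)^n$ is itself decreasing in $\gamma$. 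Finally, allowing $M$ to be random within $\Xi_{\text{subsample}}$ only inflates $\sum p_{jk}/p_{j^*k^*}$ by Jensen's inequality applied to the convex map $M\mapsto\gamma^M$ on $(0,1)$, so restricting to integer $M\in\{m,m+1\}$ costs at most an $O(1)$ rounding factor that is absorbed into the allowance.

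For \eqref{eq:res_subquad}, I write $\sum_{j,k}(\gamma_{jk}/\gamma_1)^M=p^2\,\E_V[(V/\gamma_1)^M]$ for $V\sim f_{\mbb\gamma}$ and split the expectation at $\rho\gamma_1$: the tail $V<\rho\gamma_1$ contributes at most $\rho^M$, and for $V\in[\rho\gamma_1,\gamma_1]$ the monotonicity in (A3) together with (A1) and an Abel-summation/integration-by-parts argument yields an $O(1/M)$ bound. Taking $M_n=\lfloor\log\log p_n\rfloor$, for instance, sends both the projection term $(Mp+p\log p)/(\gamma_1^M np^2)$ and the bulk ratio to $0$, giving \eqref{eq:res_subquad}. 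For \eqref{eq:res_dense}, in the dense case $N_{jk}:=\#\{i:X_{ij}\ne Z_{ik}\}=n(1-\gamma_{jk})$ is deterministic and $x_j-z_k\stackrel{d}{=}2S_{N_{jk}}$ with $S_N$ a sum of $N$ i.i.d.\ copies of $F$. Because $F$ has finite third moment, Berry--Esseen yields $p_{jk}(\text{dense},\tau)=\Theta(\tau/\sqrt{n(1-\gamma_{jk})})$ uniformly for $\tau$ below a constant multiple of $\sqrt{n}$, and (A2) bounds $1-\gamma_1$ away from $0$, so $p_{j^*k^*}/p_{jk}$ is at most a constant depending only on $\gamma_u$ and $F$. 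Consequently $\sum_{j,k}p_{jk}\ge c'p^2 p_{j^*k^*}$, and combining with $L p_{j^*k^*}\gtrsim-\log(1-\eta)$ yields $T\ge c\,np^2$.

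The main obstacle I foresee is the clean derivation of the bulk-ratio monotonicity behind \eqref{eq:res_opt}, namely that $\gamma\mapsto\log(1+h(\gamma,\tau,M))/\log(1/\gamma)$ is non-increasing uniformly in $(M,\tau)$. A direct term-by-term approach is awkward because the normaliser $\log(1/\gamma)$ itself varies with $\gamma$; I would instead fix $\tau$ and $M$ and verify the equivalent inequality $(1+h(\gamma_{jk}))^{\log(1/\gamma_1)}\le(1+h(\gamma_1))^{\log(1/\gamma_{jk})}$ by a single-variable calculus argument, exploiting the log-derivatives of the individual factors $((1-\gamma)/\gamma)^n$. Once this monotonicity is in place, the remaining ingredients — the Berry--Esseen estimate, the Abel-summation bound under (A3), and the Jensen step for randomised $M$ — are essentially routine given the closed-form expression for $p_{jk}(M,\tau)$.
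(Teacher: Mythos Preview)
Your treatment of \eqref{eq:res_subquad} and \eqref{eq:res_dense} is essentially the paper's: an Abel-summation bound exploiting (A3) for the former, and a Berry--Esseen estimate showing $p_{jk}(\text{dense},\tau)\asymp \tau/\sqrt{n(1-\gamma_{jk})}$ for the latter. These parts are fine.

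The gap is in \eqref{eq:res_opt}. Your key step is the termwise inequality $(\gamma_{jk}/\gamma_1)^{M'}\le p_{jk}/p_{j^*k^*}$, which you reduce to the claim that $\gamma\mapsto \log(1+h(\gamma,\tau,M))/\log(1/\gamma)$ is non-increasing. This claim is \emph{false} in general. Take $M=2$, $q_1(\tau)=1$, $q_2(\tau)=0$ (achievable in the limit of large $\tau$): then $1+h(\gamma)=(2-\gamma)/\gamma$, and the ratio equals $1+\log(2-\gamma)/\log(1/\gamma)$, which runs from $1$ at $\gamma\to 0^+$ to $2$ at $\gamma\to 1^-$, i.e.\ it is \emph{increasing}. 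More realistic choices with $q_1$ large and $q_2$ moderate give the same behaviour. Your heuristic that ``each summand $((1-\gamma)/\gamma)^n$ is decreasing'' says nothing about the \emph{normalised} log, and a single-variable calculus argument cannot rescue a statement that is simply not true uniformly in $(M,\tau)$. At minimum you would first have to rule out the large-$\tau$ regime by a separate argument (the paper does this by showing $\alpha(\xi)$ is then bounded below by a constant, making such $\xi$ irrelevant), and even then the monotonicity in the small-$\tau$ regime needs proof, not plausibility.

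The paper sidesteps this entirely via a Neyman--Pearson framework. It defines size $\alpha(\xi)=\E_\xi|E_1|/p^2$ and power $\beta(\xi)=\pr_\xi((j^*,k^*)\in E_1)$, and shows that for $\alpha$ below a fixed threshold, minimal subsampling maximises $\beta$ over all of $\Xi_{\text{subsample}}$. The crucial inequality is not a termwise ratio but an \emph{aggregate} bound $\beta(\xi)\le \gamma_1^M + Q\bigl(\alpha(\xi)-p^{-2}\sum_{j,k}\gamma_{jk}^M\bigr)$ for a constant $Q$, obtained by combining the Berry--Esseen-type two-sided bounds on $q_n(\tau)$ with the binomial estimate $\sum_{r} r^{-1/2}\binom{M}{r}(1-\gamma)^r\gamma^{M-r}\asymp \{(1-\gamma)M\}^{-1/2}$. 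This linear upper envelope on $\beta$ in terms of $\alpha$ is then compared directly to the value achieved by minimal subsampling at the same $\alpha$. The extension to random $M$ is handled not by your Jensen step (whose direction is unclear here) but by showing the power--size curve $\alpha\mapsto\sup\{\beta(\xi):\alpha(\xi)\le\alpha\}$ is concave on $[0,\alpha_0]$, after which Jensen applies in the right direction; a final argument shows that minimal subsampling also minimises $\E M$ among configurations of given size, closing the cost comparison.
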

The theorem shows that the optimal run time is achieved when using minimal subsampling. The last point is surprising: setting $\mb R \sim \mathcal{N}(\mb 0,\mb I)$, for example, will not improve the computational complexity over the brute-force approach and dense Gaussian projections hence do not reduce the complexity of the search. This is not caused by the larger computational effort involved in computing the dense projections: indeed even if these could be computed for free this result would remain. Rather the cost stems from the fact that dense projections have a much lower power for detecting true close pairs in the projected one-dimensional space.

\subsection{The final version of xyz} \label{sec:final_xyz}
The optimality properties of minimal subsampling presented in the previous section suggest the approach set out in Algorithm~\ref{alg:final_xyz}, which we will refer to as the \emph{xyz} algorithm.
\begin{algorithm}
\caption{\label{alg:final_xyz}Final version of the \emph{xyz} algorithm.}
\begin{algorithmic}[1]
\Statex \textbf{Input}: $\matr{X} \in \{-1,1\}^{n \times p}$, $\mb Y \in \{-1,1 \}^n$, subsample size $M$, number of projections $L$, threshold for interaction strength $\gamma$.
\Statex \textbf{Output}: $I$ set of strong interactions.
\State Form $\mb Z$ via $Z_{ij}=Y_i X_{ij}$.
\For{$l \in \{1,\ldots,L\}$ }
\State Form $\mb R \in \R^n$ as in (\ref{eq:projR}) with distribution $F=U[0,1]$ and set $\mb x=\mb X^T \mb R$, $\mb z=\mb Z^T \mb R$.
\State Find all pairs $(j, k)$ such that $x_j=z_k$ and store these in $E_l$.
\State Add to $I$ those pairs in $E_l$ for which $\gamma_{jk} \geq \gamma$.
\EndFor
\end{algorithmic}
\end{algorithm}
Here we are using a simplified version of the minimal subsampling proposal given in the previous section where we keep $M$ fixed rather than allowing it to be random. The reason is that the potential additional gain from allowing $M$ to be any one of two consecutive numbers with certain probabilities is minimal but necessary for Theorem \ref{thm:optim} and so the simpler approach is preferable.
We note that the uniform distribution in line 3 may be replaced with any continuous distribution to yield identical results.

To perform the equal pairs search in line 4, we sort the concatenation $(\mb x, \mb z) \in \R^{2p}$ to determine the unique elements of $\{x_1,\ldots,x_p, z_1,\ldots,z_p\}$. At each of these locations, we can check if there are components from both $x$ and $z$ lying there, and if so record their indices. This procedure, which is illustrated in Figure~\ref{fig:equal_pairs}, gives us the set of equal pairs $E$ in the form of a union of Cartesian products. The computational cost is $\mathcal{O}(p \log(p))$. This complexity is driven by the cost of sorting whilst the recording of indices is linear in $p$. We note, however, that looping through the set of equal pairs in order to output a list of close pairs of the form $(j_1, k_1), \ldots, (j_{|E|},k_{|E|})$ would incur an additional cost of the size of $E$, though in typical usage we would have $|E| =o(p)$.
\begin{figure}
\centering
\includegraphics[scale=1.5]{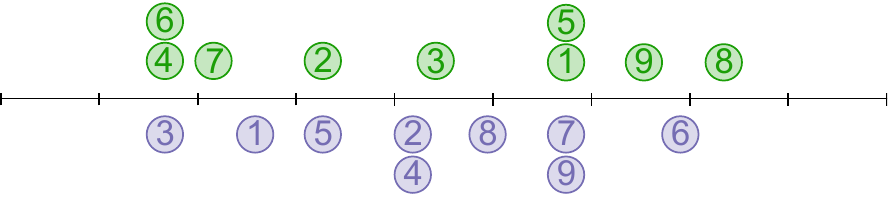}
\caption{\label{fig:equal_pairs} Illustration of an equal pairs search among components of $\mb x, \mb  z\in \R^p$ when $p=9$. The horizontal locations of blue and green circles numbered $j$ give $x_j$ and $z_j$ respectively. Sorting of $(\mb x, \mb z)$ allows traversal of the unique locations. At each of these it is checked whether points of both colours are present, and if so, the indices are recorded. Here the set of equal pairs $(\{3\} \times \{4,6\})\cup (\{5\} \times \{2\}) \cup (\{7, 9\} \times \{1,5\})$ would be returned.}
\end{figure}
Readers familiar with  locality sensitive hashing (LSH) can find a short interpretation of equal pairs search as an LSH-family in the appendix.
In the next section, we discuss in detail the impact of minimal subsampling on the complexity of the \emph{xyz} algorithm and the discovery probability it attains.

\subsection{Computational and statistical properties of xyz} \label{sec:xyz_prop}
We have the following upper bound on the expected number of computational operations performed by \emph{xyz} (Algorithm~\ref{alg:final_xyz}) when the subsample size and number of repetitions are $M$ and $L$:
\begin{equation} \label{eq:complexity}
C(M,L) := \underset{\text{(i)}}{np} + L\{\underset{\text{(ii)}}{Mp} + \underset{\text{(iii)}}{p\log(p)} + \underset{\text{(iv)}}{n\E_{\xi}(|E_1|)}\}.
\end{equation}
The terms may be explained as follows: (i) construction of $\mb Z$; (ii) multiplying $M$ subsampled rows of $\mb X$ and $\mb Z$ by $\mb R \in \R^n$; (iii) finding the equal pairs; (iv) checking whether the interactions exceed the interaction strength threshold $\gamma$. Note we have omitted a constant factor from the upper bound $C(M, L)$. There is a lower bound only differing from \eqref{eq:complexity} in the equal pairs search term (iii), which is $p$ instead of $p \log(p)$. It will be shown that (iv) is the dominating term and therefore the upper and lower bound are asymptotically equivalent, implying the bounds are tight.

An interaction with strength $\gamma$ is retained in $E_1$ with probability $\gamma^M$. Hence it is present in the final set of interactions $I$ with probability
\begin{equation} \label{eq:eta}
\eta(M,L) = 1-(1-\gamma^M)^L.
\end{equation}
The following result demonstrates how the \emph{xyz} algorithm can be used to find interactions whilst incurring only a subquadratic computational cost.
\begin{thm} \label{thm:minsamp}
Let $F_{\Gamma}$ be the distribution function corresponding to a random draw from the set of interaction strengths $\{\gamma_{jk}\}_{j,k \in \{1,\ldots,p\}}$. Given an interaction strength threshold $\gamma$, let $1-F_{\Gamma}(\gamma)=c_1/p$.
Define $\gamma_0 = p^{-1/M}$ and let $c_2$ be defined by $1-F_{\Gamma}(\gamma_0)=c_2p^{\log (\gamma)/\log (\gamma_0)-1}$. We assume that $\gamma_0 < \gamma$. Finally given a discovery threshold $\eta' \in [1/2,1)$ let $L$ be the minimal $L'$ such that $\eta(M, L') \geq \eta'$. Ignoring constant factors we have
\[
C(M, L) \leq \log\{1/(1-\eta')\}(1 + c_1+c_2)[\{1 + 1/\log(\gamma_0^{-1})\}\log(p) + n]p^{1 + \log(\gamma)/\log(\gamma_0)}.
\]
\end{thm}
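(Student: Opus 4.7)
\smallskip
\noindent\textbf{Proof sketch.}
The plan is to plug the explicit forms of $M$ and $\gamma^M$ into the complexity formula \eqref{eq:complexity}, to bound the number of repetitions $L$ via the discovery probability \eqref{eq:eta}, and to bound the expected candidate set size $\E_\xi(|E_1|)=\sum_{j,k}\gamma_{jk}^M$ by a careful partition of the pairs according to their interaction strength.

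First I would record the basic identities. From $\gamma_0=p^{-1/M}$ we get $M=\log(p)/\log(\gamma_0^{-1})$, so $Mp+p\log(p)=p\log(p)\{1+1/\log(\gamma_0^{-1})\}$. Writing $\alpha:=\log(\gamma)/\log(\gamma_0)$, the assumption $\gamma_0<\gamma<1$ gives $\alpha\in(0,1)$ and $\gamma^M=p^{-\alpha}$. To bound $L$, I would use that $L$ is the smallest integer with $(1-\gamma^M)^L\leq 1-\eta'$; combined with $-\log(1-x)\geq x$ on $[0,1)$, this yields $L\leq \log\{1/(1-\eta')\}/\gamma^M+1=\log\{1/(1-\eta')\}\,p^\alpha+1$, the additive $1$ being negligible since $\eta'\geq 1/2$.

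The main obstacle is obtaining the correct bound on $\E_\xi(|E_1|)=\sum_{j,k}\gamma_{jk}^M$. The crude bound $\gamma_{jk}^M\leq 1$ only yields a total of order $p^{1+\alpha}$, which combined with $L=O(p^\alpha)$ would give the wrong exponent $p^{1+2\alpha}$. To recover the correct $O(p)$ order, I would partition the pairs into three ranges according to $\gamma_{jk}$. Pairs with $\gamma_{jk}\leq\gamma_0$ each contribute at most $\gamma_0^M=1/p$, giving a total $\leq p$ from the at most $p^2$ such pairs. Pairs with $\gamma_0<\gamma_{jk}\leq\gamma$ number at most $p^2(1-F_\Gamma(\gamma_0))=c_2 p^{1+\alpha}$ by hypothesis, and each contributes at most $\gamma^M=p^{-\alpha}$, giving a total $\leq c_2 p$. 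Pairs with $\gamma_{jk}>\gamma$ number $p^2(1-F_\Gamma(\gamma))=c_1 p$ and each contributes at most $1$, giving a total $\leq c_1 p$. Summing the three ranges yields $\E_\xi(|E_1|)\leq(1+c_1+c_2)p$; note that the tight bound in the middle range is what prevents the exponent from degrading.

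Finally I would combine the estimates. Substituting into \eqref{eq:complexity} gives
\[ C(M,L)\leq np+L\bigl\{p\log(p)\bigl(1+1/\log(\gamma_0^{-1})\bigr)+(1+c_1+c_2)\,np\bigr\}, \]
and using $L\leq\log\{1/(1-\eta')\}\,p^\alpha+1$ together with $np\leq np^{1+\alpha}$ yields the stated bound after factoring out $(1+c_1+c_2)$ and collecting the $\log(p)$ and $n$ contributions. The remaining manipulations are routine algebra.
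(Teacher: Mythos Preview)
Your proposal is correct and follows essentially the same route as the paper's own proof: the same bound on $L$ via $-\log(1-x)\geq x$, the same identity $\gamma^{-M}=p^{\log(\gamma)/\log(\gamma_0)}$, and crucially the identical three-way partition of pairs according to whether $\gamma_{jk}\leq\gamma_0$, $\gamma_0<\gamma_{jk}\leq\gamma$, or $\gamma_{jk}>\gamma$ to obtain $\E_\xi(|E_1|)\leq(1+c_1+c_2)p$. The only cosmetic difference is that the paper multiplies through by $\gamma^{-M}$ before partitioning, bounding $\gamma^{-M}\E_\xi(|E_1|)\leq(1+c_1+c_2)p^{1+\alpha}$ directly, whereas you bound $\E_\xi(|E_1|)$ first and then multiply by $L$; the arithmetic is equivalent.
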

If $n \gg \log(p)$ and $\gamma_0$ is bounded away from 1 we see that the dominant term in the above is
\begin{equation} \label{eq:minsamp_run time}
cnp^{1 + \log(\gamma)/\log(\gamma_0)},
\end{equation}
where $c =\log\{1/(1-\eta')\}(1 + c_1+c_2)$. 
Typically we would expect $\gamma$ to be such that $|\{\gamma_{jk} : \gamma_{jk} > \gamma\}| \sim p$ as only the largest interactions would be of interest: thus we may think of $c_1$ as relatively small. If $M$ is such that $\gamma_0$ is also larger than the bulk of the interactions, we would also expect $c_2$ to be small.
Indeed, suppose that the proportion of interactions whose strengths are larger than $\gamma_0$ is $1-F_{\Gamma}(\gamma_0) = c_1' /p$. Then $c_2 = c_1' / p ^{\log(\gamma)/\log(\gamma_0)} < c_1'$.
As a concrete example, if $\gamma=0.9$ and $M$ is such that $\gamma_0=0.55$, the exponent in \eqref{eq:minsamp_run time} is around 1.17, which is significantly smaller than the exponent of 2 that a brute-force approach would incur; see also the examples in Section~\ref{sec:experi}. Note also that when $\gamma=1$, the exponent is 1 for all $\gamma_0 < 1$: if we are only interested in interactions whose strength is as large as possible, we have a run time that is linear in $p$.

It is interesting to compare our results here with the run times of approaches based on fast matrix multiplication. By computing $\mb X^T \mb Z$ we may solve the interaction search problem \eqref{eq:inter_search1}. Naive matrix multiplication would require $\mathcal{O}(np^2)$ operations, but there are faster alternatives when $n=p$. The fastest known algorithm \citep{Williams2012} gives a theoretical run time of $\mathcal{O}(np^{1.37})$ when $n=p$. For \emph{xyz} to achieve such a run time when $\gamma_0=0.55$ for example, the target interaction strength would have to be $\gamma \geq 0.81$: a somewhat moderate interaction strength. For $\gamma > 0.81$, \emph{xyz} is strictly better; we also note that fast matrix multiplication algorithms tend to be unstable or lack a known implementation and are therefore rarely used in practice. A further advantage is that the \emph{xyz} algorithm has an optimal memory usage of $\mathcal{O}(np)$.

We also note that whilst Theorem \ref{thm:minsamp} concerns the the discovery of any single interaction with strength at least $\gamma$, the run time required to discover a fixed number interactions with strength at least $\gamma$ would only differ by a multiplicative constant. If we however want a guarantee of discovering the $p$ strongest pairs the bound in Theorem \ref{thm:minsamp} would no longer hold.

To minimise the run time in \eqref{eq:minsamp_run time}, we would like $\gamma_0$ to be larger than most of the interactions in order that $c_2$ and hence $c$ be small, yet a smaller $\gamma_0$ yields a more favourable exponent. Thus a careful choice of $M$, on which $\gamma_0$ depends, is required for \emph{xyz} to enjoy good performance. In the following we show that an optimal choice of $M$ exists, and we discuss how this $M$ may be estimated based on the data.

Clearly if for some pair $(M, L)$, we find another pair $(M', L')$ with $\eta(M', L') > \eta(M, L)$ but $C(M', L') \leq C(M, L)$, we should always use $(M',L')$ rather than $(M, L)$. It turns out that there is in fact an optimal choice of $M$ such that the parameter choice is not dominated by any others in this fashion. Define
\begin{equation} \label{eq:bestM}
M^* = \argmin{M \in \N} \bigg\{-\frac{1}{\log(1-\gamma^M)}\bigg( Mp + p \log(p) + n\sum_{j,k}\gamma_{jk}^M\bigg)\bigg\},
\end{equation}
where it is implicitly assumed that the minimiser is unique. This will always be the case except for peculiar values of $\gamma$.

\begin{prop} \label{prop:Pareto}
Let $L \in \N$. If $(M', L') \in \N^2$ has $\eta(M', L') \geq \eta(M^*, L)$, then also $C(M', L')\geq C(M^*, L)$ with the final inequality being strict if $M' \neq M^*$ and $M^*$ is a unique minimiser.
\end{prop}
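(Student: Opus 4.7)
The plan is to rearrange the hypothesis on $\eta$ into a lower bound on $L'$, and then recognise $M^*$ as the minimiser of the relevant cost-per-log-non-discovery ratio.

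First I would write $C(M,L) = np + L\,h(M)$ where $h(M) := Mp + p\log(p) + n\sum_{j,k}\gamma_{jk}^M$, using that $\E_\xi(|E_1|) = \sum_{j,k}\gamma_{jk}^M$ under minimal subsampling (each pair $(j,k)$ lies in $E_1$ with probability $\gamma_{jk}^M$). Then \eqref{eq:bestM} states precisely that
\[
M^* \;=\; \argmin{M \in \N} \frac{h(M)}{-\log(1-\gamma^M)}.
\]
Note $h(M) > 0$ for all $M \in \N$ and $-\log(1-\gamma^M) > 0$ since $\gamma \in (0,1)$.

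Next I would translate the hypothesis $\eta(M',L') \ge \eta(M^*,L)$, i.e.\ $(1-\gamma^{M'})^{L'} \le (1-\gamma^{M^*})^L$, by taking logarithms (both sides negative) and dividing by $-\log(1-\gamma^{M'}) > 0$ to obtain
\[
L' \;\ge\; L\,\frac{-\log(1-\gamma^{M^*})}{-\log(1-\gamma^{M'})}.
\]
Multiplying this by $h(M') > 0$ and applying the defining optimality of $M^*$ gives
\[
L' h(M') \;\ge\; L\,\frac{h(M')}{-\log(1-\gamma^{M'})}\bigl(-\log(1-\gamma^{M^*})\bigr) \;\ge\; L\,\frac{h(M^*)}{-\log(1-\gamma^{M^*})}\bigl(-\log(1-\gamma^{M^*})\bigr) \;=\; L\, h(M^*).
\]
Adding the common $np$ term to both sides yields $C(M',L') \ge C(M^*,L)$, which is the claim.

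For the strictness: if $M^*$ is the unique minimiser of the displayed ratio and $M' \ne M^*$, the second inequality in the last display is strict, so $L'h(M') > L h(M^*)$ and hence $C(M',L') > C(M^*,L)$. The whole argument is essentially a one-line reduction once the cost is split into the $M$-independent part $np$ and the $L$-multiplied part $h(M)$, so I do not anticipate a real obstacle; the only thing to be careful about is the direction of the inequalities when dividing by quantities whose sign depends on $\gamma \in (0,1)$.
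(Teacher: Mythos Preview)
Your proof is correct and essentially identical to the paper's: both lower-bound $L'$ via $\eta(M',L')\ge\eta(M^*,L)$, multiply by the per-iteration cost $h(M')$, and then invoke the defining optimality of $M^*$ to pass to $Lh(M^*)$. The only cosmetic difference is that the paper abbreviates $\eta^*:=\eta(M^*,L)$ and writes the bound as $L'\ge \log(1-\eta^*)/\log(1-\gamma^{M'})$, which unwinds to exactly your inequality.
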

Thus there is a unique Pareto optimal $M$. Although the definition of $M^*$ involves the moments of $F_\Gamma$, this can be estimated by sampling from $\{\gamma_{jk}\}$. We can then numerically optimise a plugin version of the objective to arrive at an approximately optimal $M$.

\section{Interaction search on continuous data}
\label{sec:cont}
In the previous section we demonstrated how the \emph{xyz} algorithm can be used to efficiently solve the simplest form of interaction search \eqref{eq:inter_search1} when both $\mb X$ and $\mb Y$ are binary. In this section we show how small modifications to the basic algorithm can allow it to do the same when $\mb Y$ is continuous, and also when $\mb X$ is continuous. We discuss the regression setting in Section~\ref{sec:regression}.

\subsection{Continuous $Y$ and binary $\mb X$}

We begin by considering the setting where $\mb X \in \{-1, 1\}^{n \times p}$, but where we now allow real-valued $\mb Y \in \R^n$. Without loss of generality, we will assume $\|\mb Y\|_1=1$. The approach we take is motivated by the observation that the inner product $\mb Y^T (\mb X_j \circ \mb X_k)$ can be interpreted as a weighted inner product of $\mb X_j \circ \mb X_k$ with the sign pattern of $\mb Y$, using weights $w_i=|Y_i|$.

With this in mind, we modify \emph{xyz} in the following way. We set $\mb Z$ to be $Z_{ij} = \sgn(Y_i)X_{ij}$.
Let $i_1,\ldots,i_M \in \{1,\ldots,n\}$ be i.i.d.\ such that $\pr(i_s=i)=w_i$. Forming the projection vector $\mb R$ using \eqref{eq:projR}, we then find the probability of $(j,k)$ being in the equal pairs set may be computed as follows.
\begin{align*}
\{\pr(\mb R^T \mb X_j = \mb R^T \mb Z_k)\}^{1/M} &= \pr\left(X_{i_sj}= \sgn(Y_{i_s}) X_{i_s k} \text{ for all }s=1,\ldots,M \right) \\
&= \pr( X_{i_1 j} = \sgn(Y_{i_1}) X_{i_1 k}) \qquad \text{as the $i_s$ are i.i.d.}\\
&= \sum_{i=1}^n \pr( X_{i_1 j} = \sgn(Y_{i_1}) X_{i_1 k} |i_1=i) \pr(i_1=i) \\
&= \sum_{i=1}^n |Y_i|\ind_{\{ X_{i j} = \sgn(Y_{i}) X_{i k}\}} \\
&= \sum_{i : \sgn(Y_i) = X_{ij} X_{ik}} Y_i X_{ij} X_{ik} =: \tilde{\gamma}_{jk},
\end{align*}
where $\mathbb{P}$ here is with respect to the randomness of $\mb R$ (and, equivalently, the random indices $i_1,\ldots,i_M$) with $\mb Y$ and $\mb X$ considered fixed. The calculation above shows that the run time bound of Theorem~\ref{thm:minsamp} continues to hold in the setting with continuous $\mb Y$ provided we replace the interaction strengths $\gamma_{jk}$ with their continuous analogues $\tilde{\gamma}_{jk}$.

As a simple example, consider the model
\begin{equation*}
Y_i=X_{i1} X_{i2}+\varepsilon_i,
\end{equation*}
with $\varepsilon_i \sim \mathcal{N}(0,\sigma^2)$ and $\mb X$ generated randomly having each entry drawn independently from $\{-1, 1\}$ each with probability $1/2$. Then for a non-interacting pair $j \neq 1,2 \textrm{ or } k \neq 1,2$, we have $\tilde{\gamma}_{j k} \approx 0.5$. For the pair $(1,2)$ we calculate an interaction strength of
\begin{align*}
\tilde{\gamma}_{12}&=\mathbb{P}(\sgn(Y_{i_1})=X_{i_1 1}X_{i_1 2})=\mathbb{P}(\sgn(X_{i_1 1} X_{i_1 2}+\varepsilon_i)= X_{i_1 1}X_{i_1 2})\\
&=\mathbb{P}(|\varepsilon_i| < 1)+\frac{1}{2}\mathbb{P}(|\varepsilon_i|>1)=\frac{1}{2}(1+\mathbb{P}(|\varepsilon_i|<1)).
\end{align*}
Note that here that probability is over the randomness in the noise $\varepsilon_i$.
A quick simulation gives the following table:
\begin{center}
\begin{tabular}{l*{6}{c}r}
$\sigma^2$ & $0.1$ & 0.25 & 0.5 & 1  & 2 & 5 \\
\hline
$\tilde{\gamma}_{12}$ & 0.99 & 0.98 & 0.92 & 0.84 & 0.76 & 0.67
\end{tabular}
\end{center}
Using Theorem \ref{thm:minsamp} and the above table we can estimate the computational complexity needed to discover the pair $(1,2)$ given a value of $\sigma^2$.

\subsection{Continuous $\mb Y$ and continuous $\mb X$}

The previous section demonstrated how resampling with non-uniform weights transforms a setup with continuous $\mb Y$ into one with binary response. If both $\matr{X}$ and $\mb Y$ are continuous, we continue to use the previous strategy to deal with the continuous response. For the  matrix $\mb X$ with continuous predictor values we cannot use weighted resampling as the weights would depend on the interaction pair of interest. In the following we examine the effects of transformations of $\mb X$ to a binary data matrix $\tilde{\mb X}$. To allow for randomized mappings, we define the transformations  via a function $g:\mathbb{R} \mapsto [0,1]$ as \[
\mathbb{P}(\tilde{X}_{ij}=1) = g(X_{ij}) \mbox{  and  }  1-\mathbb{P}(\tilde{X}_{ij}=-1)=1-g(X_{ij}),
\]
where the transformation is always applied independently for each entry of the predictor matrix and for each subsample.

The following gives the probability of $Y_i$ agreeing in sign with $\tilde{X}_{ij} \tilde{X}_{ik}$ when $i$ is sampled with probability proportional to $|Y_i|$.

\begin{prop}
\label{lem:transform}
Given the transform $\mathbb{P}(\tilde{X}_{ij}=1) = g(X_{ij})$ and  sampling an index $i_s$ according to $\mathbb{P}(i_s=i)=Y_i/ \|\mb Y\|_1$, then the probability of a match is
\begin{equation}
\label{eq:interactionstrength}
\mathbb{P}(\sgn(Y_{i_s}) = \tilde{X}_{i_s j}\tilde{X}_{i_s k}) = \frac{1}{2}+\frac{1}{2 \| \mb Y \|_1} \sum_{i=1}^n Y_i (1-2g(X_{ij}))(1-2g(X_{ik})).
\end{equation}
\end{prop}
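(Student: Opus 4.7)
The plan is to condition on the randomly drawn index $i_s$ and on the independent sign-transformations of $\mb X$, treating $\mb X$ and $\mb Y$ as fixed throughout. The key point is that after conditioning on $i_s=i$, the event $\{\sgn(Y_i) = \tilde{X}_{ij}\tilde{X}_{ik}\}$ depends only on the two independent $\pm 1$-valued random variables $\tilde{X}_{ij}$ and $\tilde{X}_{ik}$, which are then easy to handle exactly.

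First I would record the basic one-coordinate identity: since $\tilde{X}_{ij} \in \{-1,1\}$ with $\pr(\tilde{X}_{ij}=1) = g(X_{ij})$, we have $\E[\tilde{X}_{ij}] = 2g(X_{ij})-1$. Because the transformations are applied independently across coordinates, the product $\tilde{X}_{ij}\tilde{X}_{ik}$ is again $\pm 1$-valued with mean $(2g(X_{ij})-1)(2g(X_{ik})-1)$. For any $\pm 1$-valued random variable $W$ with mean $\mu$ and any $s \in \{-1,1\}$, one has $\pr(W=s) = (1+s\mu)/2$. Applying this with $W = \tilde{X}_{ij}\tilde{X}_{ik}$ and $s = \sgn(Y_i)$ gives the conditional probability
\[
\pr\!\left(\sgn(Y_i)=\tilde{X}_{ij}\tilde{X}_{ik}\,\bigl|\,i_s=i\right) = \tfrac{1}{2}\bigl(1 + \sgn(Y_i)(2g(X_{ij})-1)(2g(X_{ik})-1)\bigr).
\]

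Next I would marginalise over the random index $i_s$. Interpreting the sampling rule as $\pr(i_s=i) = |Y_i|/\|\mb Y\|_1$ (consistent with the weighted-resampling scheme introduced in the previous subsection; the paper's $Y_i/\|\mb Y\|_1$ is a minor notational slip for nonnegative probabilities), and using the identity $|Y_i|\sgn(Y_i) = Y_i$ together with $\sum_i |Y_i|/\|\mb Y\|_1 = 1$, summing the conditional probabilities yields
\[
\pr(\sgn(Y_{i_s})=\tilde{X}_{i_s j}\tilde{X}_{i_s k}) = \tfrac{1}{2} + \tfrac{1}{2\|\mb Y\|_1}\sum_{i=1}^n Y_i (2g(X_{ij})-1)(2g(X_{ik})-1).
\]
Finally, observing that $(2g(X_{ij})-1)(2g(X_{ik})-1) = (1-2g(X_{ij}))(1-2g(X_{ik}))$ (two sign flips cancel) gives the claimed expression.

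There is no genuine obstacle here; the argument is essentially bookkeeping. The only mildly delicate points are (i) being careful about the absolute value in the sampling weights so that $|Y_i|\sgn(Y_i)$ collapses back to $Y_i$, producing the clean form on the right-hand side, and (ii) justifying the independence that lets us factor $\E[\tilde{X}_{ij}\tilde{X}_{ik}]$ as a product of means — which holds because the randomised map $g$ is applied independently to each entry of $\mb X$ and the two coordinates $j,k$ are distinct.
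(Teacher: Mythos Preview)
Your proof is correct and follows essentially the same route as the paper's: compute the conditional probability $\pr(\sgn(Y_i)=\tilde X_{ij}\tilde X_{ik})$ for fixed $i$ and then average over the weighted index. The only cosmetic difference is that the paper enumerates the four sign outcomes explicitly, whereas you obtain the same conditional formula more compactly via the identity $\pr(W=s)=(1+s\,\E W)/2$ for a $\pm1$-valued $W$; your handling of the marginalisation step (and of the $|Y_i|$ versus $Y_i$ issue) is in fact more careful than the paper's terse version.
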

Thus we may define a continuous analogue of the interaction strength $\gamma_{jk}$ based on the transform given by $g$ as
\begin{equation*}
\gamma_{jk}^g = \frac{1}{2}+\frac{1}{2 \|\mb  Y \|_1} \sum_{i=1}^n Y_i (1-2g(X_{ij}))(1-2g(X_{ik})).
\end{equation*}
These quantities may be substituted into Theorem~\ref{thm:minsamp} to yield the following upper bound on expected run time when using \emph{xyz} on transformed data.
\begin{cor}
\label{cor:generalization}
Let $F_{\Gamma^g}$ be the distribution function corresponding to a random draw from the set of interaction strengths $\{\gamma^g_{jk}\}_{j,k \in \{1,\ldots,p\}}$. Given an interaction strength threshold $\gamma$, let $1-F_{\Gamma^g}(\gamma)=c_1/p$.
Define $\gamma_0 = p^{-1/M}$ and let $c_2$ be defined by $1-F_{\Gamma}(\gamma_0)=c_2p^{\log (\gamma)/\log (\gamma_0)-1}$. We assume that $\gamma_0 < \gamma$. Finally given a discovery threshold $\eta' \in [1/2,1)$ let $L$ be the minimal $L'$ such that $\eta(M, L') \geq \eta'$. Ignoring constant factors we have
\[
C(M, L) \leq \log\{1/(1-\eta')\}(1 + c_1+c_2)[\{1 + 1/\log(\gamma_0^{-1})\}\log(p) + n]p^{1 + \log(\gamma)/\log(\gamma_0)}.
\]
\end{cor}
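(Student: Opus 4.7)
The plan is to reduce the continuous setting to the binary setting already handled by Theorem \ref{thm:minsamp}. Concretely, one runs the \emph{xyz} algorithm (Algorithm \ref{alg:final_xyz}) on the binarised data $(\tilde{\mb X}, \sgn(\mb Y))$, drawing the transformation $\tilde X_{ij}$ freshly and independently for each of the $L$ projections, and replacing uniform subsampling by weighted subsampling with weights $|Y_i|/\|\mb Y\|_1$ as described in the preceding subsections.

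The key observation is that the analysis underlying Theorem \ref{thm:minsamp} depends on the data only through two objects: the per-projection probability that a given pair $(j,k)$ lands in the equal-pairs set $E_l$, and the induced distribution function $F_\Gamma$ of interaction strengths. For a single projection with subsample size $M$, the probability that $(j,k)$ lands in $E_l$ equals the probability that all $M$ i.i.d.\ draws satisfy $\sgn(Y_{i_s}) = \tilde X_{i_s j}\tilde X_{i_s k}$. Because the $i_s$ are i.i.d.\ and the random binarisation is independent across rows, this factorises as
\begin{equation*}
\{\mathbb{P}(\sgn(Y_{i_1}) = \tilde X_{i_1 j}\tilde X_{i_1 k})\}^M = (\gamma_{jk}^g)^M,
\end{equation*}
where the identification of the single-draw match probability with $\gamma_{jk}^g$ is precisely Proposition \ref{lem:transform}. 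Thus $(\gamma_{jk}^g)^M$ plays the exact role that $\gamma_{jk}^M$ played in the binary analysis.

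Substituting $\gamma_{jk}^g$ for $\gamma_{jk}$ throughout the proof of Theorem \ref{thm:minsamp} gives the result. In detail, the discovery probability for a pair of target strength $\gamma$ remains $\eta(M,L)=1-(1-\gamma^M)^L$, so the choice of minimal $L$ achieving $\eta(M,L)\geq\eta'$ is unchanged. The expected equal-pairs count $\E_\xi|E_1|=\sum_{j,k}(\gamma_{jk}^g)^M$ entering term (iv) of \eqref{eq:complexity} is controlled via the same split into a bulk contribution (pairs with $\gamma_{jk}^g\leq\gamma_0$, bounded using $\gamma_0^M=1/p$) and a tail contribution (pairs with $\gamma_{jk}^g>\gamma_0$, bounded via $1-F_{\Gamma^g}(\gamma_0)=c_2p^{\log\gamma/\log\gamma_0-1}$ exactly as in the hypothesis). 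The resulting upper bound on $C(M,L)$ coincides term by term with the one in Theorem \ref{thm:minsamp}.

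The only additional check is that binarisation and weighted sampling do not inflate the dominant terms in \eqref{eq:complexity}. Weighted sampling needs an $\mathcal{O}(n)$ one-off preprocessing (e.g.\ an alias table for the weights $|Y_i|/\|\mb Y\|_1$) plus $\mathcal{O}(M)$ draws per projection; the random binarisation touches only the $M$ subsampled rows used to build $\mb x$ and $\mb z$ and so is absorbed into term (ii) of \eqref{eq:complexity}. Hence $C(M,L)$ is upper bounded by the same expression as in the binary case, and the stated bound follows. The only potential subtlety lies in verifying that no step of the Theorem \ref{thm:minsamp} argument implicitly uses $\gamma_{jk}\in\{0,1/n,\ldots,1\}$ in a way that would fail for the continuous-valued $\gamma_{jk}^g$; inspection shows that the argument only requires $\gamma_{jk}^g\in[0,1]$ and the tail/bulk split in terms of $F_{\Gamma^g}$, both of which hold here.
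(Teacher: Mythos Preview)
Your proposal is correct and follows the paper's own approach: the paper does not give a separate proof of this corollary, treating it as an immediate consequence of substituting $\gamma_{jk}^g$ for $\gamma_{jk}$ in Theorem~\ref{thm:minsamp}. Your write-up in fact supplies more justification than the paper does, spelling out why the match probability factorises as $(\gamma_{jk}^g)^M$ via Proposition~\ref{lem:transform} and why the added cost of weighted sampling and per-projection binarisation is absorbed into the existing terms of \eqref{eq:complexity}.
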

The expected computational costs depends critically on the distribution of the interaction strengths $F_{\Gamma^g}$. To gain a better understanding of what impact different transformations have on this distribution and subsequently on run time we will study the following simple model for $(\mb Y, \mb X) \in \R^n \times \R^{n \times p}$:
\begin{equation} \label{eq:inter_mod}
Y_i = X_{ij^*} X_{ik^*} + \varepsilon_i, \qquad i=1,\ldots,n,
\end{equation}
where the $\varepsilon_i$ are independent and have identical sub-exponential distributions symmetric about 0 and the rows of $\mb X$ are i.i.d. We now introduce two practically useful choices of $g$ and study their properties in the context of model \eqref{eq:inter_mod}.

\subsection*{The unbiased transform}
A natural choice for the transform $g$ is one that satisfies the unbiasedness requirement:
\begin{equation} \label{eq:unbiased}
\mathbb{E}(\tilde{X}_{ij}) = X_{ij}.
\end{equation}
It turns out that this requirement uniquely defines the transform, which we refer to as the \emph{unbiased transform}.
\begin{prop} \label{prop:unbiased}
Let $X_{ij} \in [-1,1]$. If its transformed version $\tilde{X}_{ij}$ satisfies \eqref{eq:unbiased}, then $g$ takes the form
\[
\mathbb{P}(\tilde{X}_{ij}=1) = g(X_{ij})= \frac{X_{ij}+1}{2}.
\]
Furthermore the interaction strength in \eqref{eq:interactionstrength} is given by
\begin{equation*}
\mathbb{P}(\sgn(Y_{i_s})=\tilde{X}_{i_s j}\tilde{X}_{i_s k})=\gamma_{jk}^g=\frac{1}{2}+\frac{1}{2 \|\mb Y\|_1} \sum_{i=1}^n Y_i X_{ij} X_{ik}.
\end{equation*}
\end{prop}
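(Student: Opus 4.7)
The proof splits cleanly along the two claims of the proposition, and both are essentially direct computations once we use the definition of the transform together with Proposition \ref{lem:transform}.

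For the first claim, the plan is to exploit that $\tilde{X}_{ij}$ is supported on $\{-1,1\}$ with $\pr(\tilde{X}_{ij}=1)=g(X_{ij})$, so its expectation may be written as
\[
\E(\tilde{X}_{ij}) \;=\; 1 \cdot g(X_{ij}) + (-1)\cdot\bigl(1-g(X_{ij})\bigr) \;=\; 2g(X_{ij}) - 1.
\]
Imposing the unbiasedness condition \eqref{eq:unbiased} and solving for $g(X_{ij})$ immediately gives $g(X_{ij}) = (X_{ij}+1)/2$. The assumption that $X_{ij} \in [-1,1]$ is precisely what guarantees $g$ takes values in $[0,1]$, so the transform is well-defined and uniquely determined.

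For the second claim, the plan is to plug the formula $g(X_{ij}) = (X_{ij}+1)/2$ into the general expression \eqref{eq:interactionstrength} provided by Proposition \ref{lem:transform}. The key observation is that under the unbiased transform we have $1 - 2 g(X_{ij}) = -X_{ij}$, so that
\[
\bigl(1 - 2 g(X_{ij})\bigr)\bigl(1 - 2 g(X_{ik})\bigr) \;=\; X_{ij}\,X_{ik}.
\]
Substituting into \eqref{eq:interactionstrength} collapses the expression to
\[
\gamma_{jk}^{g} \;=\; \frac{1}{2} + \frac{1}{2\|\mb Y\|_1}\sum_{i=1}^n Y_i X_{ij} X_{ik},
\]
which is what was to be shown. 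There is no real obstacle here: the only thing that could go wrong is the well-definedness of $g$, which is handled by the range assumption on $X_{ij}$. Everything else is algebraic unpacking of previously established identities.
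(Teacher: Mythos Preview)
Your proof is correct and follows essentially the same approach as the paper: compute $\E(\tilde{X}_{ij})=2g(X_{ij})-1$, equate to $X_{ij}$, and solve. The paper's proof in fact only writes out the first claim explicitly; your derivation of the second claim by substituting $1-2g(X_{ij})=-X_{ij}$ into Proposition~\ref{lem:transform} is exactly the intended (but omitted) step.
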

Proposition \ref{prop:unbiased} shows that $\gamma_{jk}^g$ is a monotone function of the inner product $\sum_{i=1}^n Y_i X_{ij}X_{ik}$.

We remark that if the entries  of $\mb X$ do not lie in $[-1, 1]$, we may divide each entry in the $i$th row by $\nu_i := \max_j |X_{ij}|$, and multiply $Y_i$ by $\nu_i^2$, for each $i$. Proposition~\ref{prop:unbiased}  will then hold for the scaled versions of $\mb Y$ and $\mb X$.
In order to describe the performance of the unbiased transform when applied to data generated by the model \eqref{eq:inter_mod}, we define the following quantities:
\[
\mathbb{E}(|X_{ij^*}X_{ik^*}|) = m_1, \,\,\, \mathbb{E}(X_{ij^*}^2X_{ik^*}^2) = m_2 \, \text{ and } \mathbb{E}(|\varepsilon_i|) = m_\varepsilon.
\]
We consider an asymptotic regime where $p=p_n$ may diverge as $n$ tends to infinity, though we suppress this in the notation.
We introduce the following assumptions.
\begin{itemize}
\item[(B1)] $m_2 ( r_u-1) \leq \mathbb{E}(X_{ij^*} X_{ik*} X_{ij} X_{ik} ) \leq m_2 (1- r_u)$, for $r_u \in (0,1)$ and $\forall \, j,k \, \in \, \{1,\ldots,p \}^2$.
\item[(B2)] The noise level satisfies the bound
\[
\frac{1}{1- r_u} > 1+ \frac{m_\epsilon}{m_1}.
\]
\item[(B3)] Let $p$ be such that be such that
\[
\frac{\log(n)\log(p)}{n} \overset{n \to \infty}{\to} 0.
\]
\end{itemize}
(B1)
ensures non-interactions are not too strongly correlated to the actual interaction pair $(j^*, k^*)$.
Note that (B3) allows for high-dimensional settings with $p \gg n$.
\begin{thm}
\label{thm:unbiased}
Assume all entries of $\mb X$ have mean zero and lie in $[-1,1]$ almost surely. Further assume (B1)--(B3) hold. When $M$ and $L$ are as in Corollary~\ref{cor:generalization} and the unbiased transform is used, we have
\[
C(M,L) = o_{\mathbb{P}} \Big (np^{1+\delta +\frac{\log(1/2+m_2/2(m_1  +m_\varepsilon))}{\log(1/2+m_2(1-r_u)/2m_1 )}} \Big)
\]
for any $\delta >0$. Here $\mathbb{P}$ is with respect to the randomness in $\mb X$ and $\mbb\varepsilon$.
\end{thm}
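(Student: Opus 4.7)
The plan is to apply Corollary~\ref{cor:generalization} by deriving high-probability (over the randomness in $\mb X$ and $\mbb\varepsilon$) deterministic bounds on the strength $\gamma_{j^*k^*}^g$ of the true interaction and on $\max_{(j,k)\neq(j^*,k^*)} \gamma_{jk}^g$ for the bulk, and then exploiting continuity of $\gamma\mapsto\log\gamma$ to absorb the resulting slack into the $\delta$ in the stated exponent.

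First I would use Proposition~\ref{prop:unbiased} to write
\[
\gamma_{jk}^g = \tfrac{1}{2} + \tfrac{1}{2\|\mb Y\|_1}\sum_{i=1}^n Y_i X_{ij}X_{ik}
\]
and substitute $Y_i = X_{ij^*}X_{ik^*} + \varepsilon_i$. The denominator $n^{-1}\|\mb Y\|_1$ concentrates via a sub-exponential tail bound to $q := \mathbb{E}|X_{ij^*}X_{ik^*} + \varepsilon_i|$, and by a Jensen argument using symmetry and independence of $\varepsilon_i$, $m_1 \leq q \leq m_1 + m_\varepsilon$. For the signal pair the two numerators satisfy $n^{-1}\sum_i X_{ij^*}^2X_{ik^*}^2 \to m_2$ and $n^{-1}\sum_i \varepsilon_i X_{ij^*}X_{ik^*} \to 0$. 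For each of the $O(p^2)$ remaining pairs I would apply Hoeffding to $n^{-1}\sum_i X_{ij^*}X_{ik^*}X_{ij}X_{ik}$ (bounded product) and a Bernstein-type inequality to $n^{-1}\sum_i \varepsilon_i X_{ij}X_{ik}$ (sub-exponential), then union-bound over $p^2$ pairs. Under (B3) the uniform deviation is at most $C\sqrt{(\log n)(\log p)/n} = o(1)$. Combined with (B1), which bounds the means of the cross terms in $[-m_2(1-r_u), m_2(1-r_u)]$, this yields on an event $A_n$ with $\mathbb{P}(A_n)\to 1$,
\[
\gamma_{j^*k^*}^g \geq \tfrac{1}{2} + \tfrac{m_2}{2(m_1+m_\varepsilon)} - \epsilon_n, \qquad \max_{(j,k)\neq(j^*,k^*)} \gamma_{jk}^g \leq \tfrac{1}{2} + \tfrac{m_2(1-r_u)}{2m_1} + \epsilon_n,
\]
with $\epsilon_n = o(1)$. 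Assumption (B2) is precisely the statement that the two limits are strictly ordered, so both inequalities hold with room to spare.

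To conclude, fix $\delta > 0$ and pick $\epsilon > 0$ so small that replacing each of the two limits by its $\epsilon$-perturbation changes $1 + \log(\cdot)/\log(\cdot)$ by less than $\delta$; this is possible by continuity and (B2). Set $\gamma = \tfrac{1}{2} + \tfrac{m_2}{2(m_1+m_\varepsilon)} - \epsilon$ and $M = \lceil -\log p /\log(\tfrac{1}{2} + \tfrac{m_2(1-r_u)}{2m_1} + \epsilon)\rceil$, so that the $\gamma_0 = p^{-1/M}$ of Corollary~\ref{cor:generalization} is at most the upper bound above. On $A_n$, only $(j^*,k^*)$ exceeds $\gamma$ and no pair exceeds $\gamma_0$, so $c_1, c_2 = o(1)$, and (B3) gives $n \gg \log p$, so the corollary's bound reads $C(M,L) \leq C\, np^{1 + \log\gamma / \log\gamma_0}$. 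By the choice of $\epsilon$ this is $o(np^{1+\delta+ \log(1/2+m_2/2(m_1+m_\varepsilon))/\log(1/2+m_2(1-r_u)/2m_1)})$. Since it holds on $A_n$ with $\mathbb{P}(A_n)\to 1$, the $o_{\mathbb{P}}$ conclusion follows.

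The main obstacle is the uniform concentration step across $p^2$ pairs. The cross terms $\sum_i \varepsilon_i X_{ij}X_{ik}$ involve sub-exponential noise, so Bernstein (rather than Hoeffding) is required, and the Orlicz norm of the summands must be tracked through the union bound so that the resulting rate matches what (B3) supplies. A secondary subtlety is the integer rounding of $M$, which introduces an $O(1/\log p)$ perturbation of the exponent that is readily absorbed into $\delta$.
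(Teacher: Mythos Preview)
Your proposal is correct and follows essentially the same route as the paper: the paper packages the concentration step into a separate lemma (Lemma~\ref{lem:unbiasedhelper}), which bounds numerator and denominator of $\gamma_{jk}^g$ separately via truncation of $\varepsilon_i$ followed by Hoeffding and a union bound over the $p^2$ pairs, then feeds the resulting high-probability bounds into Corollary~\ref{cor:generalization} exactly as you do. Your use of Bernstein in place of truncation-plus-Hoeffding is a minor technical variation, and your handling of the $\delta$ slack and the integer rounding of $M$ matches the paper's (implicit) treatment.
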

Though the run time above can often improve significantly on the worst-case quadratic run time, observe that unlike in the binary case, if there is no noise and $Y_i = X_{ij^*}X_{ik^*}$, we do not necessarily have a run time close to linear in $p$. For example, when $X_{ij} \overset{iid}{\sim} \textrm{Uniform}(-1,1)$, the interaction strength of the true interaction can be shown to equal to
\begin{equation*}
\gamma_{j^*k^*}^g =\frac{1}{2}+\frac{\sum_{i=1}^n  Y_i X_{ij^*}X_{ik^*}}{2 \| \mb Y \|_1}=\frac{1}{2}+\frac{\|\mb Y\|_2^2}{2\| \mb Y \|_1} \overset{n \rightarrow \infty}{=} \frac{13}{18}.
\end{equation*}
Substituting this into the run time given by Theorem \ref{thm:minsamp}, this would result in an expected complexity of roughly $\mathcal{O}(np^{1.47})$; this is still substantially smaller than a quadratic run time, but raises the question as to whether such a loss in speed is avoidable.

Additionally, if $\mb X$ has several outlying entries, normalising the design matrix by scaling by the row-wise maximums can  shrink $\gamma_{j^*k^*}^g$ towards $1/2$. To limit the impact of this normalisation, we can first cap the entries of $\mb X$ so their absolute value is bounded by some $c >0$. Though the resulting interaction strength will not have the form given in Proposition~\ref{prop:unbiased}, it may better discriminate between interactions of interest and noise.

Capping with $c=1$ is closely related to applying the sign transform, which we study next.


\subsection*{The sign transform}
We now consider the \emph{sign transform} given by $\tilde{X}_{ij}=\sgn(X_{ij})$; if there are zero cases we use a coin toss to map them to $\{-1,1\}$. For the sign transform we have $g(X_{ij})=2 \, \sgn(X_{ij})-1$ and so the interaction strength is given as:
\begin{equation*}
\mathbb{P}(\sgn(Y_{i_s})=\tilde{X}_{i_s j}\tilde{X}_{i_s k})=\gamma_{jk}^g=\frac{1}{2}+\frac{1}{2 \|\mb Y\|_1} \sum_{i=1}^n Y_i \sgn(X_{ij}) \sgn(X_{ik}).
\end{equation*}
The sign transform recovers the close to linear run time achieved in the binary case when a interaction is perfect as now if $Y_i = X_{ij^*}X_{ik^*}$, we have $\gamma^g_{j^*k^*}=1$. Also the sign transform is not adversely affected by the presence of outlying entries in $\mb X$, and for our theory we can relax the assumption that the entries of $\mb X$ are in $[-1,1]$ to here only requiring that they have a subexponential distribution. To facilitate comparison with the unbiased transform, we impose assumptions analogous to (B1)--(B3):
\begin{itemize}
\item[(C1)] $r_s/2 \leq \mathbb{P}(X_{ij} < 0| X_{ik} , X_{ij^*}, X_{ik^*} ) \leq 1-r_s/2$, for $r_s \in (0,1)$ and $\forall \, j,k \, \in \, \{1,...,p \}^2$.
\item[(C2)] The noise level satisfies
\[
\frac{1}{1-r_s} > 1+ \frac{m_\epsilon}{m_1}.
\]
\item[(C3)] Let $p$ be such that
\[
\frac{\log(p)^5}{n} \overset{n \to \infty}{\to} 0.
\]
\end{itemize}
\begin{thm} \label{thm:signdist}
Suppose that each entry of $\mb X$ has a mean-zero subexponential distribution. Further assume (C1)--(C3). When $M$ and $L$ are as in Corollary~\ref{cor:generalization} and the sign transform is used, we have
\[
C(M,L)=o_{\mathbb{P}} \Big(np^{1+\delta + \frac{\log(1/2+m_1/2(m_1+m_\varepsilon))}{\log(1-r_s)}} \Big)
\]
for any $\delta > 0$.
Here $\mathbb{P}$ is with respect to the randomness in $\mb X$ and $\varepsilon$.
\end{thm}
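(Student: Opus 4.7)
The plan is to apply Corollary~\ref{cor:generalization}, choosing the interaction threshold $\gamma$ just below the probability limit of $\gamma^g_{j^*k^*}$ and the subsampling parameter $M$ (equivalently $\gamma_0 = p^{-1/M}$) just above a uniform high-probability upper bound on the interaction strengths of all non-interaction pairs. The overall structure mirrors that used for Theorem~\ref{thm:unbiased}, the main new ingredient being sub-Weibull concentration in place of sub-Gaussian.

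For the signal pair, decompose $Y_i\sgn(X_{ij^*})\sgn(X_{ik^*}) = |X_{ij^*}X_{ik^*}| + \varepsilon_i\sgn(X_{ij^*}X_{ik^*})$. By independence of $\varepsilon_i$ from $\mb X$ and symmetry of $\varepsilon_i$, the second summand is mean-zero sub-exponential, so a standard Bernstein inequality gives $n^{-1}\sum_i Y_i\sgn(X_{ij^*})\sgn(X_{ik^*}) = m_1 + o_\pr(1)$. Combined with the triangle-inequality bound $n^{-1}\|\mb Y\|_1 \le m_1+m_\varepsilon + o_\pr(1)$, this yields
\[
\gamma^g_{j^*k^*} \ge \tfrac12 + \tfrac{m_1}{2(m_1+m_\varepsilon)} - o_\pr(1),
\]
so we may take $\gamma$ arbitrarily close to $\tfrac12 + m_1/(2(m_1+m_\varepsilon))$ from below, producing $c_1 = o(1)$.

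For any $(j,k) \ne (j^*,k^*)$, independence and zero mean of $\varepsilon_i$ collapse $\mathbb{E}[Y_i\sgn(X_{ij})\sgn(X_{ik})]$ to $\mathbb{E}[X_{ij^*}X_{ik^*}\sgn(X_{ij})\sgn(X_{ik})]$, and iterated conditioning on the three remaining column values combined with (C1) applied to the one not in $\{j^*,k^*\}$ bounds this in magnitude by $(1-r_s)m_1$. Pairing this with the denominator bound and applying a sub-Weibull Bernstein inequality pairwise, with a union bound over all $p^2$ pairs, gives a uniform upper bound $\max_{(j,k)\neq(j^*,k^*)} \gamma^g_{jk} \le 1-r_s + o_\pr(1)$. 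Assumption (C2), equivalent to $1-r_s < m_1/(m_1+m_\varepsilon)$, ensures $\gamma_0 = p^{-1/M}$ can be chosen just above $1-r_s$ with $\gamma_0 < \gamma$ strictly. With high probability only the two symmetric pairs $(j^*,k^*),(k^*,j^*)$ exceed $\gamma_0$, giving $1-F_{\Gamma^g}(\gamma_0) \le 2/p^2$ and hence $c_2=o(1)$; Corollary~\ref{cor:generalization} then yields the claimed bound, with the arbitrary $\delta>0$ absorbing the small perturbations in $\gamma$ and $\gamma_0$.

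The principal technical obstacle is the uniform concentration used in the preceding paragraph. Each summand $Y_i\sgn(X_{ij})\sgn(X_{ik})$ is bounded by $|Y_i|$, and $Y_i = X_{ij^*}X_{ik^*}+\varepsilon_i$ is a sum of a product of two sub-exponentials and a sub-exponential noise, hence only sub-Weibull with shape parameter $1/2$. Bernstein-type inequalities for such sums incur polylogarithmic tail factors, so controlling all $p^2$ pairs simultaneously with sufficient precision requires the comparatively strong assumption (C3), $\log^5 p/n \to 0$, to guarantee that those factors are dominated by the constant gap between the uniform non-interaction bound and $\gamma_0$.
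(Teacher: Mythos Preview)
Your overall architecture matches the paper's: establish high-probability two-sided control on the sign-transform interaction strengths (a lower bound for the true pair, a uniform upper bound for all others), then invoke Corollary~\ref{cor:generalization} with $\gamma$ and $\gamma_0$ chosen just inside those bounds, absorbing the slack into the arbitrary $\delta>0$. The paper executes this by first proving Lemma~\ref{lem:signhelper} via a truncation-plus-Hoeffding argument (cap $X_{ij}$ and $\varepsilon_i$ at levels $M,\sigma$, apply Hoeffding to the bounded surrogates, then choose $M,\sigma$ from the sub-exponential tails so that the capping event has probability $1-e^{-t}$); the $M^4$ term arising from the product $X_{ij^*}X_{ik^*}$ in the Hoeffding range is precisely what produces the $\log^5 p/n$ requirement (C3). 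You instead appeal directly to sub-Weibull($1/2$) Bernstein bounds, which is a legitimate and somewhat cleaner alternative that yields the same polylogarithmic loss and likewise explains (C3). Your conditioning argument using (C1) to bound $\mathbb{E}[X_{ij^*}X_{ik^*}\sgn(X_{ij}X_{ik})]$ by $(1-r_s)m_1$ is exactly the computation the paper does inside Lemma~\ref{lem:signhelper}.

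Two small points to tighten. First, for the non-interaction bound you need a \emph{lower} bound on $n^{-1}\|\mb Y\|_1$, not the upper bound you quoted; the paper obtains $n^{-1}\|\mb Y\|_1 \ge m_1 - o_{\pr}(1)$ in Lemma~\ref{lem:signhelper}, and you should state the analogue explicitly. Second, with that denominator bound the resulting uniform upper bound on $\gamma^g_{jk}$ is $\tfrac12 + \tfrac{1-r_s}{2} = 1-r_s/2$, not $1-r_s$ as you wrote; this is the value $\gamma_0$ must exceed, and it matches the form $\log\{1/2+(1-r_s)/2\}$ given in the paper's comparison of $\alpha_s$ immediately after the theorem (the $\log(1-r_s)$ in the theorem display appears to be a typographical slip). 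Neither issue affects the validity of your strategy.
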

Both transforms yield a run time of the form $o_{\pr}(np^{\alpha})$. Comparing the exponents $\alpha$ we have:
\begin{itemize}
\item[$ $] unbiased transform:
\[
 \alpha_u = 1+\frac{\log(1/2+m_2/2(m_1  +m_\varepsilon))}{\log(1/2+m_2(1-r_u)/2m_1 )}
\]
\item[$ $] sign transform:
\[
\alpha_s= 1+\frac{\log(1/2+m_1/2(m_1+m_\varepsilon))}{\log(1/2+(1-r_s)/2)}.
\]
\end{itemize}
For bounded data $\mb X \in [-1,1]^{n \times p}$ and when
$m_\varepsilon \ll m_1$,  we have $m_1/2(m_1+m_\varepsilon) \approx 1/2$ so that $\alpha_s = 1$ whereas $\alpha_u > 1$. Hence in case of a strong signal the sign transform can give a smaller run time than the unbiased transform.

\section{Application to Lasso regression}
\label{sec:regression}
Thus far we have only considered the simple version of the interaction search problem \eqref{eq:inter_search1} involving finding pairs of variables whose interaction has a large dot product with $\mb Y$. In this section we show how any solution to this, and in particular the \emph{xyz} algorithm, may be used to fit the Lasso \citep{tibshirani96regression} to all main effects and pairwise interactions in an efficient fashion.

Given a response $\mb Y \in \R^n$ and a matrix of predictors $\mb X \in \R^{n \times p}$, let $\mb W \in \R^{n \times p(p+1)/2}$ be the matrix of interactions defined by
\[
\mb W = (\matr{X}_1 \circ \matr{X}_1, \matr{X}_1 \circ \matr{X}_2, \cdots, \matr{X}_1 \circ \matr{X}_p, \matr{X}_2 \circ \matr{X}_2, \matr{X}_2 \circ \matr{X}_3, \cdots, \matr{X}_p \circ \matr{X}_p).
\]
We will assume that $\mb Y$ and the columns of $\mb X$ have been centred. Note that the centring of $\mb X$ means the $\mb W$ implicitly contains main effects terms. Let $\tilde{\mb W}$ be a version of $\mb W$ with centred columns.
Consider the Lasso objective function
\begin{equation} \label{eq:Lasso_obj}
(\hat{\bm{\beta}},\hat{\bm{\theta}})=\underset{\bm{\beta} \in \mathbb{R}^p, \bm{\theta} \in \mathbb{R}^{p(p+1)/2}}{\operatorname{argmin}}  \bigg\{\frac{1}{2n}\|\mb Y-\matr{X}\bm{\beta} -\tilde{\mb W}\bm{\theta}\|_2^2 +\lambda(\|\bm{\beta}\|_1+\|\bm{\theta}\|_1)\bigg\}.
\end{equation}
Note that since the entire design matrix in the above is column-centred, any intercept term would always be zero.

In order to avoid a cost of $\mathcal{O}(np^2)$ it is necessary to avoid explicitly computing $\mb W$.
To describe our approach, we first review in Algorithm~\ref{alg:active} the active set strategy employed by several of the fastest Lasso solvers such as \texttt{glmnet} \citep{friedman2010regularization}. We use the notation that for a matrix $\mb M$ and a set of column indices $H$, $\mb M_H$ is the submatrix of $\mb M$ formed from those columns indexed by $H$. Similarly for a vector $\mb v$ and component indices $H$, $\mb v_H$ is the subvector of $\mb v$ formed from the components of $\mb v$ indexed by $H$.

\begin{algorithm}
\caption{\label{alg:active}Active set strategy for Lasso computation}
\begin{algorithmic}[1]
\Statex \textbf{Input}: $\matr{X}$, $\mb Y$ and grid of $\lambda$ values $\lambda_1 > \cdots > \lambda_L$.
\Statex \textbf{Output}: Lasso solutions $\hat{\mbb \beta}_{\lambda_l}$ and $\hat{\mbb \theta}_{\lambda_l}$ at each $\mb \lambda$ on the grid.
\For{$l \in \{1,\ldots,L\}$ }
\State If $l=1$ set $A, B=\emptyset$; otherwise set $A=\{k:\hat{\beta}_{\lambda_{l-1},k}\neq 0\}$ and $B=\{k:\hat{\theta}_{\lambda_{l-1},k}\neq 0\}$.
\State Compute the Lasso solution $(\hat{\mbb\beta}, \hat{\mbb\theta})$ when $\lambda=\lambda_l$ under the additional constraint that $\hat{\mbb\beta}_{A^c}=0$ and $\hat{\mbb\theta}_{B^c}=0$.
\State Let $U=\{k:|\mb X_k^T(\mb Y-\mb X_A\hat{\mbb\beta}_A - \tilde{\mb W}_B\hat{\mbb \theta}_B)|/n >\lambda_l\}$ and $V=\{k:|\tilde{\mb W}_k^T(\mb Y-\mb X_A\hat{\mbb\beta}_A -\tilde{\mb W}_B\hat{\mbb \theta}_B)|/n >\lambda_l\}$ be the set of coordinates that violate the KKT conditions when $(\hat{\mbb \beta}, \hat{\mbb\theta})$ is taken as a candidate solution.
\State If $U$ and $V$ are empty, we set $\hat{\mbb\beta}_{\lambda_l} = \hat{\mbb\beta}$, $\hat{\mbb\theta}_{\lambda_l}=\hat{\mbb\theta}$. Else we update $A = A \cup U$ and $B=B \cup V$ and return to line 3.
\EndFor
\end{algorithmic}
\end{algorithm}

As the sets $A$ and $B$ would be small, computation of the Lasso solution in line 3 is not too expensive. Instead line 4, which performs a check of the Karush--Kuhn--Tucker (KKT) conditions involving dot products of all interaction terms and the residuals, is the computational bottleneck: a naive approach would incur a cost of $\mathcal{O}(np^2)$ at this stage.

There is however a clear similarity between the KKT conditions check for the interactions and the simple interaction search problem \eqref{eq:inter_search1}. Indeed the computation of $V$, the set containing all interactions that violate the KKT conditions, may be expressed in the following way:
\begin{gather} \label{eq:inter_search2}
\text{Keep all pairs } (j,k) \text{ for which } |(\mb Y - \mb X_A \hat{\mbb \beta}_A - \tilde{\mb W}_B\hat{\mbb \theta}_B)^T(\matr{X}_j \circ \matr{X}_k)/n| > \lambda_l.
\end{gather}
Note that since $\mb Y - \mb X_A \hat{\bm{\beta}}_A - \tilde{\mb W}_B\hat{\bm{\theta}}_B$ is necessarily centered, there is no need to center the interactions in \eqref{eq:inter_search2}.
In order to solve \eqref{eq:inter_search2} we can use the \emph{xyz} algorithm, setting $\gamma$ in Algorithm~\ref{alg:final_xyz} to $\lambda_l$ and $\mb Y$ to each of $\pm(\mb Y -\mb X_A \hat{\bm{\beta}}_A- \tilde{\mb W}_B\hat{\mbb \theta}_B)$ in turn.

Precisely the same strategy of performing KKT condition checks using \emph{xyz} can be used to accelerate computation for interaction modeling for a variety of variants of the Lasso such as the elastic net \citep{zou05regularization} and $\ell_1$-penalised generalised linear models. Note also that it is straightforward to use a different scaling for the penalty on the interaction coefficients in \eqref{eq:Lasso_obj}, which may be helpful in practice.

\section{Experiments}
\label{sec:experi}
To test the algorithm and theory developed in the previous sections, we run a sequence of experiments on real and simulated data.

\subsection{Comparison of minimal subsampling and dense projections}
\label{sec:gaussexp}
One of the surprising outcomes of our theoretical analysis is extent of the suboptimality of Gaussian random projections, which whilst they suffice for the conclusion of the Johnson--Lindenstrauss Lemma, are not well-suited for our purposes here (see Theorem~\ref{thm:optim}). We can explicitly compute the probability of retaining an interaction of strength $\gamma$ in $E_1$ for both dense Gaussian projections $\xi_{Gauss}$ and minimal subsampling $\xi_{minimal}$ given an equal computational budget. We consider various values of $p$ ranging from $10$ up to $10^6$ and we fix $n=1000$. We set $L=1$ and select other parameters of the algorithms to ensure the average size of $E_1$ is equal to $p$ in the setting when all interaction strengths are equal to 0.5. Specifically we make the following choices.
\begin{itemize}
\item $\xi_{Gauss}$: the close pairs threshold $\tau \geq 0$ is the $1/p$--quantile of the distribution of $|W|$ when~$W\sim N(0,0.5n)$.
\item $\xi_{minimal}$: the subsample size $M=\lceil\log(1/p)/\log(0.5)\rceil$.
\end{itemize}
We then plot the probability $\eta$ of discovering an interaction of strength $\gamma$, as a function of $\gamma$ for different values of $p$ (Figure \ref{fig:asympruntt}). For $\xi_{minimal}$, $\eta$ is given in equation~(\ref{eq:eta}). For $\xi_{Gauss}$, $\eta$ is the $1/p$--quantile of the distribution of $|W|$ when $W \sim N(0,n(1-\gamma))$.

\subsection{Scaling}

In this experiment we test how the \emph{xyz} algorithm scales on a simple test example as we increase the dimension $p$. We generate data $\matr{X} \in \mathbb{R}^{n \times p}$ with each entry sampled independently uniformly from $\{-1,1\}$. We do this for different values of $p$, ranging from $1000$ to $30\,000$: this way for the largest $p$ considered there are more than $400$ million possible interactions. Then for each $\mb X$ we construct  response vectors $\mb Y$ such that only the pair $(1,2)$ is a strong interaction with an interaction strength taking values in $\{0.7,0.8,0.9\}$. Through this construction, if $n$ is large enough, all the pairs except $(1,2)$ will have an interaction strength around $0.5$, and very few will have one above $0.55$. We thus set $M$ so that $\gamma_0=p^{-1/M} \approx 0.55$. Since the only strong interaction is $(1,2)$, we set $\gamma=\gamma_{12}$
Each data set configuration determined by $p$ and $\gamma_{12}$ is simulated $300$ times and we measure the time it takes \emph{xyz} to find the pair $(1,2)$. In Figure~\ref{fig:asympruntt} we plot the average run time against the dimension $p$ with the different choices for $\gamma_{12}$ highlighted in different colours.

Theorem~\ref{thm:minsamp} indicates that the run time should be of the order $np^{1+\log(\gamma)/\log(\gamma_0)}$. We see that the experimental results here are in close agreement with this prediction.

\begin{figure}[!ht]
\begin{center}
\hspace{-1.3cm}
\includegraphics[scale=0.9]{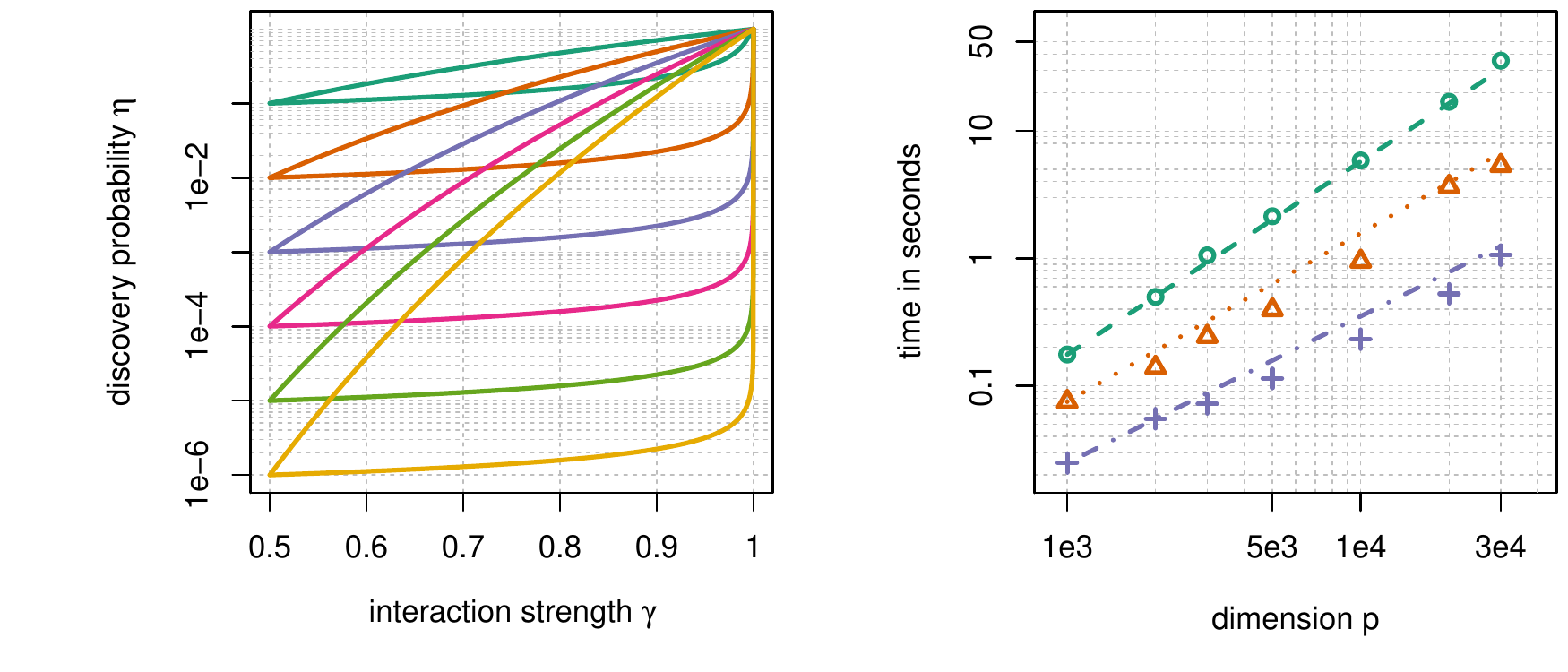}
\end{center}
\vspace{-0.7cm}
\caption{\label{fig:asympruntt} Left panel: Discovery probability as a function of $\gamma$ for different values of $p \in \{10^{1},\ldots,10^{6}\}$ (colours decreasing in $p$ from yellow $p=10^6$ to green $p=10$). The lower lines correspond to the dense Gaussian projections, the upper lines to minimal subsampling. It can be seen that the discovery probability for minimal subsampling is much higher (up to factor $10^4$) than for Gaussian projections. Right panel: Time to discover the interaction pair as a function of the data set dimension $p$. Lines correspond to the theoretical prediction (with the intercept chosen based on the data points) and symbols give the actual measured run time. Colour coding: green $\gamma=0.7$, orange $\gamma=0.8$ and purple $\gamma=0.9$.}
\end{figure}

\subsection{Run on SNP data}
\label{absolrun time}

In the next experiment we compare the performance of \emph{xyz} to its closest competitors on a real data set. For each method we measure the time it takes to discover strong interactions. We consider the LURIC data set \citep{Winkelmann2001}, which contains data of patients that were hospitalised for coronary angiography. We use a preprocessed version of the data set that is made up of $n=859$ observations and $687\,253$ predictors. The data set is binary. The response $\mb Y$ indicates coronary disease ($1$ corresponding to affected and $-1$ healthy) and $\matr{X}$ contains Single Nucleotide Polymorphisms (SNPs) which are variations of base-pairs on DNA. The response vector $\mb Y$ is strongly unbalanced: there are $681$ affected cases ($Y_i=1$) and $178$ unaffected ($Y_i=-1$).

To get a contrast of the performance of \emph{xyz} we compare it to \emph{epiq} \citep{Arkin2014}, another method for fast high-dimensional interaction search. In order for \emph{epiq} to detect interactions it needs to assume the model
\begin{equation}
\label{epiqmodel}
Y_i=\alpha_{j^*k^*} X_{ij^*}X_{ik^*} +\varepsilon_i,
\end{equation}
where $\varepsilon_i \sim \mathcal{N}(0,\sigma^2)$. It then searches for interactions by considering the test statistics
\begin{equation*}
T_{jk}=(\mb R^T (\mb Y \circ \matr{X}_j)) (\mb R^T \matr{X}_k)
\end{equation*}
where $\mb R \sim \mathcal{N}(\mb 0,\mb I)$. These are used to try to find the pair $(j^*,k^*)$, which is assumed to be the pair for which the inner product $\mb Y^T(\matr{X}_j\circ\matr{X}_k)$ is maximal. It is an easy calculation to show that $\mathbb{E}(T_{jk})=\mb Y^T(\matr{X}_j\circ\matr{X}_k)$. To maximise the inner product on the right, \emph{epiq} considers pairs where $T_{jk}^2$ is large by looking at pairs where both $(\mb R^T (\mb Y \circ \matr{X}_j))^2$ and $(\matr{R}^T \matr{X}_k)^2$ are large. While the approach of \emph{epiq} is somewhat related to \emph{xyz}, there are no bounds available for the time it takes to find strong interactions.

We also compare both methods to a naive approach where we subsample a fixed number of interactions uniformly at random, and retain the strongest one. We refer to this as \emph{naive search}.

At fixed time intervals we check for the strongest interaction found so far with all three methods. We plot the interaction strength as a function of the computational time (Figure \ref{absrun time}). All three methods eventually discover interactions of very similar strength and it would be a hasty judgement to say whether one significantly outperforms the others. \emph{xyz} nevertheless discovers the strongest interactions on average for a fixed run time compared to the other two approaches. To get a clearer picture we run two additional experiments on a slight modification of the LURIC data set. We implant artificial interactions where we set the strength to $\gamma_{12}=0.8$ and another example with $\gamma_{12}=0.9$. In these two experiments \emph{xyz} clearly outperforms all other methods considered (Figure \ref{absrun time}; panels 3 and 4). Besides \emph{xyz} being the fastest at interaction search, it also offers a probabilistic guarantee that there are no strong interactions left in the data. This guarantee comes out of Theorem \ref{thm:minsamp}. To run \emph{xyz} we have to calculate the optimal subsample size (\ref{eq:bestM}) for use of minimal subsampling:
\begin{equation*}
M^*= \argmin{M \in \N} \bigg\{-\frac{1}{\log(1-\gamma^M)}\bigg( Mp + p \log(p) + n\sum_{j,k}\gamma_{jk}^M\bigg)\bigg\} =21.
\end{equation*}
The sum in this optimisation can be approximated by uniformly sampling over pairs.
Assume we have an interaction pair $(j^*,k^*)$ with interaction strength $\gamma_{j^*k^*}=0.85$ and say the rest of the pairs $(j,k)$ have an interaction strength of no more than $\gamma_{jk} \leq 0.55$. The probability that we discover this pair in one run ($L=1$) of the \emph{xyz} algorithm is $\gamma_{j^*k^*}^{21}$. Therefore the probability of missing this pair after $L=100$ runs is given by
\[(1-\gamma_{j^*k^*}^{21})^L \approx 0.03. \]
Note that the number of possible interactions is $p(p-1)/2 \approx 10^{11}$. The whole search took $280$ seconds. Naive search offers a similar guarantee, however it is extremely weak. The probability of not discovering the pair after drawing $pL$ samples (with $L=100$) is bounded by $[1-2/\{p(p-1)\}]^{Lp} \approx 0.999$.
\begin{figure}[!ht]
\begin{center}
\hspace{0.2cm}
\includegraphics[scale=1]{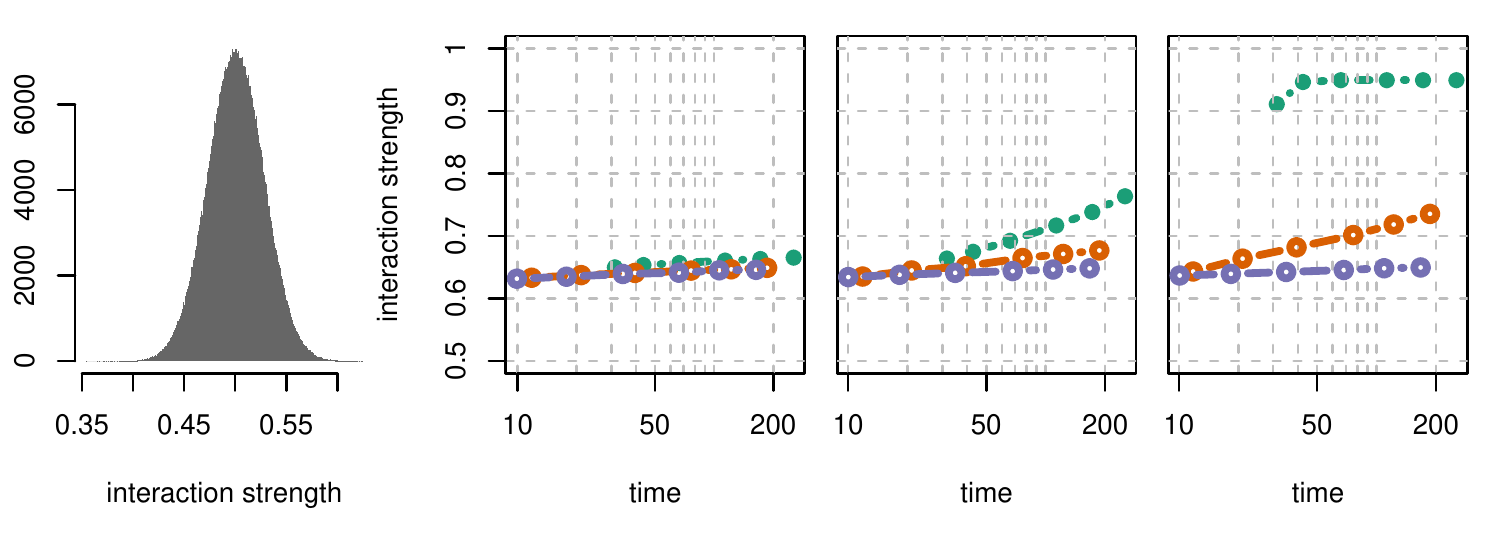}
\end{center}
\vspace{-0.6cm}
\caption{\label{absrun time} Left: Histogram of interaction strength of $10^6$ interaction pairs, sampled at random from the more than  $10^{11}$ existing pairs from the LURIC data set. The right three panels show the interaction strength of the discovered pairs as a function of the computation time for \emph{xyz} (green), \emph{epiq} (orange) and naive search (purple). The first panel gives results on the the original LURIC data set, and the second and third (rightmost) panels show results with an implanted interaction with strengths $\gamma_{12}=0.8$ and $\gamma_{12}=0.95$ respectively. It can be clearly seen that \emph{xyz} outperforms its competitors by a large margin.}
\end{figure}
If we consider the run time guarantee from Theorem \ref{thm:minsamp}, the dominating term in the complexity of \emph{xyz} in terms of $p$ is
\begin{equation*}
p^{1+\frac{\log(0.85)}{\log(0.55)}} \approx p^{1.27}.
\end{equation*}
This may be compared to the expected run time of order $p^2$ for naive search, which means that \emph{xyz} is about $30\,000$ times faster than naive search (when $p=687\,253$). In the empirical comparison this factor is around $20\,000$.

\subsection{Regression on artificial data}

In this section we demonstrate the capabilities of \emph{xyz} in interaction search for continuous data as explained in Section~\ref{sec:cont}. We simulate two different models of the form (\ref{eq:inter_mod}):
\begin{equation*}
Y_i = \mu + \sum_{j=1}^p X_{ij} \beta_j + \sum_{k=1}^p \sum_{j=1}^{k-1} X_{ij} X_{ik} \theta_{jk}  +\varepsilon_i.
\end{equation*}
We consider three settings. For all three settings we have $n=1000$. We let $p \in \{250,500,750,\\ 1000\}$. Each row of $\matr{X}$ is generated i.i.d. as $\mathcal{N}(\mb 0,\matr{\Sigma})$. The magnitudes of both the main and interaction effects are chosen uniformly from the interval $[2,6]$ ($20$ main effects and $10$ interaction effects) and we set $\varepsilon_i \sim \mathcal{N}(0,1)$. The three settings we consider are as follows.
\begin{enumerate}
\item $\matr{\Sigma} = \matr{I} \in \mathbb{R}^{p \times p}$, we generate a hierarchical model: $\theta_{jk} \neq 0$ $\Rightarrow$ $\beta_j \neq 0$ and $\beta_k \neq 0$. We first sample the main effects and then pick interaction effects uniformly from the pairs of main effects.
\item  $\matr{\Sigma} = \matr{I} \in \mathbb{R}^{p \times p}$, we generate a strictly non-hierarchical model: $\theta_{jk} \neq 0$ $\Rightarrow$ $\beta_j = 0$ and $\beta_k = 0$. We first sample the main effects and then pick interaction effects uniformly from all pairs excluding main effects as coordinates.
\item We repeat the setting $2$ with a data set that contains strong correlations. We create a dependence structure in $\matr{X}$, by first generating a DAG with on average $10$ edges per node. Each node is sampled so that it is a linear function of its parents plus some independent centred Gaussian noise, with a variance of $10\%$ the variance coming from the direct parents. The resulting correlation matrix then unveils for each variable $\matr{X}_j$ a substantial number of variables strongly correlated to $\matr{X}_j$ (There is usually around $10$ variables with a correlation of above $0.9$). Such a correlation structure will make it easier to detect pairs of variables whose product can serve as strong predictor of $\mb Y$, even though it has not been included in the construction of $\mb Y$.
\end{enumerate}
We run three different procedures to estimate the main and interaction effects.
\begin{itemize}
\item \textbf{Two-stage Lasso:}
We fit the Lasso to the data, and then run the Lasso once more on an augmented design matrix containing interactions between all selected main effects.
Complexity analysis of the Least Angle Regression (LARS) algorithm \citep{efron04least} suggests the computational cost would be $\mathcal{O}(np \min(n,p))$, making the procedure very efficient. However, as the results show, it struggles in situations such as that given by model 2, where a main effects regression will fail to select variables involved in strong interactions.
\item \textbf{Lasso with all interactions: } Building the full interaction matrix and computing the standard Lasso on this augmented data matrix.
Analysis of the LARS algorithm would suggest the computational complexity would be in the order $\mathcal{O}(np^2\min(n, p^2))$.
Nevertheless, for small $p$, this approach is feasible.
\item \textbf{xyz: } This is Algorithm~\ref{alg:active}; we set the parameter $L$ to be $\sqrt{p}$ in order to target the strong interactions.
\end{itemize}
The experiment (seen in Figure~\ref{fig:continuous}) shows that \emph{xyz} enjoys the favourable properties of both its competitors: it is as fast as the two-stage Lasso that gives an almost linear run time in $p$, and it is about as accurate as the estimator calculated from screening all pairs (brute-force).
\begin{figure}[!ht]

\begin{center}
\hspace{2.5cm}
\includegraphics[scale=0.85]{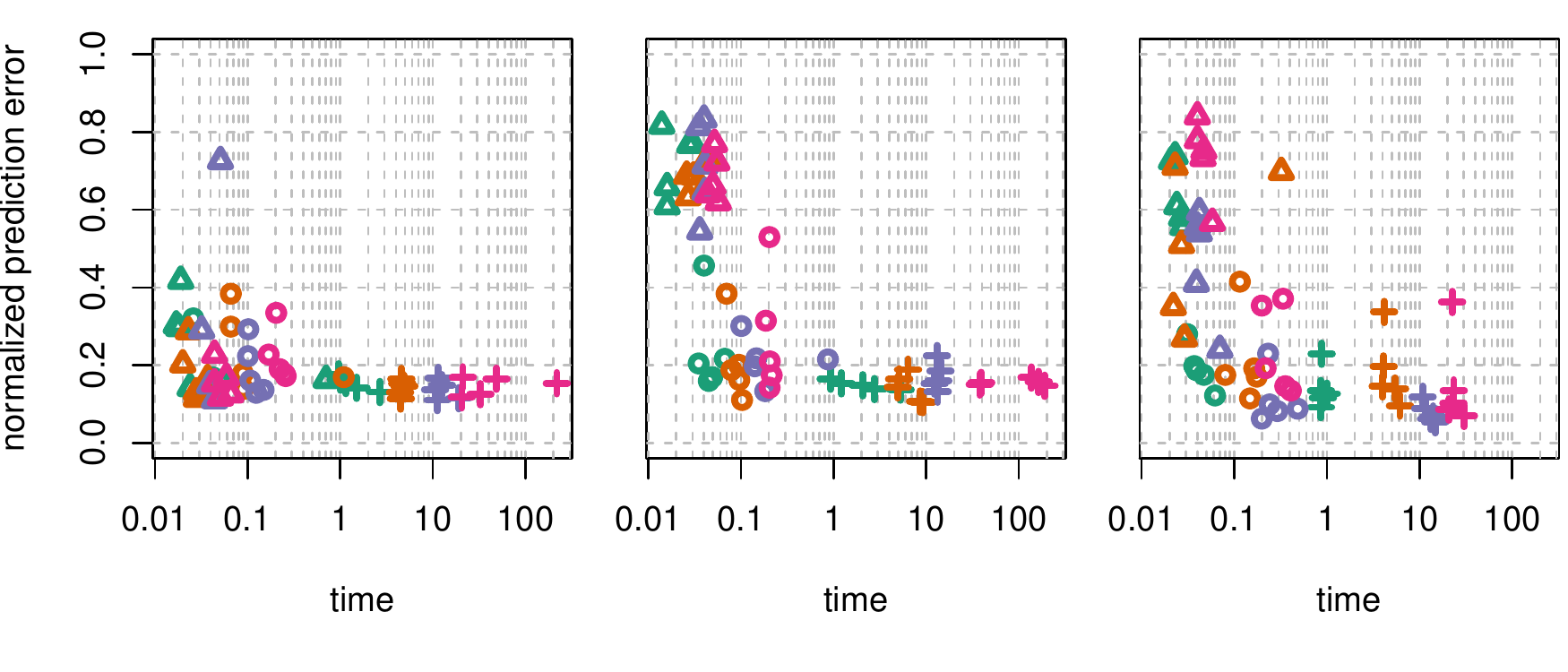}
\end{center}
\vspace{-0.9cm}
\caption{\label{fig:continuous} Normalised $\ell_2^2$ prediction error as a function of time in seconds. Triangle: Two-stage Lasso. Circle: \emph{xyz}-regression. Cross: Brute-force. The different colours correspond to different values of $p$: green $p=250$, orange $p=500$, purple $p=750$ and pink $p=1000$. The left panel shows the results on setting~$1$, center panel shows setting~$2$ and right panel setting~$3$.}
\end{figure}

\subsection{Regression on real data}
Here we run \textit{xyz} regression on continuous real data sets where the ground truth is unknown. On each data set we pick at random $p=2000$ variables and run \textit{xyz} and the Lasso implemented in \texttt{glmnet} with all interactions included. We subsample an increasing number of variables to vary the difficulty of the regression problem. For each sample we measure the run time and the normalized out of sample squared $\ell_2^2$ error: \[
\frac{\|\mb Y_{\text{test}} -\mb X_{\text{test}}\hat{\bm{\beta}}- \tilde{\mb W}_\text{test}\hat{\mbb \theta} \|_2^2}{\| \mb Y_{\text{test}} \|_2^2}.
\]
Experiments are run on the following three different data sets:
\begin{itemize}
\item \textbf{Riboflavin: } The Riboflavin production data set \citep{buhlmann14ribo} contains $n=71$ samples and $p=4088$ predictors (gene-expressions). The response $\mb Y$ and the design $\mb X$ are both continuous.
\item \textbf{Kemmeren: } The Kemmeren \citep{kemmeren2014} data set records knockouts of $p=6170$ genes. The data $\mb X$ is continuous. We sample $\mb Y$ randomly from the genes not present in the subsample taken from $\mb X$.
\item \textbf{Climate: } The climate data set from the CNRM model from the CMIP5 model ensemble \citep{knutti2013climate} simulates the temperature of points on the northern hemisphere which is recorded in $\mb X$. The response $\mb Y$ simulates the temperature on a random position on the southern hemisphere. The data contains $n=231$ observations.
\end{itemize}
\begin{figure}[!ht]
\begin{center}
\hspace{2.5cm}
\includegraphics[scale=0.85]{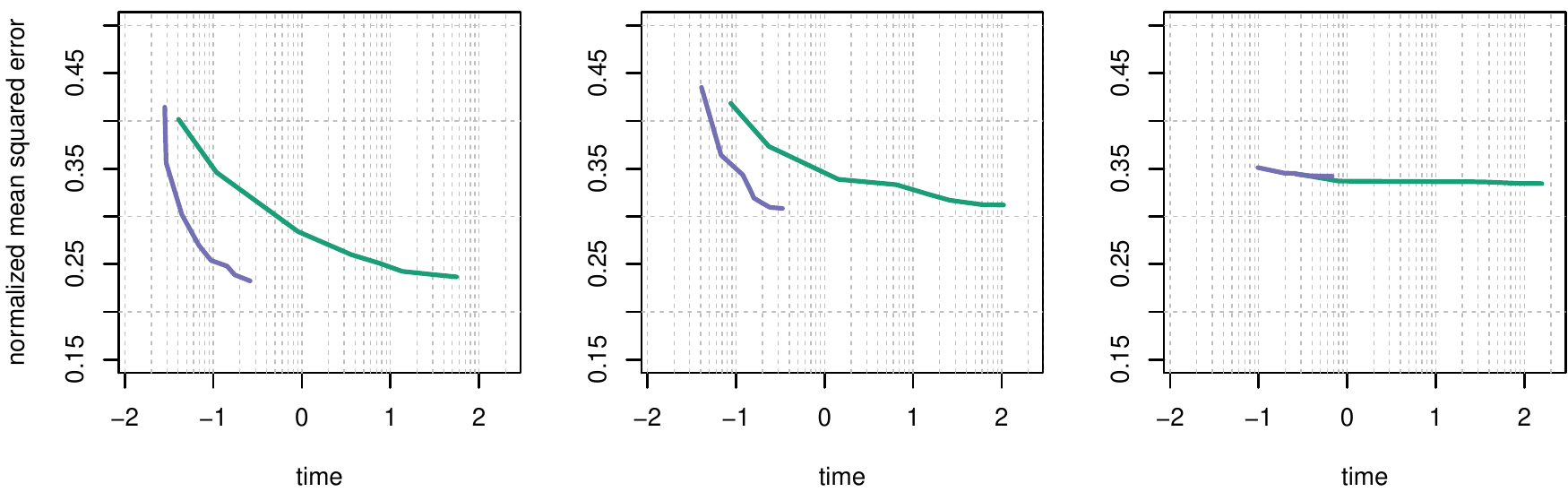}
\end{center}
\vspace{-0.4cm}
\caption{\label{fig:contcont} From left to right column the experiments correspond to Riboflavin, Kemmeren and Climate. The y-axis depicts the normalized squared error and the x-axis records the run time in seconds on the $\log_{10}$ scale. It can be seen that \emph{xyz} (purple) offers clear computational advantages while giving similar level of prediction error to the Lasso fitted to all interactions as implemented in \texttt{glmnet} (green).}
\end{figure}
For each experiment we fix the number of runs $L$ to $\sqrt{p}$ so the run time of \textit{xyz} is $\mathcal{O}(np^{1.5})$. The experiments show that the \textit{xyz} algorithm has a similar prediction performance to the Lasso applied to all interactions as implemented in \texttt{glmnet}. However \textit{xyz} is around $100$ times faster for $p=2000$. The results of all 6 experiments can be seen in Figure~\ref{fig:contcont}.

\section{Discussion} \label{sec:discuss}
In this work we exploited a relationship between closest pairs of point problems and interaction search. By solving the former problem using random projections to project points down to a one-dimensional space and then sorting the resulting projected points, we were able to produce an algorithm for interaction search that enjoys a run time that is sub quadratic under mild assumptions and when used to search for very strong interactions can be almost linear.
Though we have looked at interaction search in this paper, the basic engine for computing the large inner products between collections of vectors may have other interesting applications, for example in large-scale clustering problems. We hope to study such applications in future work.

%
%
%

\newpage

\appendix

\section*{Table of frequently used notation}

\begin{table}[htbp]
\begin{center}
\begin{tabular}{r c p{10cm} }
\hline \\
$n, p$ & $ $ & number of observations and number of variables \\
$\mb X, \mb Y$ & & predictor matrix and response vector\\
$\mb X_j$ & & $j$th variable / column of $\mb X$\\
$\mbb\beta, \mbb\theta$ & & coefficients of main effects and interaction effects\\
$\gamma_{jk}$ & & interaction strength of the pair $(j,k)$\\
$G$ & & distribution of projection\\
$M$ & & subsample size\\
$\mb R$ & & projection vector\\
$L$ & & number of projections\\
$\tau, \gamma$ & & close pairs threshold and interaction strength threshold\\
$\Xi$ & & set of all configurations of the \emph{xyz} algorithm, the elements of this set are denoted by $\xi$\\
$\eta$ & & probability that a given interaction is present in the output of the \emph{xyz} algorithm\\
$\tilde{\mb X}$ & &binarized version of $\mb X$ \\
$\mb W$ & & predictor matrix containing all possible interaction pairs\\
\hline
\end{tabular}
\end{center}
\label{tab:TableOfNotationForMyResearch}
\end{table}

\section*{Appendix A}

Here we include proofs that were omitted earlier.
\subsection*{Proof of Theorem~\ref{thm:optim}}
In the following, we fix the following notation for convenience:
\begin{gather*}
\Psi = \Xi_{\text{minimal}}, \qquad \Psi(\eta) = \Xi_{\text{minimal}}(\eta),\\
\Xi = \Xi_{\text{subsample}}, \qquad \Xi(\eta) = \Xi_{\text{subsample}}(\eta).
\end{gather*}
Note that both $\Psi(\eta)$ and $\Xi(\eta)$ depend on $F$ though this is suppressed in the notation.
Also define $\Xi_{\text{all}} = \Xi \cup \Xi_{\text{dense}}$ and $\Xi_{\text{all}}(\eta) = \Xi(\eta) \cup \Xi_{\text{dense}}(\eta)$.
 We will reference the parameters levels contained in $\xi \in \Xi_{\text{all}}$ as $\xi_L$ and $\xi_\tau$. If $\xi \in \Xi$ then we will write $\xi_M$ for the distribution of the subsample size $M$.

If we let $V$ denote the complexity of the search for $\tau$-close pairs, similarly to \eqref{eq:complexity} we have that
\begin{equation} \label{eq:complexity_gen}
T(\xi) = c_1 np + L(c_2 \E_\xi M p + \E_\xi V + c_3 n\E_\xi |E_1|),
\end{equation}
where $c_1, c_2, c_3$ are constants. Suppose $\psi \in \Psi$ and $\xi \in \Xi$ have $\E_\xi |E_1| = \E_\psi |E_1|$. Then since searching for $\tau$-close pairs is at least as computationally difficult as finding equal pairs we know that $\E_\xi V \geq \E_\psi V$.

Similarly for $\xi \in \Xi_{\text{dense}}$ we have
\begin{equation} \label{eq:complexity_dense}
T(\xi) = c_1 np + L(c_2 np + \E_\xi V + c_3 n\E_\xi |E_1|).
\end{equation}

For $\xi \in \Xi_{\text{all}}$, define
\begin{align*}
\alpha(\xi) = \E_\xi |E_1| / p^2, \qquad \beta(\xi) = \pr_\xi((j^*,k^*) \in I_1)
\end{align*}
where $I_1$ is the set of candidate interactions $I$ when $L=1$.
Note that
\[
\pr_\xi((j^*,k^*) \in I) = 1-\{1-\beta(\xi)\}^{\xi_L}.
\]
Thus any $\xi \in \Xi_{\text{all}}(\eta)$ with $T(\xi)$ minimal must have $\xi_L$ as the smallest $L$ such that $1-\{1-\beta(\xi)\}^{\xi_L} \geq \eta$, whence
\begin{equation} \label{eq:L_choice}
\xi_L = \ceil{\log(1-\eta)/\log\{1-\beta(\xi)\}}.
\end{equation}
Note that $\beta(\xi)$ does not depend on $\xi_L$, so the above equation completely determines the optimal choice of $L$ once other parameters have been fixed. We will therefore henceforth assume that $L$ has been chosen this way so that the discovery probability of all the algorithms is at least $\eta$.

The proofs of \eqref{eq:res_subquad} and \eqref{eq:res_dense} are contained in Lemmas~\ref{lem:subquad} and~\ref{lem:dense} respectively.
The proof of \eqref{eq:res_opt} is more involved and proceeds by establishing a Neyman--Pearson type lemma (Lemmas~\ref{lem:NeymanPearson_simple} and \ref{lem:NeymanPearson_complex}) showing that given a constraint on the `size' $\alpha$ that is sufficiently small, minimal subsampling enjoys maximal `power' $\beta$. To complete the argument, we show that any sequence of algorithms with size $\alpha$ remaining constant as $p \to \infty$ cannot have a subquadratic complexity, whilst Lemma~\ref{lem:subquad} attests that in contrast minimal subsampling does have subquadratic complexity under the assumptions of the theorem. Several auxiliary technical lemmas are collected in Section~\ref{sec:lemmas}

Our proofs Lemmas~\ref{lem:NeymanPearson_simple} and \ref{lem:NeymanPearson_complex} make use of the following bound on a quantity related to the ratio of the size to the power of minimal subsampling.

\begin{lem} \label{lem:alpha_minsub_bd}
Suppose $\psi \in \Psi$ has distribution for $M$ placing mass on $M$ and $M+1$.
Under the assumptions of Theorem~\ref{thm:optim},
\begin{equation*}
\frac{\alpha(\psi)}{\gamma_1^{M}} \leq \frac{2}{1-\rho}\frac{1}{M+1}.
\end{equation*}
\end{lem}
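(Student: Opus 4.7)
The plan is to reduce the claim to controlling the $M$-th moment of $G/\gamma_1$, where $G$ has p.m.f.\ $f_{\mbb\gamma}$, and then use (A1) and (A3) to obtain a sharp tail bound. First, since $\gamma_{jk} \in [0,1]$ implies $\gamma_{jk}^{M+1} \leq \gamma_{jk}^M$, the randomization between subsample sizes $M$ and $M+1$ only decreases $\alpha(\psi)$, so
\[
\alpha(\psi) \leq \frac{1}{p^2}\sum_{j,k} \gamma_{jk}^M = \gamma_1^M \, \E[(G/\gamma_1)^M].
\]
Thus it suffices to show $\E[(G/\gamma_1)^M] \leq 2/\{(1-\rho)(M+1)\}$. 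I would apply the layer-cake identity
\[
\E[(G/\gamma_1)^M] = \int_0^1 M s^{M-1}\, \pr(G > \gamma_1 s)\, ds
\]
and split the range at $s = \rho$; the contribution from $[0,\rho)$ is bounded trivially by $\rho^M$.

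The core of the argument is a survival-function bound for $G$ on $[\rho\gamma_1, \gamma_1)$. Here the non-increasing hypothesis of (A3) is exploited via a simple averaging observation: if $f_{\mbb\gamma}$ is non-increasing on the grid points in $[\rho\gamma_1, \gamma_1)$ with total mass there equal to $T \leq 1$, then the average of $f_{\mbb\gamma}$ on any upper subinterval $[y, \gamma_1)$ is at most its average on $[\rho\gamma_1, y]$. A one-line rearrangement of this average inequality produces $\pr(G \in [y,\gamma_1)) \leq T(\gamma_1 - y)/\{(1-\rho)\gamma_1\}$, and (A1) contributes at most $c_0/p$ from the atom at $\gamma_1$. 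Taking $y = \gamma_1 s$ gives, for $s \in [\rho, 1)$,
\[
\pr(G > \gamma_1 s) \leq \frac{1-s}{1-\rho} + \frac{c_0}{p}.
\]

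Substituting this into the tail integral and evaluating $\int_\rho^1 M s^{M-1}(1-s)\, ds = \{1 - \rho^M(M+1 - M\rho)\}/(M+1) \leq 1/(M+1)$ yields
\[
\E[(G/\gamma_1)^M] \leq \rho^M + \frac{1}{(1-\rho)(M+1)} + \frac{c_0}{p}.
\]
The remaining step, which I view as the main obstacle, is absorbing the residuals $\rho^M$ and $c_0/p$ into the factor of two. This is legitimate in the asymptotic regime of Theorem~\ref{thm:optim}: the discovery-probability constraint $\psi \in \Psi(\eta)$ coupled with the complexity bounds established elsewhere in the proof forces the relevant $M$ to diverge, so $\rho^M$ decays exponentially, while $p = p_n \to \infty$ makes $c_0/p$ negligible compared with $1/(M+1)$. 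For $n \geq n_0$ sufficiently large, both residuals are dominated by $1/\{(1-\rho)(M+1)\}$, completing the proof.
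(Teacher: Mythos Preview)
Your argument is correct, and it takes a somewhat different route from the paper. The paper argues directly that, subject to (A3), the moment $\sum_{j,k}(\gamma_{jk}/\gamma_1)^M f_{\mbb\gamma}(\gamma_{jk})$ is maximised when $f_{\mbb\gamma}$ is constant on the grid points in $[\rho\gamma_1,\gamma_1)$ (mass below $\rho\gamma_1$ can only help by being moved up), and then bounds the resulting Riemann-type sum by $\tfrac{2}{1-\rho}\int_{(1+\rho)/2}^1 x^M\,dx$. Your layer-cake decomposition together with the averaging inequality for the survival function is an equally clean alternative; the paper's rearrangement avoids the split at $s=\rho$ and hence never produces the $\rho^M$ term, at the cost of a slightly more delicate comparison of a discrete sum with an integral.

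One point worth sharpening: your worry that absorbing $\rho^M$ requires $M\to\infty$ is unnecessary. The elementary inequality $\rho^M(1-\rho)(M+1)\leq 1$ holds for every $M\geq 0$ and $\rho\in(0,1)$, since $\rho\mapsto\rho^M(1-\rho)$ is maximised at $\rho=M/(M+1)$ with value $(M/(M+1))^M/(M+1)\leq 1/(M+1)$. Thus $\rho^M\leq 1/\{(1-\rho)(M+1)\}$ unconditionally, and the factor of two absorbs it for all $M$. Only the $c_0/p$ residual genuinely needs $p\to\infty$; the paper's own proof carries the same $c_0/p$ term and silently drops it, which is harmless because the lemma is only ever invoked in the asymptotic regime of Theorem~\ref{thm:optim}.
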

\begin{proof}
We have
\begin{align*}
\frac{\alpha(\psi)}{\gamma_1^M} \leq \frac{1}{p^2}\sum_{j,k} (\gamma_{jk}/\gamma_1)^{M} \leq \frac{c_0}{p}+ \sum_{i=0}^{n\gamma_1-1} \Big(\frac{i}{n\gamma_1}\Big)^Mf_n(i/n).
\end{align*}
Now the sum on the RHS is maximised over $f_n$ obeying constraints (A1) and (A2) in the following way. If $\rho\gamma_1n > \gamma_1n -1$ then $f_n$ places all available mass on $\gamma_1-1/n$. Otherwise $f_n$ should be as close to constant as possible on $\ceil{\rho\gamma_1n}/n,\ldots,(\gamma_1 n-1)/n$, and zero below $\ceil{\rho\gamma_1n}/n$.
In both cases it can be seen that
\begin{equation*}
\sum_{i=0}^{n\gamma_1-1} \Big(\frac{i}{n\gamma_1}\Big)^Mf_n(i/n) \leq \frac{2}{1-\rho} \int_{(1+\rho)/2}^1 x^M dx \leq \frac{2}{1-\rho}\frac{1}{M+1}.
\end{equation*}
\end{proof}
The following Neyman--Pearson-type lemma considers only non-randomised algorithms in $\Xi$. In Lemma~\ref{lem:NeymanPearson_complex} we extend this result to randomised algorithms.

\begin{lem} \label{lem:NeymanPearson_simple}
Let $\Xi_0$ be the set of $\xi \in \Xi$ such that $\xi_M$ places mass only on a single $M$, so the subsample size is not randomised.
There exists an $\alpha_0$ independent of $n$ such that for all $\alpha' \leq \alpha_0$, we have
\[
\sup_{\psi \in \Psi: \alpha(\psi)\leq \alpha'}\beta(\psi) = \sup_{\xi \in \Xi_0: \alpha(\xi) \leq \alpha'} \beta(\xi).
\]
Moreover the suprema are achieved.
\end{lem}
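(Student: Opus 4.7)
The plan is to prove the two directions of the stated equality separately. The direction $\sup_{\Xi_0,\,\alpha\le\alpha'}\beta\ge\sup_{\Psi,\,\alpha\le\alpha'}\beta$ is handled by matching: any $\psi\in\Psi$ with $\pi\in\{0,1\}$ is already a non-randomised element of $\Xi_0$ with $\tau=0$, while the interpolated $(\alpha,\beta)$ points traced by $\psi$ with $\pi\in(0,1)$ can be matched from within $\Xi_0$ by raising $\tau$ above $0$ for a suitable fixed $M$ (and such raising only increases $\beta$ at fixed $\alpha$, since the $\xi$-curve for fixed $M$ emanates from the corresponding minimal-subsampling grid point). The substantive direction is the reverse, $\sup_\Psi\ge\sup_{\Xi_0}$, which is a Neyman--Pearson type statement.

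For this, the first step is to decompose the retention probability of $\xi\in\Xi_0$ with subsample size $M$ and threshold $\tau$. Conditioning on the number of subsampled rows where $X_{i_m,j}$ and $Z_{i_m,k}$ disagree, which is $\mathrm{Binomial}(M,1-\gamma)$ when $\gamma_{jk}=\gamma$, and using the symmetry of $F$ to factor out Rademacher signs, gives
\[
q_\xi(\gamma):=\pr_\xi((j,k)\in E_1\mid \gamma_{jk}=\gamma)=\sum_{k=0}^M\binom{M}{k}(1-\gamma)^k\gamma^{M-k}p_k(\tau),
\]
where $p_0(\tau)=1$ and $p_k(\tau)=\pr(|D_1+\cdots+D_k|\le\tau/2)$ with $D_i\iid F$ -- crucially independent of $\gamma$. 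Setting $u=(1-\gamma)/\gamma$, the ratio $r_\xi(\gamma):=q_\xi(\gamma)/\gamma^M=\sum_k\binom{M}{k}u^k p_k(\tau)$ is non-increasing in $\gamma\in(0,1]$, so averaging $q_\xi(\gamma)/q_\xi(\gamma_1)\ge \gamma^M/\gamma_1^M$ against $\Gamma\sim f_{\mbb\gamma}$ yields the slope bound $\beta(\xi)\le\alpha(\xi)\,\gamma_1^M/\E[\Gamma^M]$.

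To match $\xi$, I would take the unique $m\le M$ with $\E[\Gamma^{m+1}]<\alpha(\xi)\le\E[\Gamma^m]$ (possible since $\alpha(\xi)\ge\E[\Gamma^M]$) and set $\pi\in[0,1]$ so that the corresponding $\psi\in\Psi$ has $\alpha(\psi)=\alpha(\xi)$. The main obstacle is verifying $\beta(\psi)\ge\beta(\xi)$: the slope bound alone is insufficient, because the piecewise linear minimal-subsampling curve through $\{(\E[\Gamma^j],\gamma_1^j)\}_{j\in\N}$ generally lies below the chord from the origin to $(\E[\Gamma^M],\gamma_1^M)$ for $\alpha>\E[\Gamma^M]$. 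To close this gap I would combine the additional structural constraint that $p_k(\tau)$ is non-increasing in $k$ (since sums of more i.i.d.\ centred $F$-variables are more dispersed) with Lemma~\ref{lem:alpha_minsub_bd}, which bounds $\E[\Gamma^M]/\gamma_1^M$ by $O(1/M)$ under assumption (A3). Choosing $\alpha_0$ small forces the relevant $M$ to be large, which both sharpens the slope bound and narrows the polynomial tail of $q_\xi$, so that the feasible $(\alpha(\xi),\beta(\xi))$ is forced onto or below the minimal-subsampling envelope.

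Attainment of both suprema is then a continuity and compactness argument: $\alpha$ and $\beta$ are continuous in the free parameter ($\pi\in[0,1]$ for $\Psi$; $\tau\in[0,\infty)$ for $\Xi_0$, using the bounded density of $F$), and only finitely many subsample sizes $m$ or $M$ contribute since $\beta\to 0$ as the subsample size tends to infinity.
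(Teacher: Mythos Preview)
Your decomposition of the retention probability $q_\xi(\gamma)$ and the slope bound $\beta(\xi)\le \alpha(\xi)\gamma_1^M/\E[\Gamma^M]$ are correct and match the paper's starting point. You also correctly identify that the slope bound alone is insufficient, since $\alpha(\xi)\ge \E[\Gamma^M]$ always (the $k=0$ term already contributes $\gamma^M$), and for $\alpha>\E[\Gamma^M]$ the chord through the origin lies strictly above the piecewise-linear minimal-subsampling envelope.

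The gap is in how you close this. Your proposed ingredient, that $p_k(\tau)=\pr(|D_1+\cdots+D_k|\le\tau)$ is non-increasing in $k$, is false for general $F\in\mathcal{F}$. The class $\mathcal{F}$ only requires absolute continuity, symmetry, bounded density and finite third moment; it permits densities that vanish on a neighbourhood of $0$. For such $F$ and small $\tau$ one has $p_1(\tau)=0$ while $p_2(\tau)>0$, since the self-convolution has positive density at $0$. So the ``more summands, more dispersion'' heuristic does not yield the monotonicity you need. Even if it did, your sketch of how monotonicity combines with Lemma~\ref{lem:alpha_minsub_bd} to force $(\alpha(\xi),\beta(\xi))$ below the envelope is too vague to constitute an argument: you have not produced an inequality linking the excess $\beta(\xi)-\gamma_1^M$ to the excess $\alpha(\xi)-\E[\Gamma^M]$.

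The paper takes a different and more quantitative route. It invokes a Berry--Esseen-type local limit theorem (Lemma~\ref{lem:dist_ineq}) to obtain two-sided bounds $p_r(\tau)\asymp \tau/\sqrt{r}$ for $r\ge r_0$ and $\tau\le a\sqrt{r}$, valid for every $F\in\mathcal{F}$ (this is where the bounded density and third-moment assumptions are actually used). Combined with Lemma~\ref{lem:binomailsqrt}, which controls $\sum_{r} r^{-1/2}\binom{M}{r}(1-\gamma)^r\gamma^{M-r}\asymp \{(1-\gamma)M\}^{-1/2}$, this yields the affine bound
\[
\beta(\xi)\;\le\;\gamma_1^M \;+\; Q\Bigl(\alpha(\xi)-\tfrac{1}{p^2}\sum_{j,k}\gamma_{jk}^M\Bigr)
\]
with $Q$ a constant independent of $M$. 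This is the crucial step you are missing: it bounds the excess of $\beta$ over the grid value $\gamma_1^M$ linearly in the excess of $\alpha$ over the grid value $\E[\Gamma^M]$, with a uniform slope. The comparison with the optimal $\psi$ mixing $M_0$ and $M_0+1$ then follows by elementary algebra together with Lemma~\ref{lem:alpha_minsub_bd}. The paper also has to dispose of the regime $\tau>a\sqrt{M}$ separately (by showing $\alpha(\xi)$ is then bounded below by a constant, hence excluded for small $\alpha_0$); your proposal does not address large $\tau$ at all.
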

\begin{proof}
Each $\xi \in \Xi_0$ is parametrised by its close pairs threshold $\tau$ and subsample size $M$. Given a $\xi \in \Xi_0$ with parameter values $\tau$ and $M$ we compute $\alpha(\xi)$ as follows. Note that by replacing the threshold $\tau$ by $\tau/2$, we may assume that $\mb X$ and $\mb Z$ have entries in $\{-1/2, 1/2\}$. Thus $\mb X_j- \mb Z_k$ has components in $\{-1, 0, 1\}$. Let $J_{jk}$ be the number of non-zero components of $(X_{i_m j}-Z_{i_m k})_{m=1}^M$. Then $J_{jk}\sim \mathrm{Binom}(M, 1-\gamma_{jk})$. Thus
\begin{align*}
\pr\bigg( \abs{\sum_{m=1}^M D_m (X_{i_mj}-Z_{i_mk})} \leq \tau \bigg) = \pr(J_{jk}=0)+ \sum_{r=1}^M \pr\bigg( \abs{\sum_{m=1}^r D_m} \leq \tau \bigg) \pr(J_{jk}=r),
\end{align*}
noting that $D_m \eqdist -D_m$.
By Lemma~\ref{lem:dist_ineq} we know there exists an $a > 0$ such that for all $\tau \leq a \sqrt{M}$
the RHS is bounded below by
\begin{align} \label{eq:lemma9bd}
\gamma_{jk}^M + \sum_{r=r_0}^M \frac{c_1\tau}{\sqrt{r}}\binom{M}{r} \gamma_{jk}^{M-r}(1-\gamma_{jk})^r
\end{align}
for $M$ sufficiently large. Here the constants $a, c_1>0$ and $r_0 \in \N$ depend only on $F$.

Consider $\tau > a\sqrt{M}$. In this case, for $r \leq M$ sufficiently large we have by Lemma~\ref{lem:dist_ineq}
\begin{align*}
\pr\bigg( \abs{\sum_{m=1}^r D_m} \leq \tau \bigg) &\geq \pr\bigg( \abs{\sum_{m=1}^r D_m} \leq a\sqrt{r} \bigg) \geq c_1 a.
\end{align*}
However then for $M$ sufficiently large,
\[
\pr(J_{jk}=0)+ \sum_{r=1}^M \pr\bigg( \abs{\sum_{m=1}^r D_m} \leq \tau \bigg) \pr(J_{jk}=r) \geq c_1 a/2,
\]
so $\alpha(\xi) \geq c_1 a/2$. Note also that we must have $\alpha_0 \geq \alpha(\xi) \geq \gamma_l^M$, so $M \geq \log(\alpha_0) / \log(\gamma_l)$. Thus by choosing $0<\alpha_0 < c_1 a/2$ sufficiently small, we can rule out $\tau > a\sqrt{M}$ and so we henceforth assume that $\tau \leq a\sqrt{M}$, and that $M$ is sufficiently large such that \eqref{eq:lemma9bd} holds for all $(j,k)$.

We have
\begin{align} \label{eq:alpha_bd}
\alpha(\xi) \geq \frac{1}{p^2} \sum_{j,k} \bigg\{\gamma_{jk}^M + \tau \sum_{r=r_0}^M \frac{c_1}{\sqrt{r}} \binom{M}{r}\gamma_{jk}^{M-r}(1-\gamma_{jk})^r\bigg\}.
\end{align}
Similarly we have
\begin{align} \label{eq:beta_bd}
\beta(\xi) \leq \gamma_1^M + \tau\sum_{r=1}^M \frac{c_2}{\sqrt{r}}\binom{M}{r}\gamma_1^{M-r}(1-\gamma_1)^r.
\end{align}
Now substituting the upper bound on $\tau$ implied by \eqref{eq:alpha_bd} into \eqref{eq:beta_bd}, we get
\begin{align*}
\beta(\xi)\leq \gamma_1^M +Q_M \bigg(\alpha(\xi) - \frac{1}{p^2}\sum_{j,k}\gamma_{jk}^M\bigg)
\end{align*}
where
\begin{align*}
Q_M=\frac{c_2\sum_{r=1}^M r^{-1/2}\binom{M}{r}\gamma_1^{M-r}(1-\gamma_1)^r}{c_1 p^{-2}\sum_{j,k}\sum_{r=r_0} r^{-1/2}\binom{M}{r}\gamma_{jk}^{M-r}(1-\gamma_{jk})^r}.
\end{align*}
Now by Lemma~\ref{lem:binomailsqrt}, for $M$ sufficiently large and some constant $Q$ we have
\[
Q_M \leq Q\frac{\sqrt{1-\gamma_1}}{\sum_{j,k}\sqrt{1-\gamma_{jk}}/p^2} \leq Q.
\]
Thus
\begin{equation} \label{eq:beta_xi_bd}
\beta(\xi) \leq \gamma_1^M + Q \bigg(\alpha(\xi) - \frac{1}{p^2}\sum_{j,k}\gamma_{jk}^M\bigg)
\end{equation}
for all $M$ sufficiently large. Now given $\alpha_0$, let $M_0$ be such that
\[
 \frac{1}{p^2} \sum_{j,k}\gamma_{jk}^{M_0} \geq \alpha_0 \geq \frac{1}{p^2} \sum_{j,k}\gamma_{jk}^{M_0+1}.
\]
Consider the minimal subsampling algorithm $\psi$ that chooses subsample size as either $M_0$ or $M_0+1$ with probabilities $b$ and $1-b$ such that
\[
\alpha(\psi)=\frac{1}{p^2}\sum_{j,k}\{b\gamma_{jk}^{M_0} + (1-b)\gamma_{jk}^{M_0+1}\}= \alpha_0.
\]
Then we have $\beta(\psi) = b\gamma_1^{M_0} + (1-b)\gamma_1^{M_0+1}$. Now suppose $\xi \in \Xi_0$ has $\alpha(\xi) \leq \alpha_0$. Then in particular $M \geq M_0+1$. We first examine the case where $M=M_0+1$. Then
\begin{align*}
\frac{1}{\gamma_1^{M_0}}\{\beta(\psi) - \beta(\xi)\} &\geq b + (1-b)\gamma_1 -\gamma_1 - \frac{Q}{\gamma_1^{M_0} } \bigg( \alpha_0 - \frac{1}{p^2}\sum_{j,k}\gamma_{j,k}^{M_0+1}\bigg)\\
&= b + (1-b)\gamma_1 -\gamma_1 - \frac{aQ}{\gamma_1^{M_0} } \frac{1}{p^2}{\sum_{j,k}(\gamma_{j,k}^{M_0} - \gamma_{j,k}^{M_0+1})} \\
&\geq b\bigg((1-\gamma_u) - \frac{2Q}{1-\rho} \frac{1}{M_0+1}\bigg),
\end{align*}
using Lemma~\ref{lem:alpha_minsub_bd} in the final line. Note this is non-negative for $M_0$ sufficiently large.
When $M \geq M_0+2$ we instead have
\begin{align*}
\frac{\beta(\xi)}{\beta(\psi)} &\leq \frac{\beta(\xi)}{\gamma_1^{M_0+1}} \leq \gamma_1 + \frac{2Q}{\gamma_1(1-\rho)} \frac{1}{M_0+1} \leq \gamma_u + \frac{2Q}{\gamma_l(1-\rho)} \frac{1}{M_0+1} <1
\end{align*}
for $M_0$ sufficiently large.
Recall that by making $\alpha_0$ sufficiently small, we can force $M_0$ to be arbitrarily large. Thus the result is proved.
\end{proof}
\begin{lem} \label{lem:NeymanPearson_complex}
There exists an $\alpha_0$ independent of $n$ such that for all $\alpha' \leq \alpha_0$, we have
\[
\sup_{\psi \in \Psi:\alpha(\psi)\leq \alpha'} \beta(\psi) =\sup_{\xi \in \Xi:\alpha(\xi)\leq \alpha'} \beta(\xi).
\]
Moreover the suprema are achieved.
\end{lem}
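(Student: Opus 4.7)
The plan is to upgrade Lemma~\ref{lem:NeymanPearson_simple} from the deterministic-$M$ class $\Xi_0$ to the full randomised class $\Xi$ by viewing any $\xi \in \Xi$ as a mixture of deterministic algorithms sharing the same close-pairs threshold. Concretely, let $\xi \in \Xi$ have close-pairs threshold $\xi_\tau = \tau$ and subsample-size distribution $\xi_M$. For each $m \in \N$ let $\xi^{(m)} \in \Xi_0$ denote the algorithm with the same $\tau$ but with $M$ degenerate at $m$. Then, conditioning on the drawn subsample size,
\[
\alpha(\xi) \;=\; \sum_m \xi_M(m)\,\alpha(\xi^{(m)}), \qquad \beta(\xi) \;=\; \sum_m \xi_M(m)\,\beta(\xi^{(m)}).
\]
Thus $(\alpha(\xi), \beta(\xi))$ lies in the convex hull of the points $\{(\alpha(\xi^{(m)}), \beta(\xi^{(m)}))\}_m$, all of which are already controlled by Lemma~\ref{lem:NeymanPearson_simple}.

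The key auxiliary step is to establish concavity of the minimal-subsampling Pareto envelope. Writing $g(m) = p^{-2}\sum_{j,k}\gamma_{jk}^m$ and $f(m) = \gamma_1^m$, let $h_\Psi$ denote the piecewise-linear function through the points $\{(g(m), f(m)) : m \in \N\}$; elements of $\Psi$ trace out precisely this polyline. The slope of the segment between $(g(m+1), f(m+1))$ and $(g(m), f(m))$ is
\[
s_m \;=\; \frac{\gamma_1^m(1-\gamma_1)}{p^{-2}\sum_{j,k}\gamma_{jk}^m(1-\gamma_{jk})},
\]
and since $\gamma_{jk} \leq \gamma_1$ for every $(j,k)$, one has $s_{m+1}/s_m \geq 1$. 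Thus the slopes are non-decreasing in $m$, equivalently non-increasing in $\alpha$, so $h_\Psi$ is a concave, monotone function on the relevant range.

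Combining these two ingredients, for $\alpha' \leq \alpha_0$ with $\alpha_0$ as in Lemma~\ref{lem:NeymanPearson_simple}, each $\xi^{(m)}$ in the mixture with $\alpha(\xi^{(m)}) \leq \alpha_0$ satisfies $\beta(\xi^{(m)}) \leq h_\Psi(\alpha(\xi^{(m)}))$ (Lemma~\ref{lem:NeymanPearson_simple} with supremum at level $\alpha(\xi^{(m)})$). Assuming for the moment that the bound applies to all $m$ in the support of $\xi_M$, Jensen's inequality applied to the concave $h_\Psi$ yields
\[
\beta(\xi) \;\leq\; \sum_m \xi_M(m)\, h_\Psi\bigl(\alpha(\xi^{(m)})\bigr) \;\leq\; h_\Psi\bigl(\alpha(\xi)\bigr) \;\leq\; h_\Psi(\alpha') \;=\; \sup_{\psi \in \Psi:\,\alpha(\psi)\leq\alpha'}\beta(\psi),
\]
the last equality holding because the supremum over $\Psi$ is achieved by the two-point minimal-subsampling distribution on $\{M_0, M_0+1\}$ that interpolates to $\alpha'$. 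The reverse inequality is immediate from $\Psi \subset \Xi$, so the suprema coincide and both are attained.

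The main obstacle is the possibility that some $\xi^{(m)}$ in the mixture has $\alpha(\xi^{(m)}) > \alpha_0$, so that Lemma~\ref{lem:NeymanPearson_simple}'s bound does not apply directly. Revisiting the proof of that lemma, such a configuration forces $\tau > a\sqrt{m}$ and consequently $\alpha(\xi^{(m)}) \geq c_1 a/2$ for a constant depending only on $F$. The constraint $\alpha(\xi) \leq \alpha'$ then forces $\xi_M(m) \leq 2\alpha'/(c_1 a)$ on this bad set of $m$'s, a quantity that can be driven below any desired level by shrinking $\alpha_0$. The contribution of these $m$'s to $\beta(\xi)$ is at most $2\alpha'/(c_1 a)$, which is $o(h_\Psi(\alpha'))$ as $\alpha' \downarrow 0$ once we note that $h_\Psi(\alpha') \asymp \alpha'^{\log(\gamma_1)/\log(\gamma_1)}$ up to logarithmic factors (since the $m$ realising the envelope has $\gamma_1^m$ of the same order as $g(m) \leq \alpha'$). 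After absorbing this lower-order error we recover the Jensen inequality above and conclude the proof. The technical care required to quantify these two regimes uniformly in $n$ is the principal work, but the shape argument, concavity plus Jensen, drives the whole result.
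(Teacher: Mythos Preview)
Your mixture decomposition, the concavity of the envelope $h_\Psi$ via the slope computation, and the appeal to Jensen are all correct and match the paper's strategy. The gap is in your treatment of components $\xi^{(m)}$ with $\alpha(\xi^{(m)}) > \alpha_0$. The claim that this ``forces $\tau > a\sqrt{m}$'' is false: small $m$ alone gives $\alpha(\xi^{(m)}) \geq \gamma_l^m > \alpha_0$ regardless of $\tau$. More importantly, even granting your mass bound on the bad set, you obtain only $\beta(\xi) \leq h_\Psi(\alpha') + O(\alpha')$; saying this error is $o(h_\Psi(\alpha'))$ and can be ``absorbed'' does not deliver the \emph{exact} equality the lemma asserts at each fixed $\alpha' \leq \alpha_0$. (The displayed rate $\alpha'^{\log(\gamma_1)/\log(\gamma_1)}$ is also garbled.)

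The paper avoids any error term. It works with $f(a) := \sup_{\xi \in \Xi_0:\,\alpha(\xi) \leq a}\beta(\xi)$, which by definition upper-bounds $\beta(\xi^{(m)})$ for \emph{every} $m$, not only those with small $\alpha(\xi^{(m)})$; by Lemma~\ref{lem:NeymanPearson_simple} one has $f = h_\Psi$ on $[0,\alpha_0]$, while beyond $\alpha_0$ only $f \leq 1$ is used. The key device is Lemma~\ref{lem:concave_ineq}: because the slopes of $h_\Psi$ diverge at the origin (a consequence of Lemma~\ref{lem:alpha_minsub_bd}), there exists $\alpha_1 < \alpha_0$ at which the tangent line to $h_\Psi$ reaches height $1$ before $\alpha_0$. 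Extending $h_\Psi|_{[0,\alpha_1]}$ along this tangent yields a globally concave $g$ with $g \geq f$ everywhere and $g = h_\Psi$ on $[0,\alpha_1]$. Then for $\alpha(\xi) \leq \alpha_1$,
\[
\beta(\xi) \;\leq\; \E_{M\sim\xi_M} f\bigl(\alpha(\xi^{(M)})\bigr) \;\leq\; \E_{M\sim\xi_M} g\bigl(\alpha(\xi^{(M)})\bigr) \;\leq\; g\bigl(\alpha(\xi)\bigr) \;=\; h_\Psi\bigl(\alpha(\xi)\bigr),
\]
with no remainder. Your argument can be repaired along exactly these lines; as written the final paragraph is not a proof.
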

\begin{proof}
With a slight abuse of notation, write $\xi(M', \tau')$ for the element of $\xi \in \Xi$ that fixes $M=M'$ and $\tau=\tau'$.
Using the notation of Lemma~\ref{lem:NeymanPearson_simple}, define function $f:[0,1] \to [0,1]$ by
\[
f(\alpha')=\sup_{\xi \in \Xi_0:\, \alpha(\xi) \leq \alpha'} \beta(\xi).
\]
Note that for $\xi \in \Xi$ we have
\begin{equation} \label{eq:beta_f_bd}
\beta(\xi) \leq \E_{M\sim \xi_M} f[\alpha\{\xi(M,\xi_\tau)\}].
\end{equation}
Now by Lemma~\ref{lem:NeymanPearson_simple} we know there exists $\alpha_0$ (depending on $F$) such that on $[0,\alpha_0]$, $f$ is the linear interpolation of points
\[
\bigg(\frac{1}{p^2}\sum_{j,k} \gamma_{j,k}^M,\, \gamma_1^M \bigg)_{M=1}^{\infty}.
\]
We claim that $f$ is concave on $[0,\alpha_0]$. Indeed, it suffices to show that the slopes of the successive linear interpolants are decreasing in this region, or equivalently that their reciprocals are increasing. We have
\begin{align} \label{eq:deriv_interp}
\frac{1}{p^2}\sum_{j,k} \frac{\gamma_{jk}^{M+1} - \gamma_{jk}^M}{\gamma_1^{M+1} - \gamma_1^M} = \frac{1}{p^2}\sum_{j,k} \bigg(\frac{\gamma_{j,k}}{\gamma_1} \bigg)^M\frac{\gamma_{jk}-1}{\gamma_1-1}
\end{align}
which increases as $M$ decreases, thus proving the claim.

Note also that the RHS of \eqref{eq:deriv_interp} is at most $\alpha(\psi)/\{(1-\gamma_u)\gamma_1^M\}$ when $\psi$ has subsample size fixed at $M$. Thus by Lemma~\ref{lem:alpha_minsub_bd} we see the derivatives of the linear interpolants approach infinity as they get closer to the origin. This implies the existence of an $0<\alpha_1<\alpha_0$ such that $-\sup\big(\partial(-f)(\alpha_1)\big) \geq \{1-f(\alpha_1)\}/(\alpha_0-\alpha_1)$, where $\partial(-f)(\alpha_1)$ denotes the subdifferential of the function $-f$ at $\alpha_1$. We may therefore invoke Lemma~\ref{lem:concave_ineq} to conclude that for $\xi$ with $\alpha(\xi) \leq \alpha_1$
\begin{align*}
 \E_{M \sim \xi_M} f[\alpha\{\xi(M,\xi_\tau)\}] \leq f[\E_{M \sim \xi_M} \alpha\{\xi(M,\xi_\tau)\}] = f(\alpha(\xi)) \leq f(\alpha_1)= \max_{\psi \in \Psi:\alpha(\psi)\leq \alpha_1} \beta(\psi).
\end{align*}
Combining with \eqref{eq:beta_f_bd} gives the result.
\end{proof}
The next lemma establishes subquadratic complexity of minimal subsampling.
\begin{lem} \label{lem:subquad}
Under the assumptions of Theorem~\ref{thm:optim}, we have
$\inf_{\psi \in \Psi(\eta)} T(\psi) /(np^2) \to 0$.
\end{lem}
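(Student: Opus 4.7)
The plan is to exhibit, for each $n$, a specific minimal subsampling configuration $\psi_n \in \Psi(\eta)$ and show directly that $T(\psi_n)/(np_n^2) \to 0$. Specializing the general complexity expression \eqref{eq:complexity_gen} to $\psi \in \Psi$, the equal-pairs search has cost $O(p\log p)$ (as discussed in Section~\ref{sec:final_xyz}) and $\E_\psi|E_1| = p^2 \alpha(\psi)$, so
\[
T(\psi) = O\bigl(np + L\bigl[Mp + p\log p + n p^2 \alpha(\psi)\bigr]\bigr),
\]
where I will take $L$ to be the smallest integer satisfying \eqref{eq:L_choice} for the given $\eta$. Since under (A2) we have $\gamma_1^M \to 0$ as $M \to \infty$, the bound $\log(1-\gamma_1^M) \sim -\gamma_1^M$ yields $L = O(1/\gamma_1^M)$.

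The crux of the argument is Lemma~\ref{lem:alpha_minsub_bd}: for a minimal subsampling scheme $\psi$ with subsample size randomized between $M$ and $M+1$, one has $\alpha(\psi) \leq 2\gamma_1^M/\{(1-\rho)(M+1)\}$. Multiplying by $L \sim 1/\gamma_1^M$, the $\gamma_1^M$ factors cancel and the dominant term becomes
\[
L \cdot n p^2 \alpha(\psi) \;=\; O\!\left(\frac{np^2}{M}\right),
\]
which is $o(np^2)$ as soon as $M \to \infty$. This is the whole point of minimal subsampling: choosing $M$ large forces $\alpha$ to decay faster than $L$ grows.

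It then remains to check that the two auxiliary terms $L\cdot Mp$ and $L\cdot p\log p$ are also $o(np^2)$. Both are of order $p\,\gamma_1^{-M}(M + \log p)$, so the issue is that $\gamma_1^{-M}$ grows exponentially in $M$, while (A2) only guarantees $\gamma_1 \leq \gamma_u < 1$. Thus $M$ cannot be taken too large. The plan is to pick $M_n$ growing to infinity very slowly, for instance $M_n = \lfloor \log\log(np_n)\rfloor$, so that $\gamma_1^{-M_n}$ is at most polylogarithmic in $np_n$; then $p\,\gamma_1^{-M_n}(M_n + \log p) = o(np^2)$ trivially, while $M_n \to \infty$ still drives the dominant term to zero relative to $np^2$.

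The main obstacle will simply be setting up this balance cleanly: one wants $M_n$ large enough that Lemma~\ref{lem:alpha_minsub_bd} delivers a vanishing contribution from the candidate-checking step, but small enough that the exponential growth of $L$ does not spoil the auxiliary terms. Assumption (A2) gives ample slack, so any mildly divergent sequence $M_n$ works; the case where $np_n$ stays bounded is trivial. Combining these estimates yields $T(\psi_n)/(np_n^2) \to 0$, which establishes \eqref{eq:res_subquad}.
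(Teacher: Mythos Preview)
Your proposal is correct and follows essentially the same route as the paper: exhibit an explicit $\psi$ with deterministic subsample size $M$, bound $L$ by $O(\gamma_1^{-M})$, invoke Lemma~\ref{lem:alpha_minsub_bd} to make the candidate-checking term $O(np^2/M)$, and then balance $M$ against the auxiliary $p\gamma_1^{-M}(M+\log p)$ terms. The only difference is the particular choice of $M$: the paper takes $M=\lfloor \log(1/\sqrt{p})/\log(\gamma_1)\rfloor$, which makes $\gamma_1^{-M}\approx\sqrt{p}$ and gives the sharper bound $T(\psi)/(np^2)\lesssim (M+\log p)/(\gamma_1^M np)+1/(M+1)$, whereas your $M_n=\lfloor\log\log(np_n)\rfloor$ grows much more slowly but still suffices for the $o(1)$ conclusion.
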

\begin{proof}
Let $\psi \in \Psi$ be such that $\psi_M$ places all mass on $M$.
We have that $\beta(\psi) = \gamma_1^M$. Thus using the inequality $-x \leq \log(1-x)$ for $x \in (0,1)$, we have
\[
\psi_L \leq -\gamma_1^{-M}\log(1-\eta).
\]
Lemma~\ref{lem:alpha_minsub_bd} gives an upper bound on $\psi_L\E_{\psi} E_1 $.
Note that $\E_\psi V =\mathcal{O}(p \log(p))$.
Thus ignoring constant factors, we have
\[
T(\psi)/(np^2) \leq \frac{M + \log(p)}{\gamma_1^M np} + \frac{1}{M+1}.
\]
Taking $M=\floor{\log(1/\sqrt{p})/\log(\gamma_1)}$ then ensures $T(\psi)/(np^2) \to 0$.
\end{proof}

\begin{lem} \label{lem:dense}
Let $\xi \in \Xi_{\text{dense}}$. There exists $c>0$ and $n_0 \in \N$ such that for all $n \geq n_0$,
\[
\inf_{\xi \in \Xi_{\text{\text{dense}}}}T(\xi)/(np^2) > c.
\]
\end{lem}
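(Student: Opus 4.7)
The plan is to lower-bound $T(\xi) \geq c_3 L n \alpha(\xi) p^2$ using the dense complexity decomposition~(\ref{eq:complexity_dense}) and then show that for any $\xi \in \Xi_{\text{dense}}$ the product $L \alpha(\xi)$ is at least a positive constant depending only on $F$, $\gamma_u$ and $\eta$. The key observation is that dense projections have limited discrimination ability: the ratio $\alpha(\xi)/\beta(\xi)$ stays bounded below by a constant regardless of $\tau$. Combined with the usual lower bound $L \beta(\xi) \geq c_6(\eta)$ forced by the power constraint through~(\ref{eq:L_choice}), this yields $L \alpha(\xi) \geq c_3 L \beta(\xi) \geq c_3 c_6$, whence $T(\xi) \geq c\cdot np^2$.

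To make this precise, fix $\xi \in \Xi_{\text{dense}}$ and write $p_{jk}(\xi) := \pr_{\xi}((j,k) \in E_1)$, so that $\alpha(\xi) = p^{-2}\sum_{j,k} p_{jk}(\xi)$ and $\beta(\xi) = p_{j^*k^*}(\xi)$. By the symmetry of $F$ and identity~(\ref{eq:equiv}), $p_{jk}(\xi)$ equals $\pr(|2T_r|\leq \tau)$ where $T_r$ is a sum of $r = w_{jk} := n(1-\gamma_{jk})$ i.i.d.\ draws from $F$. Note $w^* := w_{j^*k^*} \geq n(1-\gamma_u)$ by (A2), and $w^* \leq w_{jk} \leq n$ for all pairs. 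Lemma~\ref{lem:dist_ineq} together with a standard local-limit density upper bound (via the bounded density and finite third moment of $F$) provides constants $c_-, c_+, a, r_0>0$ depending only on $F$ with
\[
c_- \tau/\sqrt{r} \;\leq\; \pr(|2T_r| \leq \tau) \;\leq\; c_+ \tau/\sqrt{r}, \qquad r \geq r_0,\ \tau \leq a\sqrt{r}.
\]
Taking $n$ large enough that $n(1-\gamma_u)\geq r_0$ ensures every pair has $w_{jk}\geq r_0$.

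Now split on $\tau$. If $\tau \leq a\sqrt{w^*}$, the display applies to every pair; averaging the lower bounds with $w_{jk}\leq n$ gives $\alpha(\xi) \geq c_-\tau/\sqrt{n}$, while $\beta(\xi) \leq c_+\tau/\sqrt{w^*}\leq c_+\tau/\sqrt{n(1-\gamma_u)}$, so $\alpha(\xi) \geq c_3 \beta(\xi)$ with $c_3 = (c_-/c_+)\sqrt{1-\gamma_u}$. If instead $\tau > a\sqrt{w^*}$, then for each pair either $w_{jk}\leq \tau^2/a^2$, in which case Chebyshev's inequality gives $p_{jk}(\xi)\geq 1-4w_{jk}/\tau^2$ (bounded below by a positive constant, with a mild Berry--Esseen correction needed near the edge $\tau \asymp a\sqrt{w_{jk}}$), or $w_{jk} > \tau^2/a^2$ so that $\tau \leq a\sqrt{w_{jk}}$ and the lower bound above gives $p_{jk}(\xi)\geq c_-\tau/\sqrt{n} \geq c_- a\sqrt{1-\gamma_u}$. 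Either way each $p_{jk}(\xi)$ exceeds a positive constant $c_4$, so $\alpha(\xi)\geq c_4$.

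Finally, from~(\ref{eq:L_choice}) and $-\log(1-x)\leq x/(1-x)$, the minimal $L$ satisfies $L\beta(\xi)\geq c_6(\eta) := \min\{1/2,\,\tfrac12 \log(1/(1-\eta))\}$ (treating $\beta(\xi)\leq 1/2$ and $\beta(\xi)>1/2$ separately). In the moderate regime $L n \alpha(\xi) \geq c_3 L n \beta(\xi) \geq c_3 c_6 n$; in the large regime $L n \alpha(\xi) \geq c_4 n$ since $L \geq 1$. Plugging back into~(\ref{eq:complexity_dense}) yields $T(\xi)\geq c\cdot np^2$ with $c = c_3' \min(c_3 c_6, c_4)$, uniformly over $\xi\in \Xi_{\text{dense}}(\eta)$. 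The main obstacle is uniform control of the density bounds across all pairs and all $\tau$: Lemma~\ref{lem:dist_ineq} only supplies the small-$\tau$ lower tail and I additionally need the matching density upper bound on $\pr(|2T_r|\leq\tau)$ together with a Berry--Esseen patch at the boundary $\tau \asymp a\sqrt{w_{jk}}$, both of which follow from classical local-limit estimates but must be aligned with the precise hypotheses in $\mathcal{F}$.
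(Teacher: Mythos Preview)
Your approach is correct and essentially the same as the paper's: both lower-bound $T(\xi)/(np^2)$ by $L\alpha(\xi)$ and show this product is bounded below by splitting on the size of $\tau$, using Lemma~\ref{lem:dist_ineq} to control the ratio $\alpha(\xi)/\beta(\xi)$ in the small-$\tau$ regime and arguing $\alpha(\xi)$ itself is bounded below in the large-$\tau$ regime.

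Two simplifications remove the ``obstacles'' you flag at the end. First, Lemma~\ref{lem:dist_ineq} already supplies \emph{both} the lower and the upper density-type bounds $c_1\tau/\sqrt{l} \leq \pr(|\sum_{i=1}^l R_i|\leq\tau) \leq c_2\tau/\sqrt{l}$ for $\tau\leq a\sqrt{l}$, so no separate local-limit upper bound is required. Second, in the large-$\tau$ case you can dispense with Chebyshev and Berry--Esseen entirely by monotonicity: if $\tau > a\sqrt{w_{jk}}$ then $\pr(|2T_{w_{jk}}|\leq\tau)\geq \pr(|2T_{w_{jk}}|\leq a\sqrt{w_{jk}})\geq c_- a$ directly from Lemma~\ref{lem:dist_ineq}. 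The paper in fact splits at $\tau=a\sqrt{n}$ rather than $a\sqrt{w^*}$, which makes the large-$\tau$ case a one-liner since then $\tau>a\sqrt{w_{jk}}$ for every pair; your split at $a\sqrt{w^*}$ is fine but forces the extra case distinction on $w_{jk}$ versus $\tau^2/a^2$.
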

\begin{proof}
Each $\xi \in \Xi_{\text{dense}}$ is parametrised by its close pairs threshold $\tau$. Given a $\xi \in \Xi_{\text{dense}}(F)$ with close pairs threshold $\tau$ we compute $\alpha(\xi)$ as follows. Similarly to Lemma~\ref{lem:NeymanPearson_simple} we may assume without loss of generality that $\mb X$ and $\mb Z$ have entries in $\{-1/2, 1/2\}$ so $\mb X_j- \mb Z_k$ has components in $\{-1, 0, 1\}$. Since $R_i \eqdist -R_i$ as $F \in \mathcal{F}$, we have
\begin{align*}
\pr\bigg( \abs{\sum_{i=1}^n R_i (X_{ij}-Z_{ik})} \leq \tau \bigg) = \pr\bigg( \abs{\sum_{i=1}^{n(1-\gamma_{jk})} R_i} \leq \tau \bigg).
\end{align*}
We now use Lemma~\ref{lem:dist_ineq}. For $n(1-\gamma_u)$ sufficiently large, when $\tau \leq a \sqrt{n}$ the RHS is bounded below by
\begin{align*}
\frac{c_1\tau}{\sqrt{ n(1-\gamma_{jk})}}.
\end{align*}
Here constant  $a, c_1 > 0$ also depend only on $F$.
Thus
\begin{align} \label{eq:alpha_bd_dense}
\alpha(\xi) \geq \frac{1}{p^2} \sum_{j,k} \frac{c_1\tau}{\sqrt{ n(1-\gamma_{jk}})}
\geq c_1 \tau / \sqrt{n}.
\end{align}
Similarly we have
\begin{align} \label{eq:beta_bd_dense}
\beta(\xi) \leq \frac{c_2\tau}{\sqrt{n (1-\gamma_1)}}.
\end{align}
Note that from \eqref{eq:alpha_bd_dense}, when $\tau > a \sqrt{n}$ we have $\alpha(\xi) \geq c_1 a$. Thus from \eqref{eq:complexity_dense} we know there exists $n_0$ such that for all $n \geq n_0$, we have
\begin{equation} \label{eq:dense_tau_large}
\inf_{\xi \in \Xi_{\text{dense}}(\eta) : \xi_\tau > a \sqrt{n}} T(\xi)/(np^2) \geq \inf_{\xi \in \Xi_{\text{dense}}(\eta) : \xi_\tau > a \sqrt{n}} \xi_L \alpha(\xi) \geq  \xi_L c_1a > 0.
\end{equation}
We therefore need only consider the case where $\tau \leq a \sqrt{n}$ and where $\alpha(\xi) \to 0$.

Substituting the upper bound on $\tau$ implied by \eqref{eq:alpha_bd_dense} into \eqref{eq:beta_bd_dense}, we get
\begin{align*}
\beta(\xi)\leq \alpha(\xi) \frac{c_2}{c_1\sqrt{1-\gamma_u}}.
\end{align*}
Note that then
\begin{align*}
\xi_L \geq \frac{\log(1-\eta)}{\log\{ 1 - \alpha(\xi)c_2 /(c_1\sqrt{1-\gamma_u})\}} \geq c_3\frac{\log\big(1/1-\eta\big)}{\alpha(\xi)}
\end{align*}
for some $c_3>0$ provided $\alpha(\xi) < 1/2$ say. However this gives us
\[
\inf_{\xi \in \Xi_{\text{dense}}(\eta) : \xi_\tau \leq a \sqrt{n}} T(\xi)/(np^2) \geq \inf_{\xi \in \Xi_{\text{dense}}(\eta) : \xi_\tau \leq a \sqrt{n}} \xi_L \alpha(\xi) \geq  \min\{1/2, c_3\log\big(1/1-\eta\big)\} > 0.
\]
Combined with \eqref{eq:dense_tau_large} this give the result.
\end{proof}

With the previous lemmas in place, we are in a position to prove \eqref{eq:res_opt} of Theorem~\ref{thm:optim}.

\subsubsection*{Proof of Theorem~\ref{thm:optim}}

The proofs of \eqref{eq:res_subquad} and \eqref{eq:res_dense} are contained in Lemmas~\ref{lem:subquad} and~\ref{lem:dense} respectively. To show \eqref{eq:res_opt} we argue as follows.
Given $F$ and $\eta$, suppose for contradiction that there exists a sequence $\xi^{(1)}, \xi^{(2)},\ldots$ and $n_1 < n_2< \cdots$ such that (making the dependence on $n$ of the computational time explicit)
\[
\inf_{\psi \in \Psi(\eta)}T^{(n_k)}(\psi) > T^{(n_k)}(\xi^{(k)})
\]
for all $k$. By Lemma~\ref{lem:subquad}, we must have $T^{(n_k)}(\xi^{(k)})/(np^2) \to 0$. This implies that $\alpha(\xi^{(k)}) \to 0$. By Lemma~\ref{lem:NeymanPearson_complex}, we know that for $k$ sufficiently large
\[
\sup_{\psi \in \Psi:\alpha(\psi) = \alpha(\xi^{(k)})} \beta(\psi) \geq \beta(\xi^{(k)}).
\]
Let $\psi^{(k)}$ be the maximiser of the LHS. In order for $T^{(n_k)}(\psi^{(k)}) > T^{(n_k)}(\xi^{(k)})$, it must be the case that $\E_{M\sim\psi^{(k)}_M} M > \E_{M \sim \xi^{(k)}_M} M$.
However we claim that $\xi=\psi^{(k)}$ minimises $\E_{M\sim \xi_M} M$ among all $\xi \in \Xi$ with $\alpha(\xi) \leq \alpha(\xi^{(k)})=:\alpha_0$, which gives a contradiction and completes the proof.
Let $f$ be the function that linearly interpolates the points
\[
\bigg(\frac{1}{p^2} \sum_{j,k} \gamma_{j,k}^M, \, M\bigg)_{M=1}^\infty.
\]
Note that $f$ is decreasing.
By considering the inverse of $f$ it is clear that $f$ is convex. With a slight abuse of notation, write $\xi(M, \tau)$ for the element of $\xi \in \Xi$ such that $\xi_M$ places all mass on $M$ and $\xi_\tau=\tau$. Note that
\begin{equation*}
\E_{M \sim \xi_M} M = \E_{M\sim \xi_M} f[\alpha\{\xi(M, 0)\}] \geq \E_{M \sim \xi_M} f[\alpha\{\xi(M, \xi_\tau)\}].
\end{equation*}
Now suppose $\xi$ has $\alpha(\xi) \leq \alpha_0$. Then from the above and Jensen's inequality,
\[
\E_{M \sim \xi_M} M \geq f\big(\E_{M \sim \xi_M} \alpha(\xi(M, \xi_\tau))\big) \geq f(\alpha_0) = \E_{M\sim\psi^{(k)}_M} M.  \qedhere
\]

\subsection*{Proof of Theorem~\ref{thm:minsamp}}
First note that from \eqref{eq:eta} we have $L \leq \log(1-\eta')/\log(1-\gamma^M) + 1$. Then using the inequality $\log(1-x) \leq -x$ for $x \in (0,1)$, we have
\[
L \leq \frac{\log\{1/(1-\eta')\} + 1}{\gamma^M}.
\]
Note that from the definition of $\gamma_0$ we have $\gamma^{-M} = p^{\log(\gamma)/\log(\gamma_0)}$. We then see that
\begin{align*}
\gamma^{-M}\E(E_1) &= \gamma^{-M}\sum_{j,k} \gamma_{jk}^M \\
&\leq \gamma^{-M}\bigg(\sum_{j,k : \gamma_{jk} > \gamma } \gamma_{jk}^M + \sum_{j,k : \gamma_0 < \gamma_{jk} \leq \gamma } \gamma_{jk}^M + \sum_{j,k:\gamma_{jk} \leq \gamma_0} \gamma_{jk}^M \bigg)\\
&\leq c_1p\gamma^{-M} + c_2p^{1 + \log(\gamma)/\log(\gamma_0)} + p^2\gamma_0^M\gamma^{-M} \\
&\leq (c_1 + c_2 + 1)p^{1 + \log(\gamma)/\log(\gamma_0)}.
\end{align*}
Collecting together the terms in \eqref{eq:complexity} we have
\begin{align*}
C(M,L) \leq np + [\log\{1/(1-\eta')\} + 1][\log(p)\{1 + 1/\log(\gamma_0^{-1})\}  + n(c_1 + c_2 + 1)]p^{1 + \log(\gamma)/\log(\gamma_0)}
\end{align*}
from which the result easily follows.
\subsection*{Proof of Proposition~\ref{prop:Pareto}}
Let $\eta^* = \eta(M^*, L)$. Note that in order for $\eta(M', L') \geq \eta^*$ it must be the case that $L' \geq \log(1-\eta^*)/\log(1-\gamma^{M'})$. Therefore
\begin{align}
C(M',L') -np &\geq \frac{\log(1-\eta^*)}{\log(1-\gamma^{M'})}\bigg(M'p + p\log(p) + n\sum_{j,k}\gamma_{jk}^{M'}\bigg) \notag\\
&\geq \min_{M \in \N} \frac{\log(1-\eta^*)}{\log(1-\gamma^{M})}\bigg(Mp + p\log(p) + n\sum_{j,k}\gamma_{jk}^{M}\bigg) \label{eq:min_M}\\
&= \frac{\log(1-\eta^*)}{\log(1-\gamma^{M^*})}\bigg(M^*p + p\log(p) + n\sum_{j,k}\gamma_{jk}^{M^*}\bigg) = C(M^*, L). \notag
\end{align}
Moreover, the inequality leading to \eqref{eq:min_M} is strict if $M^*$ is the unique minimiser and $M' \neq M^*$.
\subsection*{Technical lemmas} \label{sec:lemmas}
\begin{lem} \label{lem:dist_ineq}
Let $F \in \mathcal{F}$ and suppose $(R_i)_{i=1}^\infty$ is an i.i.d.\ sequence with $R_i \sim F$.

Then for all $a > 0$, there exists $c_1, c_2 >0$ and $l_0 \in \mathbb{N}$ such that for all $l \geq l_0$ and $0 \leq \tau \leq a \sqrt{l}$ we have
\[
\frac{c_1\tau}{\sqrt{l}} \leq \pr\Big(\big| \sum_{i=1}^l R_i \big| \leq \tau\Big) \leq \frac{c_2\tau}{\sqrt{l}}.
\]
\end{lem}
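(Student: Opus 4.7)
The plan is to reduce the two-sided bound to a local central limit theorem for densities. After the paper's normalisation we may assume $\mathrm{Var}(R_1) = 1$. Let $S_l = \sum_{i=1}^l R_i$ and $T_l = S_l/\sqrt{l}$; denote the density of $T_l$ by $g_l$. Since $F \in \mathcal{F}$ has a bounded density, the standard inequality $\|f * h\|_\infty \le \|f\|_\infty \|h\|_1$ shows that the $l$-fold convolution giving the density of $S_l$ is uniformly bounded, and rescaling yields a bounded continuous $g_l$ for every $l \ge 1$. The hypotheses defining $\mathcal{F}$ (absolute continuity, bounded density, finite third moment, unit second moment) are precisely those of Gnedenko's local central limit theorem, so
\[
\sup_{x \in \mathbb{R}} |g_l(x) - \phi(x)| \longrightarrow 0 \quad \text{as } l \to \infty,
\]
where $\phi$ is the standard normal density.

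Given $a > 0$, I would use this uniform convergence to pick $l_0$ so large that for all $l \ge l_0$ and all $x \in \mathbb{R}$, $|g_l(x) - \phi(x)| \le \phi(a)/2$. Since $\phi$ is decreasing on $[0,\infty)$ and even, this immediately gives
\[
\tfrac{1}{2}\phi(a) \; \le \; g_l(x) \; \le \; \phi(0) + \tfrac{1}{2}\phi(a) \qquad \text{for every } x \in [-a,a].
\]
For $\tau \in [0, a\sqrt{l}]$, change variables to write
\[
\pr\!\left(\Big|\sum_{i=1}^l R_i \Big| \le \tau\right) \;=\; \pr\bigl(|T_l| \le \tau/\sqrt{l}\bigr) \;=\; \int_{-\tau/\sqrt{l}}^{\tau/\sqrt{l}} g_l(x)\, dx.
\]
Because $\tau/\sqrt{l} \le a$, the integration region lies inside $[-a,a]$, so bounding $g_l$ from above and below via the previous display gives
\[
\frac{\phi(a)\,\tau}{\sqrt{l}} \;\le\; \pr\!\left(\Big|\sum_{i=1}^l R_i \Big| \le \tau\right) \;\le\; \frac{\bigl(2\phi(0) + \phi(a)\bigr)\tau}{\sqrt{l}},
\]
which is the required conclusion with $c_1 = \phi(a)$ and $c_2 = 2\phi(0) + \phi(a)$.

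The only real obstacle is the appeal to a uniform local CLT, but the class $\mathcal{F}$ was defined with exactly the right regularity: bounded density (ensuring $g_l$ is bounded for each $l$ and more importantly that Gnedenko's theorem applies) together with a finite third moment (in fact finite variance suffices). Symmetry of $F$ is not needed for this argument; it is used elsewhere in the paper but plays no role here. Consequently no additional hypotheses are required beyond those already assumed, and the constants $c_1, c_2$ in the statement depend only on $a$ and $F$ (through $\phi$, after the variance normalisation).
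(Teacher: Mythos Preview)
Your argument is correct and follows essentially the same route as the paper: write the probability as an integral of the density $g_l$ of the normalised sum over $[-\tau/\sqrt{l},\tau/\sqrt{l}]$, invoke a local central limit theorem to control $g_l$ uniformly by $\phi$, and read off the two-sided bound. The only minor difference is that the paper cites Petrov's quantitative local CLT, giving $|f_l(t)-\phi(t)|\le c/\{\sqrt{l}(1+|t|^3)\}$, whereas you use Gnedenko's qualitative version $\sup_t|g_l(t)-\phi(t)|\to 0$; since only uniform smallness of the error (not its rate) is needed here, your weaker tool suffices and arguably makes the argument slightly cleaner.
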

\begin{proof}
Let $f_l$ be the density of $\sum_{i=1}^l R_i/ \sqrt{l}$. Note that as $\E(|R_1|^3) <\infty$, we must have $\E(R_1^2) <\infty$, so we may assume without loss of generality that $\E(R_1^2)=1$. Then by Theorem 3 of \citet{Petrov1964} we have that for sufficiently large $l$,
\begin{equation} \label{eq:density_clt}
|f_l(t)- \phi(t)|\leq \frac{c}{\sqrt{l}(1+|t|^3)}.
\end{equation}
Here $c$ is a constant and $\phi(t) = e^{-t^2/2}/\sqrt{2\pi}$ is the standard normal density. Now by the mean value theorem, we have
\[
2\inf_{0\leq t\leq \tau/\sqrt{l}}\{f_l(t)\} \frac{\tau}{\sqrt{l}} \leq \pr\Big(\big| \sum_{i=1}^l R_i \big|/\sqrt{l} \leq \tau/\sqrt{l}\Big) \leq 2\sup_{0\leq t\leq \tau/\sqrt{l}}\{f_l(t)\} \frac{\tau}{\sqrt{l}}.
\]
Thus from \eqref{eq:density_clt}, for $l$ sufficiently large we have
\begin{align*}
\pr\Big(\big| \sum_{i=1}^l R_i \big| \leq \tau\Big) \geq \frac{\tau}{\sqrt{l}} \bigg(\frac{\sqrt{2}}{\sqrt{\pi}}\exp\{-\tau^2 / (2l)\}  -\frac{2c}{\sqrt{l}}\bigg).
\end{align*}
Note that for $a>0$ and $l$ sufficiently large we have $\sqrt{2/\pi} e^{-a^2/2} > 2c/\sqrt{l}$, whence
\[
\pr\Big(\big| \sum_{i=1}^l R_i \big| \leq \tau \Big) \geq \frac{c_1\tau}{\sqrt{l}}
\]
for $0\leq \tau \leq a \sqrt{l}$, some $c_1 > 0$. A similar argument yields the upper bound in the final result.
\end{proof}

\begin{lem} \label{lem:binomailsqrt}
Suppose $\gamma \in [0,1)$.
For all $M \in \N$ we have
\begin{equation} \label{eq:binom_upper}
\sum_{r=1}^M \frac{1}{\sqrt{r}} \binom{M}{r} (1-\gamma)^r \gamma^{M-r}\leq \frac{\sqrt{2}}{\sqrt{(1-\gamma)M}}.
\end{equation}
Given $r_0 \in \N$ and $\gamma \in [0,1)$, there exists $c>0$ and $M_0 \in \N$ such that for all $M \geq M_0$ we have
\begin{equation} \label{eq:binom_lower}
\sum_{r=r_0}^M \frac{1}{\sqrt{r}} \binom{M}{r} (1-\gamma)^r \gamma^{M-r} \geq \frac{c}{\sqrt{(1-\gamma)M}}.
\end{equation}
\end{lem}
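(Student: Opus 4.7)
The plan is to interpret both sums as expectations involving a Binomial random variable $R \sim \mathrm{Binom}(M, 1-\gamma)$, so that $\sum_{r=1}^M r^{-1/2} \binom{M}{r}(1-\gamma)^r \gamma^{M-r} = \E[R^{-1/2}\mathbbm{1}_{\{R\geq 1\}}]$ and the truncated sum in \eqref{eq:binom_lower} equals $\E[R^{-1/2}\mathbbm{1}_{\{R\geq r_0\}}]$. Both bounds then reduce to well-understood concentration/integrability properties of $R$, whose mean is $M(1-\gamma)$ and whose variance is $M\gamma(1-\gamma)$.

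For the upper bound \eqref{eq:binom_upper}, I would first use the elementary inequality $1/\sqrt{r} \leq \sqrt{2}/\sqrt{r+1}$ for $r \geq 1$, which reduces the problem to bounding $\E[(R+1)^{-1/2}]$. Jensen's inequality applied to the concave function $\sqrt{\cdot}$ gives $\E[(R+1)^{-1/2}] \leq \sqrt{\E[1/(R+1)]}$. The key computational step is the classical identity $\binom{M}{r}/(r+1) = \binom{M+1}{r+1}/(M+1)$, which after a shift of summation index produces the closed form
\[
\E\!\left[\frac{1}{R+1}\right] = \frac{1 - \gamma^{M+1}}{(M+1)(1-\gamma)} \leq \frac{1}{M(1-\gamma)}.
\]
Chaining the bounds yields the claimed $\sqrt{2}/\sqrt{(1-\gamma)M}$.

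For the lower bound \eqref{eq:binom_lower}, the idea is that $R$ concentrates around its mean $M(1-\gamma)$, which grows without bound since $\gamma$ is fixed in $[0,1)$. I would restrict the sum to the event $A_M := \{M(1-\gamma)/2 \leq R \leq 2M(1-\gamma)\}$, on which $1/\sqrt{R} \geq 1/\sqrt{2M(1-\gamma)}$. Chebyshev's inequality gives
\[
\pr(A_M^c) \leq \frac{4\,\Var(R)}{M^2(1-\gamma)^2} = \frac{4\gamma}{M(1-\gamma)} \to 0,
\]
so $\pr(A_M) \geq 1/2$ for $M \geq M_0$ with $M_0$ large. For $M_0$ further enlarged so that $M_0(1-\gamma)/2 \geq r_0$ (possible since $r_0$ is fixed), the event $A_M$ is contained in $\{R \geq r_0\}$, which gives $\E[R^{-1/2}\mathbbm{1}_{\{R \geq r_0\}}] \geq \pr(A_M)/\sqrt{2M(1-\gamma)} \geq 1/(2\sqrt{2(1-\gamma)M})$, yielding $c = 1/(2\sqrt{2})$.

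Neither step is technically hard; the only subtle point is getting an upper bound of the correct order $1/\sqrt{(1-\gamma)M}$, since a naive Jensen applied directly to the convex function $1/\sqrt{\cdot}$ would go the wrong way. The $r \mapsto r+1$ shift and the resulting telescoping into the exact $\E[1/(R+1)]$ formula is what rescues this.
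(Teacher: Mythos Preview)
Your upper bound argument is essentially identical to the paper's: the same $1/\sqrt{r}\le\sqrt{2}/\sqrt{r+1}$ shift, the same Jensen step to $\sqrt{\E[1/(R+1)]}$, and the same closed-form computation of $\E[1/(R+1)]$ via the binomial identity.

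For the lower bound the two arguments diverge. The paper conditions on $\{J\ge r_0\}$ and applies Jensen's inequality to the convex map $x\mapsto 1/\sqrt{x}$ on the conditional distribution, obtaining
\[
\E\bigl(J^{-1/2}\mathbbm{1}_{\{J\ge r_0\}}\bigr)\;\ge\;\frac{\pr(J\ge r_0)^{3/2}}{\sqrt{(1-\gamma)M}},
\]
and then uses only that $\pr(J\ge r_0)\to 1$. Your route instead restricts to the two-sided window $\{M(1-\gamma)/2\le R\le 2M(1-\gamma)\}$ and controls its probability by Chebyshev. Both are valid; the paper's argument is a touch slicker in that it never needs an upper truncation or a variance bound, while yours is more elementary and gives an explicit constant $c=1/(2\sqrt{2})$ rather than an implicit one depending on how close $\pr(J\ge r_0)$ is to $1$.
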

\begin{proof}
First we show the upper bound \eqref{eq:binom_upper}. Let $J \sim \mathrm{Binomial}(M,1-\gamma)$.
\begin{align*}
\sum_{r=1}^M \frac{1}{\sqrt{r}} \binom{M}{r} (1-\gamma)^r \gamma^{M-r} & \leq \sqrt{2} \sum_{r=1}^M \frac{1}{\sqrt{r+1}}\binom{M}{r}(1-\gamma)^r\gamma^{M-r} \\
&\leq \sqrt{2}\E(1/\sqrt{J+1}).
\end{align*}
Next, by Jensen's inequality we have $\E(1/\sqrt{J+1}) \leq \sqrt{\E\{1/(J+1)\}}$. We now compute $\E\{1/(J+1)\}$ as follows.
\begin{align*}
\E\bigg(\frac{1}{J+1}\bigg) &= \sum_{r=0}^M \frac{1}{r+1} \binom{M}{r} (1-\gamma)^r \gamma^{M-r} \\
&= \frac{1}{M+1} \sum_{r=0}^M \binom{M+1}{r+1} (1-\gamma)^r \gamma^{M-r} \\
&= \frac{1}{(1-\gamma)(M+1)} \sum_{r=0}^M \binom{M+1}{r+1} (1-\gamma)^{r+1} \gamma^{M-r} \\
&= \frac{1-\gamma^{M+1}}{(1-\gamma)(M+1)} \leq \frac{1}{(1-\gamma)(M+1)}.
\end{align*}
Putting things together gives \eqref{eq:binom_upper}.

Turning now to \eqref{eq:binom_lower}, we see that the LHS equals
\[
\E(1/\sqrt{J}\ind_{\{J\geq r_0\}}) = \E(1/\sqrt{J}|J\geq r_0)\pr(J\geq r_0).
\]
By Jensen's inequality we have
\begin{align*}
\E(1/\sqrt{J}|J\geq r_0)\geq \frac{1}{\sqrt{\E(J|J \geq r_0)}}=\frac{\sqrt{\pr(J\geq r_0)}}{\sqrt{\E(J\ind_{\{J \geq r_0\}})}} \geq \frac{\sqrt{\pr(J\geq r_0)}}{\sqrt{(1-\gamma)M}}.
\end{align*}
But as $M \to \infty$, $\pr(J\geq r_0) \to 1$, which easily gives the result.
\end{proof}

\begin{lem}\label{lem:concave_ineq}
Let $f:[0,\infty) \to [0,1]$ be non-decreasing. Suppose there exists $0 < \alpha_1 < \alpha_0$ such that:
\begin{enumerate}[(i)]
\item $f$ is concave on $[0,\alpha_0]$;
\item $-\sup\big(\partial(-f)(\alpha_1)\big) \geq \{1-f(\alpha_1)\}/(\alpha_0-\alpha_1)$, where $\partial(-f)(\alpha_1)$ denotes the subdifferential of the function $-f$ at $\alpha_1$.
\end{enumerate}
Then if random variable $X$ has $\E(X) \leq \alpha_0$, then $f(\E X) \geq \E f(X)$.
\end{lem}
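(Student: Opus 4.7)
The natural strategy is to construct a globally concave majorant $\tilde f$ of $f$ on $[0,\infty)$ such that $\tilde f(\E X)=f(\E X)$, and then invoke Jensen's inequality to obtain $f(\E X)=\tilde f(\E X)\ge \E\tilde f(X)\ge \E f(X)$. Condition (ii) is engineered exactly for this: since $-f$ is convex, $-\sup\partial(-f)(\alpha_1)=f'_+(\alpha_1)$, so (ii) asserts that the right tangent line $\ell(x):=f(\alpha_1)+f'_+(\alpha_1)(x-\alpha_1)$ satisfies $\ell(\alpha_0)\ge 1$.

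The candidate majorant is
\[
\tilde f(x)=\begin{cases} f(x), & x\in[0,\alpha_1],\\ \min\{\ell(x),1\}, & x>\alpha_1.\end{cases}
\]
I would verify: (a) $\tilde f$ is concave on $[0,\infty)$, since on $[0,\alpha_1]$ it equals the concave $f$, at $\alpha_1$ the slope drops from $f'_-(\alpha_1)$ to $f'_+(\alpha_1)$ by concavity, and at the kink point $x^\ast=\alpha_1+(1-f(\alpha_1))/f'_+(\alpha_1)\le\alpha_0$ the slope drops further to $0$; and (b) $\tilde f\ge f$ pointwise, since on $[0,\alpha_1]$ equality holds, on $(\alpha_1,\alpha_0]$ the supporting-hyperplane property of concave $f$ yields $\ell\ge f$ (and $1\ge f$), and on $(\alpha_0,\infty)$ monotonicity of $\ell$ combined with $\ell(\alpha_0)\ge 1$ forces $\tilde f(x)=1\ge f(x)$. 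Jensen applied to $\tilde f$ then delivers $\tilde f(\E X)\ge \E\tilde f(X)\ge\E f(X)$.

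The main obstacle is closing the loop $f(\E X)\ge \tilde f(\E X)$ throughout the stated range $\E X\in[0,\alpha_0]$. For $\E X\le\alpha_1$ this is automatic because $\tilde f=f$ there. For $\E X\in(\alpha_1,\alpha_0]$, however, the line $\ell$ strictly dominates $f$ on the interior of $[\alpha_1,\alpha_0]$ by concavity, so typically $\tilde f(\E X)>f(\E X)$ and the chain of inequalities runs the wrong way. A variant using the tangent at $\mu=\E X$ in place of the tangent at $\alpha_1$ succeeds precisely on $\{\mu:H(\mu)\ge 1\}$, where $H(x):=f(x)+f'_+(x)(\alpha_0-x)$. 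The function $H$ is non-increasing on $[0,\alpha_0]$ (by concavity of $f$ combined with monotonicity of $f'_+$) and $H(\alpha_1)\ge 1$ by (ii), so this variant extends the conclusion past $\alpha_1$, but generally not all the way to $\alpha_0$. Resolving the full range therefore appears to require additional input beyond what (ii) provides---either strengthening (ii), adopting the auxiliary assumption $f(\alpha_0)=1$ (under which $f$ is already globally concave and Jensen applies directly), or reading the conclusion as $\E X\le\alpha_1$, which is the form actually invoked in the proof of Lemma~\ref{lem:NeymanPearson_complex}.
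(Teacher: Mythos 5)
Your construction is essentially the paper's own proof. The paper defines $g$ equal to $f$ on $[0,\alpha_1]$ and equal to the right tangent $f(\alpha_1)+m(x-\alpha_1)$, $m=-\sup\partial(-f)(\alpha_1)$, beyond it (without capping at $1$; since $m\geq 0$ and $g(\alpha_0)\geq 1$, the uncapped line also majorises $f$ on $(\alpha_0,\infty)$, so the cap in your $\tilde f$ is an immaterial variant), notes $g\geq f$ with $g$ concave (the word ``convex'' in the paper is a typo), and applies Jensen exactly as you do. Your diagnosis of the range issue is also correct, and you can stop hedging: the paper's proof itself closes with ``Thus if $\E(X)\leq \alpha_1$\ldots'', so it only establishes the $\E X\leq\alpha_1$ version, and the statement as printed with $\E X \leq \alpha_0$ is in fact false. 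Concretely, take $\alpha_1=1$, $\alpha_0=2$, $\epsilon\in(0,1/3)$ and
\[
f(x)=\begin{cases} x/2, & 0\leq x\leq 1,\\ \tfrac{1}{2}+\tfrac{1}{2}\min(x-1,\epsilon), & 1< x\leq 2,\\ 1, & x>2. \end{cases}
\]
Then $f$ is non-decreasing with values in $[0,1]$ and concave on $[0,\alpha_0]$, and $-\sup\big(\partial(-f)(1)\big)=1/2=\{1-f(1)\}/(\alpha_0-\alpha_1)$, so (i) and (ii) hold; yet for $X$ with $\pr(X=0)=1/3$ and $\pr(X=3)=2/3$ we get $\E X = 2 = \alpha_0$ while $\E f(X)=2/3 > 1/2+\epsilon/2 = f(\E X)$.

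This confirms your structural observation: (ii) controls the tangent only at $\alpha_1$, and your function $H(\mu)=f(\mu)+f'_+(\mu)(\alpha_0-\mu)$ is indeed non-increasing on $[0,\alpha_0]$, so it can fall below $1$ on $(\alpha_1,\alpha_0]$, which is exactly what the counterexample exploits. As you note, nothing downstream is harmed: in the proof of Lemma~\ref{lem:NeymanPearson_complex} the lemma is invoked with $\E_{M\sim\xi_M}\alpha\{\xi(M,\xi_\tau)\}=\alpha(\xi)\leq\alpha_1$, so only the $\E X\leq\alpha_1$ case is ever used, and the lemma's statement should simply be read (or corrected) accordingly.
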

\begin{proof}
Write $m=-\sup\big(\partial(-f)(\alpha_1)\big)$
Let function $g:[0, \infty) \to [0, \infty)$ be defined as follows.
\begin{equation*}
g(x) = \begin{cases}
f(x) \qquad &\text{if } 0\leq x \leq \alpha_1 \\
f(\alpha_1) + m(x-\alpha_1) \qquad &\text{if } x > \alpha_1.
\end{cases}
\end{equation*}
Note that $g$ thus defined has $g(\alpha_0) \geq 1$. We see that $g$ is convex and $g \geq f$. Thus if $\E(X) \leq \alpha_1$, by Jensen's inequality we have
\[
f ( \E X) = g( \E X) \geq \E g(X) \geq \E f(X).
\]
\end{proof}

\section*{Appendix B}

\subsection*{Connection to LSH}

Minimal subsampling as considered in Algorithm \ref{alg:final_xyz} is closely related to the locality-sensitive hashing (LSH) framework: Define $h(j)=\mb R^T\matr{X}_j$ ($\mb R$ corresponds to the minimal subsampling projection) to be the hashing function and $\mathcal{H}$ to be the family of such functions, from which we sample uniformly. Then $\mathcal{H}$ is $(\gamma,c \gamma,p_1,p_2)$-sensitive, that is:
\begin{align*}
&\bullet \textrm{ if } \gamma_{jk} \geq \gamma \textrm{ then } \mathbb{P}(h(j)=h(k)) \geq p_1\\
&\bullet \textrm{ if } \gamma_{jk} \leq c\gamma \textrm{ then } \mathbb{P}(h(j)=h(k)) \leq p_2,
\end{align*}
where $0 < c < 1$. In the case of the minimal subsampling we have $p_1=\gamma^M$ and $p_2=\gamma^M c^M$. However, the typical LSH machinery cannot be applied directly to the equal pairs problem above. In our setting, we are not interested in preserving close pairs but rather the closest pairs.
Theorem \ref{thm:optim}
establishes that the family $\mathcal{H}$ leads to the maximal ratio $p_1/p_2$ among all linear hashing families.

\section*{Appendix C}

\subsection*{Proof of Proposition \ref{lem:transform}}

\begin{proof}
\begin{align*}
\mathbb{P}(\sgn(Y_i)=\tilde{X}_{ij}\tilde{X}_{ik}) &= \frac{\sgn(Y_i)+1}{2}(g(X_{ij})g(X_{ik})+(1-g(X_{ij}))(1-g(X_{ik}))) \\
&+\frac{1-\sgn(Y_i)}{2} (g(X_{ij})(1-g(X_{ik}))+(1-g(X_{ij}))g(X_{ik}))\\
&=\frac{1}{2}+\frac{\sgn(Y_i)}{2}(1-2g(X_{ij}))(1-2g(X_{ik})).
\end{align*}
\end{proof}
\section*{Appendix D}

\subsection*{The unbiased transform and the sign transform}

\textbf{Proposition \ref{prop:unbiased}}
\begin{proof}
The equation
\begin{equation*}
\E[\tilde{X}_{ij}] = \mathbb{P}(\tilde{X}_{ij}=1)-\mathbb{P}(\tilde{X}_{ij}=-1) = X_{ij},
\end{equation*}
implies
\begin{equation*}
\mathbb{P}(\tilde{X}_{ij}=1) = \frac{X_{ij}+1}{2}
\end{equation*}
This uniquely determines the unbiased transform.
\end{proof}
Next we show two Lemmas that will be useful when proving Theorems~\ref{thm:unbiased} and~\ref{thm:signdist}.
\begin{lem}
\label{lem:unbiasedhelper}
Consider the setup of Theorem~\ref{thm:unbiased}.
Then there exists constants $ C_1^\varepsilon, C_2^\varepsilon > 0$ such that defining
\[
\alpha^u_{n,p} = \alpha^u_{n,p}(t) = \bigg(1+ \frac{t+\log(n C_1^\varepsilon)}{C^{\varepsilon}_2}\bigg)\sqrt{2\{t+\log(4p)\}/n},
\]
with probability at least $1-2\exp(-t)$ we have:
\begin{align*}
\frac{\sum_{i} Y_i  X_{ij^*} X_{ik^*} } { \| \mb Y \|_1} &\notin  \Big [-\frac{ m_2-\alpha^u_{n,p}}{m_1 +  m_\varepsilon +\alpha^u_{n,p}},\frac{ m_2-\alpha^u_{n,p}}{m_1 +  m_\varepsilon +\alpha^u_{n,p}} \Big] \\
\frac{\sum_{i=1}^n Y_i X_{ij} X_{ik}}{\| \mb Y \|_1}  &\in \Big [ - \frac{ m_2 (1- r_u) +\alpha^u_{n,p}}{  m_1 -\alpha^u_{n,p}}, \frac{ m_2 (1 - r_u) +\alpha^u_{n,p}}{  m_1 -\alpha^u_{n,p}} \Big ] \,\, \forall (j,k) \neq (j^*,k^*).
\end{align*}
\end{lem}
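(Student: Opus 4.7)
The plan is to expand both numerator and denominator of each ratio using $Y_i = X_{ij^*}X_{ik^*} + \varepsilon_i$, control each resulting piece by a concentration inequality suited to its tail (Hoeffding for bounded $\mb X$-terms, a sub-exponential Bernstein inequality for the $\varepsilon$-terms), and then apply a union bound over the at most $p^2$ pairs $(j,k)$. Concretely, I would write
\[
\sum_i Y_i X_{ij^*}X_{ik^*} = \sum_i X_{ij^*}^2 X_{ik^*}^2 + \sum_i \varepsilon_i X_{ij^*}X_{ik^*},
\]
and for $(j,k) \neq (j^*,k^*)$,
\[
\sum_i Y_i X_{ij}X_{ik} = \sum_i X_{ij^*}X_{ik^*}X_{ij}X_{ik} + \sum_i \varepsilon_i X_{ij}X_{ik}.
\]
In each case the first summand is a sum of i.i.d.\ random variables with values in $[-1,1]$, of expectation $n m_2$ in the true-interaction case and of expectation in $[-n m_2(1-r_u), n m_2(1-r_u)]$ for $(j,k) \neq (j^*,k^*)$ by (B1). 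The second summand is a sum of independent, centred (by symmetry of $\varepsilon_i$) sub-exponential random variables, since $|X_{ij}X_{ik}| \leq 1$ preserves sub-exponentiality.

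For the denominator I would use the sandwich
\[
\abs{\sum_i \abs{X_{ij^*}X_{ik^*}} + \sum_i \varepsilon_i \sgn(X_{ij^*}X_{ik^*})} \leq \|\mb Y\|_1 \leq \sum_i \abs{X_{ij^*}X_{ik^*}} + \sum_i \abs{\varepsilon_i},
\]
whose lower bound follows from $\|\mb Y\|_1 \geq \abs{\sum_i Y_i \sgn(X_{ij^*}X_{ik^*})}$. The bounded sums concentrate at $n m_1$ by Hoeffding; the $\varepsilon$-based sums concentrate at $n m_\varepsilon$ and at $0$ respectively by sub-exponential Bernstein. This is the step at which the constants $C_1^\varepsilon, C_2^\varepsilon$ enter: they are the two Bernstein constants of the common sub-exponential distribution of $\varepsilon_i$.

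Combining these ingredients with a union bound of Hoeffding over the $\leq p^2$ quartic products in $\mb X$ and a union bound of Bernstein over the $\leq p^2$ noise-weighted sums yields a single event of probability at least $1 - 2e^{-t}$ on which every one of the deviations above is bounded in absolute value by $n \alpha^u_{n,p}(t)$. On this event the true-interaction ratio satisfies
\[
\frac{\abs{\sum_i Y_i X_{ij^*}X_{ik^*}}}{\|\mb Y\|_1} \geq \frac{n m_2 - n\alpha^u_{n,p}}{n(m_1+m_\varepsilon) + n\alpha^u_{n,p}},
\]
giving the first conclusion, while for any $(j,k)\neq(j^*,k^*)$ the non-interaction ratio is bounded in absolute value by $(m_2(1-r_u)+\alpha^u_{n,p})/(m_1 - \alpha^u_{n,p})$, giving the second.

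The main obstacle is matching the precise shape of $\alpha^u_{n,p}$. The square-root factor $\sqrt{2(t+\log(4p))/n}$ is standard Hoeffding after a union bound over at most $4p$ events that one carefully groups together, but the multiplicative prefactor $1 + (t+\log(nC_1^\varepsilon))/C_2^\varepsilon$ is engineered to absorb the sub-exponential large-deviation tail into the same deviation form; concretely, Bernstein's inequality for centred sub-exponential sums gives a two-regime bound of the shape $\max\!\big(\sigma\sqrt{(t+\log p)/n},\, K(t+\log p)/n\big)$, and one rewrites this as a multiplicative inflation of the Hoeffding bound so that both types of deviation can be controlled simultaneously by a single uniform $\alpha^u_{n,p}$ in both numerator and denominator. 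The $\log n$ inside $\alpha^u_{n,p}$ ultimately reflects the maximum of $n$ i.i.d.\ sub-exponential magnitudes being of order $\log n$.
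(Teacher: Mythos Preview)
Your decomposition and overall strategy match the paper's exactly: split numerator and denominator using $Y_i = X_{ij^*}X_{ik^*}+\varepsilon_i$, concentrate each piece, and union-bound over the $p^2$ pairs. The one genuine methodological difference is how the $\varepsilon$-terms are handled. The paper does \emph{not} invoke Bernstein directly; instead it truncates $\varepsilon_i$ at a level $\sigma$, applies Hoeffding to the bounded truncated sums (yielding deviation $\sigma\sqrt{2(t+\log(4p))/n}$), and then chooses $\sigma=(t+\log(nC_1^\varepsilon))/C_2^\varepsilon$ so that $\pr(\max_i|\varepsilon_i|>\sigma)\le e^{-t}$ via the sub-exponential tail and a union bound over the $n$ observations. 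This is precisely why the prefactor carries a $\log n$, and indeed your final sentence identifies exactly this mechanism---so you are closer to the paper's argument than your Bernstein discussion suggests. A direct Bernstein route would also work and would in fact give a slightly cleaner $\alpha^u_{n,p}$ without the $\log n$ (the two-regime bound you wrote down), but it would not reproduce the stated form verbatim; the truncation-then-Hoeffding device is what produces the multiplicative shape $(1+\sigma)\sqrt{2(t+\log(4p))/n}$ exactly as stated.
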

\begin{proof}
First we consider a capped version of $\varepsilon$:
\begin{equation*}
\varepsilon_i^\prime = \begin{cases}
\varepsilon_{i} \textrm{ if } |\varepsilon_{i}| \leq \sigma\\
\sigma \sgn(\varepsilon_i) \textrm{ otherwise},
\end{cases}
\end{equation*}
where $\sigma$ is to be chosen later. We may apply Hoeffding's inequality to these bounded variables. We have to bound two terms:
\[
\frac{\sum_{i=1}^n Y_i X_{ij^*} X_{ik^*}}{ \| \mb Y \|_1 } \textrm{ from below and } \frac{\sum_{i=1}^n Y_i X_{ij} X_{ik}}{ \| \mb Y \|_1 } \textrm{ from above, for } (j,k) \neq (j^*,k^*).
\]
Schematically the first term can be dealt with in the following way:
\begin{align}
\label{ineq:useful}
\mathbb{P}\Big(\frac{A+B}{C+D} \geq \frac{a+b}{c+d} \Big) &\geq 1-\mathbb{P}(A \leq a)-\mathbb{P}(B \leq b)- \mathbb{P}(C \geq c)-\mathbb{P}(D \geq d)
\end{align}
where
\[
A+B=\sum_{i=1}^n (X_{ij^*} X_{ik^*})^2 +\varepsilon^\prime_{i} X_{ij^*} X_{ik^*} \,\, \textrm{ and } \,\, C+D = \sum_{i=1}^n |X_{ij^*}X_{ik^*}+\varepsilon^\prime_i |.
\]
We deal with each term individually. Using Hoeffding's inequality we get:
\begin{itemize}
\item[$A: $] $\mathbb{P} \Big( \sum_{i=1}^p ( X_{ij^*}X_{ik^*} )^2 \leq  n m_2-\delta \Big) \leq \exp( - \delta^2 /  2n )
)$
\item[$B: $] $\mathbb{P} \Big( \sum_{i=1}^n \varepsilon^\prime_i X_{ij^*}X_{ik^*} \leq -\kappa \Big) \leq \exp( - \kappa^2 /  2 n \sigma^2 )
$
\item[$C: $] $\mathbb{P} \Big( \sum_{i=1}^n | X_{ij^*}X_{ik^*} | \geq  n m_1+\delta \Big) \leq \exp( - \delta^2 /  2n ) $
\item[$D: $] $\mathbb{P} \Big( \sum_{i=1}^n |\varepsilon^\prime_i| \geq n m_\varepsilon+ \kappa \Big)\leq \exp( -  2\kappa^2 /  n \sigma^2)$.
\end{itemize}
This gives us a bound of the interaction strength of the true interaction pair:
\begin{align*}
\mathbb{P} \Big(\frac{\sum_{i} Y_i  X_{ij^*} X_{ik^*}  } { \| \mb Y \|_1} &\geq \frac{n m_2-\delta-\kappa}{nm_1 + n m_\varepsilon +\delta+\kappa}  \Big)\\
&\geq 1- \exp( - \delta^2 /  2n) - \exp( - \delta^2 /  2n) \\
&-\exp( - \kappa^2 /  2 n \sigma^2)-\exp( - \kappa^2 /  2 n \sigma^2)
\end{align*}
Similarly we can treat the interaction strength of the non interacting pairs:
\begin{itemize}
\item[$A: $] Here we use assumption $(B1)$:
\[
m_2 ( r_u-1) \leq \E[X_{ij^*} X_{ik^*} X_{im} X_{io} ] \leq m_2 (1- r_u).
\]
Hence, $\mathbb{P} \Big( \sum_{i=1}^n X_{ij^*} X_{ik^*} X_{ij} X_{ik} \geq n m_2 (1- r_u) + \delta \Big)  \geq \exp( -   \delta / 2 n ).
$
\end{itemize}
For the rest we run the same bounds as before (using $|X_{ij^*}X_{ik^*}+\varepsilon^\prime_i | \geq |X_{ij^*}X_{ik^*}|+\varepsilon^\prime_i$). This yields the bound
\begin{align*}
\mathbb{P} \Big( \frac{\sum_{i=1}^n Y_i X_{ij} X_{ik}}{\| \mb Y \|_1}  &\leq \frac{n m_2 (1- r_u) +\delta + \kappa}{ n m_1 -\delta -\kappa} \Big) \\
&\geq 1- \exp( - \delta^2 /  2n ) - \exp( - \delta^2 /  2n) \\
&-\exp( - \kappa^2 /  2 n \sigma^2 )-\exp( - \kappa^2 /  2 n \sigma^2)
\end{align*}
The above inequality needs to hold for all at most $p^2$ pairs that are not interactions, so that we effectively multiply the exponential terms with $p^2$. Another factor of $2$ is multiplied in for the negative sign, as the fraction also has to be bounded away from $-1$. In total we thus have:
\begin{align*}
\frac{\sum_{i=1}^n Y_i  X_{ij^*} X_{ik^*} } { \| \mb Y \|_1} &\notin  \Big [-\frac{n m_2-\delta-\kappa}{nm_1 + n m_\varepsilon +\delta+\kappa},\frac{n m_2-\delta-\kappa}{nm_1 + n m_\varepsilon +\delta+\kappa} \Big] \\
\frac{\sum_{i=1}^n Y_i X_{ij} X_{ik}}{\| \mb Y \|_1}  &\in \Big [ - \frac{n m_2 (1- r_u) +\delta + \kappa}{ n m_1 -\delta -\kappa}, \frac{n m_2 (1- r_u) +\delta + \kappa}{ n m_1 -\delta -\kappa} \Big ] \,\, \forall (m,o) \neq (j,l)\\
\textrm{ with probability at least }& 1- \exp( - \delta^2 /  2n ) - \exp( - \delta^2 /  2n )-\exp( - \kappa^2 /  2 n \sigma^2 )-\exp( - \kappa^2 /  2 n \sigma^2).
\end{align*}
Finally, let $\sigma \geq 1$, then we have to set $\delta$ and $\kappa$ so that the probability is bigger than $1-\exp(-t)$.   This gives:
\[
\exp(-t) = 4 p \exp(-\delta^2/2n) \textrm{ and } \exp(-t) = 4 p \exp(-\kappa^2/2n \sigma^2).
\]
This gives
\[
\delta = \sqrt{2n (t+\log(4p))} \textrm{ and } \kappa = \sqrt{2n \sigma^2  (t+\log(4p))}.
\]
Thus for $\alpha^u_{n,p} = \frac{\sqrt{2(t+\log(4p))(1+\sigma^2 )}}{\sqrt{n}}$,
\begin{align*}
\frac{\sum_{i} Y_i  X_{ij^*} X_{ik^*} } { \| \mb Y \|_1} &\notin  \Big [-\frac{ m_2-\alpha^u_{n,p}}{m_1 +  m_\varepsilon +\alpha^u_{n,p}},\frac{ m_2-\alpha^u_{n,p}}{m_1 +  m_\varepsilon +\alpha^u_{n,p}} \Big] \\
\frac{\sum_{i=1}^n Y_i X_{ij} X_{ik}}{\| \mb Y \|_1}  &\in \Big [ - \frac{ m_2 (1- r_u) +\alpha^u_{n,p}}{  m_1 -\alpha^u_{n,p}}, \frac{ m_2 (1- r_u) +\alpha^u_{n,p}}{  m_1 -\alpha^u_{n,p}} \Big ] \,\, \forall (j,k) \neq (j^*,k^*)\\
\textrm{ with probability at least }& 1- \exp(-t).
\end{align*}
Now we extend this result to the case of unbounded errors, that is we now assume that with high probability $\varepsilon_i$ are bounded:
\begin{equation*}
\mathbb{P}( \varepsilon_i = \varepsilon^\prime_{i}, \,\, \forall \, i  )=1-\exp(-t).
\end{equation*}
Here we used the sub-exponential tail behavior of $\varepsilon$. We have $\mathbb{P}(|\varepsilon_{i}| \geq t) \leq C_1^\varepsilon \exp(-C_2^\varepsilon t)$. Hence we set
\begin{align*}
t =C^{\varepsilon}_2 \sigma - \log(n C^{\varepsilon}_1 ) \,\, &\Rightarrow \sigma = \frac{t+\log(n C_1^\varepsilon)}{C^{\varepsilon}_2}
\end{align*}
Thus,
\[
\alpha^u_{n,p} = \frac{\sqrt{2\{t+\log(4p)\}\{1+ (\frac{t+\log(n C_1^\varepsilon)}{C^{\varepsilon}_2})\}^2\}}}{\sqrt{n}}
\]
with probability at least $1-2\exp(-t)$ we have:
\begin{align*}
\frac{\sum_{i} Y_i  X_{ij^*} X_{ik^*} } { \| \mb Y \|_1} &\notin  \Big [-\frac{ m_2-\alpha^u_{n,p}}{m_1 +  m_\varepsilon +\alpha^u_{n,p}},\frac{ m_2-\alpha^u_{n,p}}{m_1 +  m_\varepsilon +\alpha^u_{n,p}} \Big] \\
\frac{\sum_{i=1}^n Y_i X_{ij} X_{ik}}{\| \mb Y \|_1}  &\in \Big [ - \frac{ m_2 (1- r_u) +\alpha^u_{n,p}}{  m_1 -\alpha^u_{n,p}}, \frac{ m_2 (1- r_u) +\alpha^u_{n,p}}{ m_1 -\alpha^u_{n,p}} \Big ] \,\, \forall (j,k) \neq (j^*,k^*).
\end{align*}
\end{proof}
Next we prove the equivalent result for the sign transform. The proof is very similar to the unbiased case:
\begin{lem}
\label{lem:signhelper}
Consider the setup of Theorem~\ref{thm:signdist}.
Then there exists constants $C_1^X, C_2^X, C_1^\varepsilon, C_2^\varepsilon > 0$ such that defining
\[
\alpha^s_{n,p}= \alpha^s_{n,p}(t)=\frac{\sqrt{2(t+\log(4p)) \Big( \Big(\frac{t+\log(p n C^{ X}_1)}{C^{ X}_2}\Big)^4+ \Big(\frac{t+\log(n C_1^\varepsilon)}{C^{\varepsilon}_2}\Big)^2 \Big)}}{\sqrt{n}},
\]
with probability at least $1-3\exp(-t)$ we have:
\begin{align*}
\frac{\sum_{i=1}^n Y_i \sgn( X_{ij^*} X_{ik^*} ) } { \| \mb Y \|_1} &\notin  \Big [-\frac{ m_1-\alpha_{n,p}^s}{m_1 +  m_\varepsilon +\alpha_{n,p}^s},\frac{ m_1-\alpha_{n,p}^s}{m_1+  m_\varepsilon +\alpha_{n,p}^s} \Big] \\
\frac{\sum_{i=1}^n Y_i \sgn(X_{ij} X_{ik})}{\| \mb Y \|_1}  &\in \Big [ - \frac{ m_1(1- r_s) +\alpha_{n,p}^s}{  m_1-\alpha_{n,p}^s}, \frac{ m_1(1- r_s) +\alpha_{n,p}^s}{  m_1-\alpha_{n,p}^s} \Big ] \,\,\, \forall \, (m,o) \neq (j^*,k^*).
\end{align*}
\end{lem}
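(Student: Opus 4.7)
The plan is to mirror the structure of the proof of Lemma~\ref{lem:unbiasedhelper} almost verbatim, but with two key modifications forced by the use of the sign transform and by dropping the $\mb X \in [-1,1]$ assumption in favour of only a subexponential tail. First, I would observe that, because $\sgn(X_{ij^*}X_{ik^*}) \in \{-1,+1\}$, the key numerator in the interaction strength of the true pair becomes
\[
\sum_{i=1}^n Y_i \sgn(X_{ij^*} X_{ik^*}) = \sum_{i=1}^n |X_{ij^*}X_{ik^*}| + \sum_{i=1}^n \varepsilon_i \sgn(X_{ij^*}X_{ik^*}),
\]
whose first summand concentrates around $nm_1$ rather than $nm_2$, which is where the $m_1$ in the numerator of the first displayed bound comes from. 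The denominator $\|\mb Y\|_1$ is treated exactly as in Lemma~\ref{lem:unbiasedhelper}, giving the $m_1 + m_\varepsilon$ in the denominator after concentration.

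Next, for a non-interacting pair $(j,k)\neq (j^*,k^*)$, the decomposition \eqref{ineq:useful} again applies but the crucial ``$A$'' term is now $\sum_{i=1}^n X_{ij^*} X_{ik^*}\,\sgn(X_{ij}X_{ik})$. Here I would use assumption~(C1): conditioning on $X_{ik}, X_{ij^*}, X_{ik^*}$ and integrating over $X_{ij}$, the conditional expectation of $\sgn(X_{ij}X_{ik})$ has absolute value at most $1-r_s$, so
\[
\bigl|\E\bigl[X_{ij^*}X_{ik^*}\,\sgn(X_{ij}X_{ik})\bigr]\bigr| \leq (1-r_s)\,\E|X_{ij^*}X_{ik^*}| = m_1(1-r_s),
\]
which produces the $m_1(1-r_s)$ appearing in the second displayed bound. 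The remaining ``$B, C, D$'' terms are handled by Hoeffding as before.

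The main technical obstacle (and the reason $\alpha^s_{n,p}$ differs from $\alpha^u_{n,p}$) is that $\mb X$ is no longer uniformly bounded, so Hoeffding cannot be applied directly. I would address this by a double-capping argument: define
\[
X_{ij}' = X_{ij}\,\ind_{\{|X_{ij}|\leq \sigma_X\}} + \sigma_X\sgn(X_{ij})\,\ind_{\{|X_{ij}|> \sigma_X\}}, \qquad \varepsilon_i' = \varepsilon_i\,\ind_{\{|\varepsilon_i|\leq \sigma_\varepsilon\}} + \sigma_\varepsilon\sgn(\varepsilon_i)\,\ind_{\{|\varepsilon_i|> \sigma_\varepsilon\}},
\]
and choose $\sigma_X = (t+\log(npC_1^X))/C_2^X$ and $\sigma_\varepsilon = (t+\log(nC_1^\varepsilon))/C_2^\varepsilon$. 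Subexponentiality of each $X_{ij}$ together with a union bound over the $np$ entries gives $\pr(X_{ij} = X_{ij}' \text{ for all } i,j) \geq 1 - \exp(-t)$, and analogously for $\varepsilon$. On this good event we may work with the capped variables, which are bounded by $\sigma_X$ and $\sigma_\varepsilon$ respectively; the products $X_{ij}'X_{ik}'$ are then bounded by $\sigma_X^2$, and Hoeffding deviations at level $\delta$ incur variance proxies of order $n\sigma_X^4$ and $n\sigma_\varepsilon^2$. This is precisely what produces the $\sigma_X^4 + \sigma_\varepsilon^2$ shape inside the square root defining $\alpha^s_{n,p}$.

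Finally, I would combine everything with a union bound: two capping events (one for $X$, one for $\varepsilon$, each contributing $\exp(-t)$), plus the Hoeffding events for the true pair and for all $(j,k)\neq(j^*,k^*)$ with an additional factor of $4p$ (for the $\leq p^2$ non-interaction pairs, two sides of the inequality, and both numerator/denominator controls) absorbed into $\log(4p)$ inside $\alpha^s_{n,p}$. The end result is the stated failure probability of $3\exp(-t)$ and the stated inclusions. The main obstacle, as noted, is ensuring that $\sigma_X$ enters $\alpha^s_{n,p}$ only as $\sigma_X^4/n$ under the square root (not worse), which requires being careful to apply Hoeffding to the capped product $X_{ij}'X_{ik}'$ rather than to the individual factors.
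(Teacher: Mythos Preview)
Your proposal is correct and follows essentially the same route as the paper: a double-capping of both $\mb X$ and $\mbb\varepsilon$ at subexponential thresholds $\sigma_X=(t+\log(npC_1^X))/C_2^X$ and $\sigma_\varepsilon=(t+\log(nC_1^\varepsilon))/C_2^\varepsilon$, followed by Hoeffding on the capped variables (where the product bound $\sigma_X^2$ yields the $\sigma_X^4$ term inside $\alpha^s_{n,p}$), and the use of (C1) to control $\E[X_{ij^*}X_{ik^*}\sgn(X_{ij}X_{ik})]$ by $m_1(1-r_s)$. Your conditioning argument for (C1) is in fact slightly cleaner than the paper's explicit computation, but the content is identical and the union-bound bookkeeping leading to $1-3\exp(-t)$ matches.
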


\begin{proof}
First consider capped versions of the random variables of interest:
\begin{equation*}
X^{\prime}_{ij} = \begin{cases}
X_{ij} \textrm{ if } |X_{ij}| \leq M\\
M \sgn(X_{ij}) \textrm{ otherwise}
\end{cases} \,\,\,\, \textrm{ and } \,\,
\varepsilon^{\prime}_{i} = \begin{cases}
\varepsilon_{i} \textrm{ if } |\varepsilon_{i}| \leq \sigma\\
\sigma \sgn(\varepsilon_i) \textrm{ otherwise}
\end{cases}
\end{equation*}
where $M$ and $\sigma$ are to be chosen later.
Given these capped variables we can use Hoeffding's inequality as we now deal with bounded variables. We have to bound two terms:
\[
\frac{\sum_{i=1}^n Y_i \sgn(X^\prime_{ij^*} X^\prime_{ik^*})}{ \| \mb Y \|_1 } \textrm{ from below and } \frac{\sum_{i=1}^n Y_i \sgn(X^\prime_{ij} X^\prime_{ik})}{ \| \mb Y \|_1 } \textrm{ from above, for } (j,k) \neq (j^*,k^*)
\]
As in Lemma \ref{lem:unbiasedhelper} equation (\ref{ineq:useful}):
\[
A+B=\sum_{i=1}^n |X^\prime_{ij^*} X^\prime_{ik^*}| +\varepsilon^\prime_{i} \sgn(X^\prime_{ij^*} X^\prime_{ik^*}) \,\, \textrm{ and } \,\, C+D = \sum_{i=1}^n |X^\prime_{ij^*}X^\prime_{ik^*}+\varepsilon^\prime_i |.
\]
We deal with each term individually. Using Hoeffding's inequality we get:
\begin{itemize}
\item[$A: $] $\mathbb{P} \Big( \sum_{i=1}^p | X^\prime_{ij^*}X^\prime_{ik^*} | \leq  n m_1-\delta \Big) \leq \exp( - \delta^2 /  2n M^4)
)$
\item[$B: $] $\mathbb{P} \Big( \sum_{i=1}^n \varepsilon^\prime_i \leq -\kappa \Big) \leq \exp( - \kappa^2 /  2 n \sigma^2)
$
\item[$C: $] $\mathbb{P} \Big( \sum_{i=1}^n | X^\prime_{ij^*}X^\prime_{ik^*} | \geq  n m_1+\delta \Big) \leq \exp( - \delta^2 /  2n M^4) $
\item[$D: $] $\mathbb{P} \Big( \sum_{i=1}^n |\varepsilon^\prime| \geq n m_\varepsilon^\prime+ \kappa \Big)\leq \exp( -  2\kappa^2 /  n \sigma^2)$
\end{itemize}
This gives us a bound of the interaction strength of the true interaction pair:
\begin{align*}
\mathbb{P} \Big(\frac{\sum_{i} Y_i \sgn( X^\prime_{ij^*} X^\prime_{ik^*} ) } { \| \mb Y \|_1} &\geq \frac{n m_1-\delta-\kappa}{nm_1 + n m_\varepsilon +\delta+\kappa}  \Big)\\
&\geq 1- 2\exp( - \delta^2 /  2n M^4) -  2\exp( - \kappa^2 /  2 n \sigma^2)
\end{align*}
Similarly we can treat the interaction strength of the non interacting pairs:
\begin{itemize}
\item[$A: $] Here we use assumption $(C1)$. It implies
\[ r_s/2 \leq \mathbb{P}(\sgn(X^\prime_{ij^*} X^\prime_{ik^*})=\sgn(X^\prime_{ij} X^\prime_{ik})| \mb X) \leq 1-r_s/2.
\]
This we use for computing the expectation:
\begin{align*}
\mathbb{E}[X^\prime_{ij^*} X^\prime_{ik^*} \sgn(X^\prime_{ij} X^\prime_{ik}) ] &= \mathbb{E}[\mathbb{E}[ |X^\prime_{ij^*} X^\prime_{ik^*}| \sgn(X^\prime_{ij} X^\prime_{ik}X^\prime_{ij^*} X^\prime_{ik^*})]\\
&= \mathbb{E}[\mathbb{E}[ 2 |X^\prime_{ij^*} X^\prime_{ik^*}| \textbf{1}_{\{  \sgn(X^\prime_{ij} X^\prime_{ik}X^\prime_{ij^*} X^\prime_{ik^*}) = 1 \}}|\mb X]] - \mathbb{E}[ |X^\prime_{ij^*} X^\prime_{ik^*}| ]\\
&=\mathbb{E}[ \mathbb{E}[ 2 |X^\prime_{ij^*} X^\prime_{ik^*}| |\mb X]] \mathbb{P}(\sgn(X^\prime_{ij} X^\prime_{ik}X^\prime_{ij^*} X^\prime_{ik^*}) = 1 |\mb X) - \mathbb{E}[ |X^\prime_{ij^*} X^\prime_{ik^*}| ] \\
&= \mathbb{E}[ |X^\prime_{ij^*} X^\prime_{ik^*}| ] (2 \mathbb{P}(\sgn(X^\prime_{ij} X^\prime_{ik}X^\prime_{ij^*} X^\prime_{ik^*}) = 1 |\mb X ) - 1).
\end{align*}
Thus the expectation is given as:
\[
m_1( r_s-1) \leq E[X^\prime_{ij^*} X^\prime_{ik^*} \sgn(X^\prime_{ij} X^\prime_{ik}) ] \leq m_1(1- r_s).
\]
Hence, $\mathbb{P} \Big( \sum_{i=1}^n X^\prime_{ij^*} X^\prime_{ik^*} \sgn(X^\prime_{ij} X^\prime_{ik}) \geq n m_1(1-   r_s) + \delta \Big)  \geq \exp( -  2 \delta /  n M^4).
$
\end{itemize}
For the rest we use the same bounds as before (using $|X^\prime_{ij^*} X^\prime_{ik^*}+\varepsilon^\prime_i | \geq |X^\prime_{ij^*} X^\prime_{ik^*}|+\varepsilon^\prime_i$). This yields the bound
\begin{align*}
\mathbb{P} \Big( \frac{\sum_{i=1}^n Y_i \sgn(X^\prime_{ij} X^\prime_{ik})}{\| \mb Y \|_1}  &\leq \frac{n m_1(1- r_s) +\delta + \kappa}{ n m_1-\delta -\kappa} \Big) \\
&\geq 1 - \exp( -  2\delta^2 /  n M^4) - \exp( -  2\kappa^2 /  n \sigma^2).
\end{align*}
The above inequality needs to hold for the at most $p^2$ pairs that are not interactions, so that we effectively multiply the exponential terms with $p^2$. Another factor of $2$ is multiplied in for the negative sign, as the fraction also has to be bounded away from $-1$. In total we thus have:
\begin{align*}
\frac{\sum_{i} Y_i \sgn( X^\prime_{ij^*} X^\prime_{ik^*} ) } { \| \mb Y \|_1} &\notin  \Big [-\frac{n m_1-\delta-\kappa}{nm_1+ n m_\varepsilon +\delta+\kappa},\frac{n m_1-\delta-\kappa}{nm_1+ n m_\varepsilon +\delta+\kappa} \Big] \\
\frac{\sum_{i=1}^n Y_i \sgn(X^\prime_{ij} X^\prime_{ik})}{\| \mb Y \|_1}  &\in \Big [ - \frac{n m_1(1- r_s) +\delta + \kappa}{ n m_1-\delta -\kappa}, \frac{n m_1(1- r_s) +\delta + \kappa}{ n m_1-\delta -\kappa} \Big ] \,\, \forall (j,k) \neq (j^*,k^*)\\
\textrm{ with probability at least }& 1- 2 p \exp( - \delta^2 /  2n M^4) -  2 p\exp( - \kappa^2 /  2 n \sigma^2).
\end{align*}
Finally we have to set $\delta$ and $\kappa$ so that the probability is bigger than $1-\exp(-t)$.   This gives:
\[
\exp(-t) = 4 p \exp(-\delta^2/2nM^4) \textrm{ and } \exp(-t) = 4 p \exp(-\kappa^2/2n \sigma^2)
\]
This gives
\[
\delta = \sqrt{2n M^4(t+\log(4p))} \textrm{ and } \kappa = \sqrt{2n \sigma^2 (t+\log(4p))}
\]
Thus for $\alpha_{n,p}^s = \frac{\sqrt{2(t+\log(4p))(M^4+\sigma^2)}}{\sqrt{n}}$
\begin{align*}
\frac{\sum_{i} Y_i \sgn( X^\prime_{ij^*} X^\prime_{ik^*} ) } { \| \mb Y \|_1} &\notin  \Big [-\frac{ m_1-\alpha_{n,p}^s}{m_1+  m_\varepsilon +\alpha_{n,p}^s},\frac{ m_1-\alpha_{n,p}^s}{m_1+  m_\varepsilon +\alpha_{n,p}^s} \Big] \\
\frac{\sum_{i=1}^n Y_i \sgn(X^\prime_{ij} X^\prime_{ik})}{\| \mb Y \|_1}  &\in \Big [ - \frac{ m_1(1- r_s) +\alpha_{n,p}^s}{  m_1-\alpha_{n,p}^s}, \frac{ m_1(1-r_s) +\alpha_{n,p}^s}{  m_1-\alpha_{n,p}^s} \Big ] \,\, \forall (j,k) \neq (j^*,k^*)\\
\textrm{ with probability at least }& 1- \exp(-t).
\end{align*}
We now extend this result to the case of unbounded variables, that is we now assume that with high probability the variables $X_{ij}$ and $\varepsilon_i$ are bounded:
\begin{equation*}
\mathbb{P}( \, X_{ij} = X^{\prime}_{ij}, \,\, \forall \, i,j ) = 1-\exp(-t) \,\, \textrm{ and } \,\, \mathbb{P}( \varepsilon_i = \varepsilon^\prime_{ij}, \,\, \forall \, i  )=1-\exp(-t).
\end{equation*}
Here we used the sub-exponential tail behaviour of the $X_{ij}$ and $\varepsilon_i$. There exists constants $C_1^X$, $C_2^X$ such that $\pr(|X_{ij}| \geq t) \leq C_1^X \exp(-C_2^X t)$ and similarly for $\varepsilon$. Hence we set
\begin{align*}
t = C^{X}_2 M - \log(p n C^{ X}_1) \,\, &\Rightarrow M = \frac{t+\log(p n C^{X}_1)}{C^{ X}_2}\,\,\\
t =C^{\varepsilon}_2 \sigma - \log(n C^{\varepsilon}_1 ) \,\, &\Rightarrow \sigma = \frac{t+\log(n C_1^\varepsilon)}{C^{\varepsilon}_2}
\end{align*}
Thus we have
\[
\alpha^s_{n,p}=\frac{\sqrt{2(t+\log(4p)) \Big( \Big(\frac{t+\log(p n C^{ X}_1)}{C^{ X}_2}\Big)^4+ \Big(\frac{t+\log(n C_1^\varepsilon)}{C^{\varepsilon}_2}\Big)^2 \Big)}}{\sqrt{n}}
\]
\end{proof}
Next we prove \textbf{Theorem}~\ref{thm:unbiased}:
\begin{proof}
Given $\delta, \epsilon > 0$, choose $t$ such that $3\exp(-t) < \epsilon$. From (B3) we have that $\alpha^u_{n,p}(t)$ defined in Lemma~\ref{lem:unbiasedhelper} satisfies $\alpha^u_{n,p}(t) \to 0$ as $n \to \infty$. Thus from Lemma~\ref{lem:unbiasedhelper} we know that there exists $N$ such that for all $n \geq N$, with probability $1-\epsilon$ we have
\[
\frac{\log(\gamma_{j^* k^*}^g)}{\log(\gamma_{jk}^g)} < \frac{\log\{(1 + \tfrac{m_2}{m_1+m_\varepsilon})/2\}}{ \log\{(1 + \tfrac{m_1}{m_2(1-r_s)})/2\}} + \delta/2.
\]
Thus for $n \geq N$, applying Corollary~\ref{cor:generalization} we have that with probability $1-\epsilon$,
\[
C(M,L) \leq cn p^{1+\delta/2 + \frac{\log(1/2+m_2/2( (m_1  +m_\varepsilon)))}{\log(1/2+m_2(1-r_u)/(2m_1) )}},
\]
for some constant $c$.
\end{proof}
The proof of \textbf{Theorem}~\ref{thm:signdist} is very similar and is thus omitted.
\bibliographystyle{abbrvnat}
\bibliography{refs}

\end{document}